\title{Pareto Frontiers in Neural Feature Learning: \\ Data, Compute, Width, and Luck}
\author{Benjamin L. Edelman$^1$ \quad Surbhi Goel$^{2}$ \quad Sham Kakade$^{1}$ \quad Eran Malach$^3$\quad Cyril Zhang$^4$\\
\vspace{-2mm} \\
  \normalsize{$^1$Harvard University\quad $^2$University of Pennsylvania } \\ \normalsize{$^3$Hebrew University of Jerusalem \quad $^4$Microsoft Research NYC}\\
   \vspace{-2mm} \\
\normalsize{ \texttt{bedelman@g.harvard.edu, surbhig@cis.upenn.edu, sham@seas.harvard.edu,}}\\\normalsize{\texttt{ eran.malach@mail.huji.ac.il, cyrilzhang@microsoft.com} } }
\date{}
\begin{document}

\setlist[itemize]{leftmargin=2em}
\setlist[enumerate]{leftmargin=2em}

\doparttoc %
\faketableofcontents %

\maketitle

\begin{abstract}
In modern deep learning, algorithmic choices (such as width, depth, and learning rate) are known to modulate nuanced resource tradeoffs. This work investigates how these complexities necessarily arise for feature learning in the presence of computational-statistical gaps. We begin by considering offline sparse parity learning, a supervised classification problem which admits a statistical query lower bound for gradient-based training of a multilayer perceptron. This lower bound can be interpreted as a \emph{multi-resource tradeoff frontier}: 
successful learning can only occur if one is sufficiently rich (large model), knowledgeable (large dataset), patient (many training iterations), or lucky (many random guesses).
We show, theoretically and experimentally, that sparse initialization and increasing network width yield significant improvements in sample efficiency in this setting. Here, width plays the role of parallel search: it amplifies the probability of finding ``lottery ticket'' neurons, which learn sparse features more sample-efficiently.
Finally, we show that the synthetic sparse parity task can be useful as a proxy for real problems requiring axis-aligned feature learning. We demonstrate improved sample efficiency on tabular classification benchmarks by using wide, sparsely-initialized MLP models; these networks sometimes outperform tuned random forests.
\end{abstract}

\section{Introduction}
\label{sec:intro}

Algorithm design in deep learning can appear to be more like ``hacking'' than an engineering practice. Numerous architectural choices and training heuristics can affect various performance criteria and resource costs in unpredictable ways. Moreover, it is understood that these multifarious hyperparameters all interact with each other; as a result, the task of finding the ``best'' deep learning algorithm for a particular scenario is foremost empirically-driven. When this delicate balance of considerations is achieved (i.e. when deep learning works well), learning is enabled by phenomena which cannot be explained by statistics or optimization in isolation. It is natural to ask: \emph{is this heterogeneity of methods and mechanisms necessary?}

\begin{figure}
    \centering
    \includegraphics[width=\linewidth]{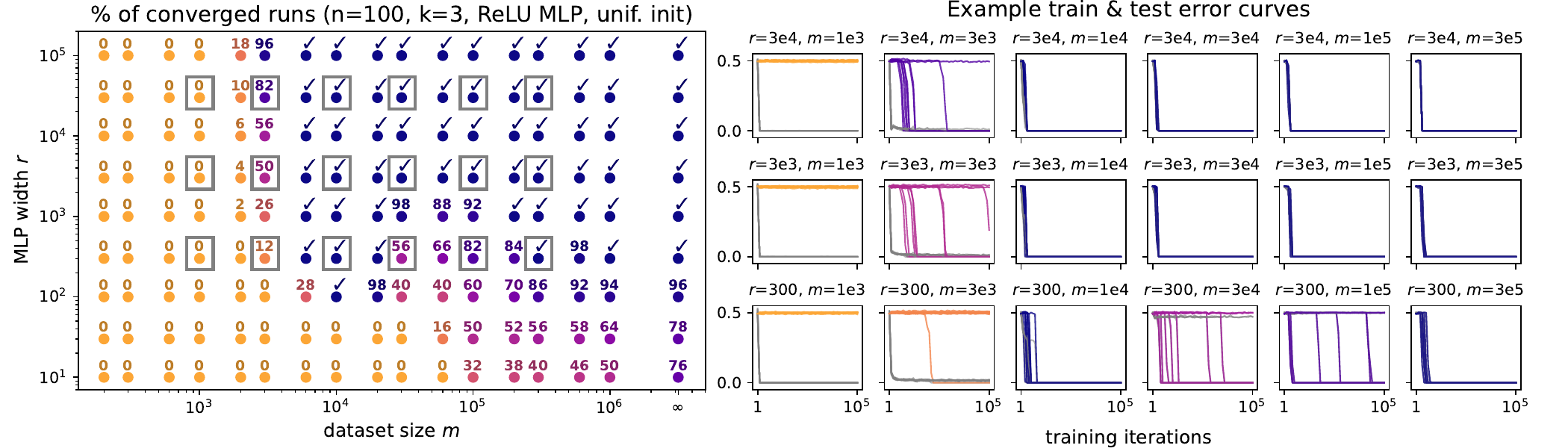}
    \caption{\textbf{``More data or larger network?''} Effects of jointly scaling the dataset and model sizes, for $2$-layer MLPs trained to learn sparse parities. \emph{Left:} A success frontier, where the computational challenge of feature learning can be surmounted by increasing sample size \emph{or} model size. Each point shows the percentage of successful training runs (out of 50 trials) for each (dataset size $m$, width $r$); $(\checkmark)$ denotes 100\% successful trials. \emph{Right:} 10 example training curves (\textcolor{gray}{gray = training error}; \textcolor{violet}{colored = test error}) for each of the boxed $(m,r)$ pairs. See main text and Section~\ref{sec:experiments} for further details.
    }
    \label{fig:main-overview}
\end{figure}

This work studies a single synthetic binary classification task in which the above complications are recognizable, and, in fact, \emph{provable}. This is the problem of offline (i.e. small-sample) sparse parity learning: identify a $k$-way multiplicative interaction between $n$ Boolean variables, given $m$ random examples. We begin by interpreting the standard statistical query lower bound for this problem as a \emph{multi-resource tradeoff frontier} for deep learning, balancing between the heterogeneous resources of dataset size, network size, number of iterations, and success probability. We show that in different regimes of simultaneous resource constraints (data, parallel computation, sequential computation, and random trials), the standard algorithmic choices in deep learning can succeed by diverse and entangled mechanisms.
Specifically, our contributions are as follows:

\paragraph{Multi-resource lower and upper bounds.} We formulate a ``data $\times$ width $\times$ time $\times$ luck'' lower bound for offline sparse parity learning with feedforward neural nets (Theorem~\ref{thm:lower_bound}). This barrier arises from the classic statistical query (SQ) lower bound for this problem. We show that under different resource constraints, the tractability of learning can be ``bought'' with varied mixtures of these resources. In particular, in Theorems~\ref{thm:oversparse_upper_bound} and \ref{thm:undersparse} we prove that by tuning the width and initialization scheme, we can populate this frontier with a spectrum of successful models ranging from narrow networks requiring many training samples to sample-efficient networks requiring many neurons, as summarized by the following informal theorem statement:

\begin{inftheorem}
Consider the problem of learning $(n,k)$-parities from $m$ i.i.d. samples with a $2$-layer width-$r$ ReLU MLP, whose first-layer neurons are initialized with sparsity $s$. After $O_n(1)$ steps of gradient descent, the $k$ relevant coordinates are identified with probability 0.99 when \emph{(1)} $s > \Omega(k)$, $r = \Theta((n/s)^k)$ and $m = \Theta(n^2(s/k)^{k-1})$, and when \emph{(2)} $s< k$, $r = \Theta((n/k)^s)$ and $m = \Theta((n/k)^{k-s-1})$.
\end{inftheorem}

Intuitively, this analysis reveals a feature learning mechanism by which overparameterization (i.e. large network width) plays a role of parallel search over randomized subnetworks. Each individual hidden-layer neuron has its own sample complexity for identifying the relevant coordinates, based on its \emph{Fourier gap} \citep{barak2022hidden} at initialization. Trained with parallel gradient updates, the full network implicitly acts as an ensemble model over these neurons, whose overall sample complexity is determined by the ``winning lottery tickets'' \citep{frankle2018lottery} (i.e. the lucky neurons initialized to have the lowest sample complexity). This departs significantly from the \emph{neural tangent kernel} \citep{jacot2018neural} regime of function approximation with wide networks, in which overparameterization \emph{removes} data-dependent feature selection (rather than parallelizing it across subnetworks).

\paragraph{Empirical study of neural nets' statistical thresholds for sparse parity learning.}
We corroborate the theoretical analysis with a systematic empirical study of offline sparse parity learning using SGD on MLPs, demonstrating some of the (perhaps) counterintuitive effects of width, data, and initialization. For example, Figure~\ref{fig:main-overview} highlights our empirical investigation into the interactions between data, width, and success probability.  The left figure shows the fractions of successful training runs as a function of dataset size (x-axis) and width (y-axis). Roughly, we see a ``success frontier'', where having a larger width can be traded off with smaller sample sizes. The right figure depicts some training curves (for various widths and sample sizes). Grokking \citep{power2022grokking,liu2022omnigrok} (discontinuous and delayed generalization behavior induced by optimization dynamics) is evident in some of these figures.

\paragraph{``Parity2real'' transfer of algorithmic improvements.} 
It is often observed that deep learning methods underperform tree-based methods (e.g. random forests) on tabular datasets, particularly those where the target function depends on a few of the input features in a potentially non-smooth manner; see \cite{grinsztajn2022why} for a recent discussion. Motivated by our findings in the synthetic parity setting (and the observation that the sparse parity task possesses a number of the same properties of these problematic real-world datasets), we then turn to experimentally determine the extent to which our findings also hold for real tabular data. We evaluate MLPs of various depths and initialization sparsities on 16 tabular classification tasks, which were standardized by \cite{grinsztajn2022why} to compare neural vs. tree-based methods. Figure~\ref{fig:tabular-main} shows that wider networks and sparse initialization yield improved performance, as in the parity setting. In some cases, our MLPs outperform tuned random forests.

\subsection{Related work}

In the nascent empirical science of large-scale deep learning, \emph{scaling laws}
\citep{kaplan2020scaling,hoffmann2022training} have been shown to extrapolate model performance with remarkable consistency, revealing flexible tradeoffs and Pareto frontiers between the heterogeneous resources of data and computation. The present work reveals that in the simple synthetic setting of parity learning, the same intricacies can be studied theoretically and experimentally. In particular, viewing \emph{model size} $\times$ \emph{training iterations} $\times$ \emph{random restarts} as a single \emph{``total FLOPs''} resource, our study explains why data $\times$ compute can be a necessary and sufficient resource, through the lens of SQ complexity.

\paragraph{Analyses of deep feature learning.} Formally characterizing the representation learning mechanisms of neural networks is a core research program of deep learning theory. Many recent works have analyzed gradient-based feature learning \citep{wei2019regularization,barak2022hidden,zhenmei2022theoretical,abbe2022merged,damian2022neural, telgarsky2022feature}, escaping the ``lazy'' neural tangent kernel (NTK) regime \citep{jacot2018neural, chizat2019lazy}, in which features are fixed at initialization.

\paragraph{Learning parities with neural networks.}
The XOR function has been studied as an elementary challenging example since the dawn of artificial neural networks \citep{minsky1969introduction}, and has been revisited at various times: e.g. neural cryptography \citep{rosen2002mutual}; learning interactions via hints in the input distribution \citep{daniely2020learning}; a hard case for self-attention architectures \citep{hahn2020theoretical}.
Closest to this work, \citet{barak2022hidden} find that in the case of \emph{online} (infinite-data) parity learning, SGD provides a feature learning signal for a \emph{single neuron}, and can thus converge at a near-optimal computational rate for non-overparameterized networks. They note computational-statistical tradeoffs and grokking in the offline setting, which we address systematically. \citet{merrill2023tale} examine the same problem setting empirically, investigating a mechanism of competing sparse and dense sub-networks. \citet{abbe2023sgd} provide evidence that the time complexity required by an MLP to learn an arbitrary sparse Boolean function is governed by the largest ``leap'' in the \textit{staircase} of its monomials, each of which is a sparse parity. \cite{telgarsky2022feature} gives a margin-based analysis of gradient flow on a two-layer neural network that achieves improved sample complexity ($\tilde{O}(n/\epsilon)$) for the 2-sparse parity problem, at the cost of exponential width.

\paragraph{Neural nets and axis-aligned tabular data. } Decision tree ensembles such as random forests \citep{breiman2001random} and XGBoost \citep{chen2016xgboost} remain more popular among practitioners than neural networks on tabular data \citep{kaggle2021state}, despite many recent attempts to design specialized deep learning methods \citep{borisov2022deep}. Some of these employ sparse networks \citep{yang2022locally, lutz2022sparse} similar to those considered in our theory and experiments.

We refer the reader to Appendix~\ref{sec:app-related} for additional related work.
\section{Background}
\label{sec:background}

\subsection{Parities, and parity learning algorithms}
\label{subsec:parity-background}

A light bulb is controlled by $k$ out of $n$ binary switches; each of the $k$ influential switches can toggle the light bulb's state from any configuration. The task in parity learning is to identify the subset of $k$ important switches, given access to $m$ i.i.d. uniform samples of the state of all $n$ switches. Formally, for any $1 \leq k \leq n$ and $S \subseteq [n]$ such that $|S| = k$, the parity function $\chi_S : \{\pm1\}^n \rightarrow \pm 1$ is defined as $\chi_S(x_{1:n}) := \prod_{i\in S} x_i$.\footnote{Equivalently, parity can be represented as a function of a bit string $b \in \{0, 1\}^n$, which computes the XOR of the influential subset of indices $S$: $\chi_S(b_{1:n}) := \oplus_{i\in S} b_i$.} The $(n,k)$-parity learning problem is to identify $S$ from samples $(x \sim \mathrm{Unif}(\{\pm1\}^n, y = \chi_S(x)) )$; i.e. output a classifier with $100$\% accuracy on this distribution, without prior knowledge of $S$.

This problem has a rich history in theoretical computer science, information theory, and cryptography. There are several pertinent ways to think about the fundamental role of parity learning:
\begin{enumerate}%
    \item[(i)] \textbf{Monomial basis elements:} A $k$-sparse parity is a degree-$k$ monomial, and thus the analogue of a Fourier coefficient with ``frequency'' $k$ in the harmonic analysis of Boolean functions \citep{o2014analysis}. Parities form an important basis for polynomial learning algorithms (e.g. \citet{andoni2014learning}).
    \item[(ii)] \textbf{Computational hardness:} There is a widely conjectured \emph{computational-statistical gap} for this problem \citep{applebaum2009fast,applebaum2010public}, which has been proven in restricted models such as SQ \citep{kearns1998efficient} and streaming \citep{kol2017time} algorithms. The statistical limit is $\Theta( \log( \text{number of possibilities for } S )) = \Theta(k \log n)$ samples, but the amount of computation needed (for an algorithm that can tolerate an $O(1)$ fraction of noise) is believed to scale as $\Omega(n^{c k})$ for some constant $c > 0$ -- i.e., there is no significant improvement over trying all subsets.
    \item[(iii)] \textbf{Feature learning:} This setting captures the learning of a concept which depends jointly on multiple attributes of the inputs, where the lower-order interactions (i.e. correlations with degree-$k' < k$ parities) give no information about $S$. Intuitively, samples from the sparse parity distribution look like random noise until the learner has identified the sparse interaction. This notion of feature learning complexity is also captured by the more general \emph{information exponent} \citep{arous2021online}.
\end{enumerate}

\subsection{Notation for neural networks}

For single-output regression, a 2-layer multi-layer perceptron (MLP) is a function class, parameterized by a matrix $W \in \mathbb{R}^{r \times n}$, vectors $v,b \in \mathbb{R}^{ r}$ and scalar $\beta \in \reals$, defined by
$$\widehat y : x \mapsto v^\top \sigma(Wx+b)+\beta,$$
where $\sigma$ is an activation function, usually a scalar function applied identically to each input. For rows $w_i$ of $W$, the intermediate variables $\sigma(w_i^\top x+b_i)$ are thought of as \emph{hidden layer} activations or \emph{neurons}. The number of parallel neurons is often called the \emph{width}.
\section{Theory}
\label{sec:theory}

In this section we theoretically study the interactions between data, width, time, and luck on the sparse parity problem. We begin by rehashing Statistical Query (SQ) lower bounds for parity learning in the context of gradient-based optimization, showing that without sufficient resources sparse parities cannot be learned. Then, we prove upper bounds showing that parity learning is possible with correctly scaling either width (keeping sample size small), sample size (keeping width small), or a mixture of the two.

\paragraph{Statistical query algorithms.}
A seminal work by \cite{kearns1998efficient} introduced the statistical query (SQ) algorithm framework, which provides a means of analyzing noise-tolerant algorithms. Unlike traditional learning algorithms, SQ algorithms lack access to individual examples but can instead make queries to an \emph{SQ oracle}, which responds with noisy estimates of the queries over the population. Notably, many common learning algorithms, including noisy variants of gradient descent, can be implemented within this framework.
While SQ learning offers robust guarantees for learning in the presence of noise, there exist certain problems that are learnable from examples but not efficiently learnable from statistical queries \citep{blum1994weakly,blum2003noise}. One notable example of such a problem is the parity learning problem, which possesses an SQ lower bound. This lower bound can be leveraged to demonstrate the computational hardness of learning parities with gradient-based algorithms (e.g., \cite{shalev2017failures}).

\subsection{Lower bound: a multi-resource hardness frontier} \label{subsec:pareto}
 We show a version of the SQ lower bound in this section. Assume we optimize some model $h_\theta$,
where $\theta \in \reals^r$ (i.e., $r$ parameters).
Let $\ell$ be some loss function satisfying: $\ell'(\hat{y}, y) = -y + \ell_0(\hat{y})$;
for example, one can choose $\ell_2(\hat{y}, y) = \frac{1}{2} (y-\hat{y})^2$.
Fix some hypothesis $h$, target $f$, a sample $\cs \subseteq \{\pm 1\}^n$ and distribution $\cd$ over $\{\pm 1\}^n$. We denote the empirical loss by $L_\cs(h,f) = \frac{1}{\abs{\cs}} \sum_{ \x \in \cs} \ell(h(\x), f(\x))$ and the population loss by $L_\cd(h,f) := \E_{\x \sim \cd} \left[\ell(h(\x),f(\x))\right]$.
We consider SGD updates of the form:
\[
\theta_{t+1} = \theta_{t} - \eta_t \left(\nabla_\theta \left(L_{\cs_t}(h_{\theta_t},f) + R(\theta_t)\right) + \xi_t\right),
\]
for some sample $\cs_t$, step size $\eta_t$,
regularizer $R(\cdot)$,
and adversarial noise $\xi_t \in [-\tau, \tau]^r$. For normalization, we assume that $\norm{\nabla h_\theta(\x)}_\infty \le 1$ for all $\theta$ and $\x \in \cx$.

Denote by $\mathbf{0}$ the constant function, mapping all inputs to $0$. Let $\theta^\star_t$ be the following trajectory of SGD that is independent
of the target. Define $\theta^\star_t$ recursively
s.t. $\theta^\star_0 = \theta_0$ and
\[
    \theta^\star_{t+1} = \theta^\star_t -
    \eta_t \nabla_\theta \left( L_\cd(h_{\theta^\star_t},\mathbf{0}) + R(\theta^\star_t)\right)
\]

\begin{assumption}[Bounded error for gradient estimator]
    \label{asm:grad_est}
    For all $t$, suppose that
    \[\norm{\nabla L_{\cs_t}(h_{\theta^\star_t}, \mathbf{0}) - \nabla L_{\cd}(h_{\theta^\star_t}, \mathbf{0})}_\infty \le \tau/2.\]
\end{assumption}

\textit{Remark:}
If $m = \tilde{O}(1/\tau^2)$,\footnote{We use the notation $\tilde{O}$ to hide logarithmic factors.}
Assumption \ref{asm:grad_est} is satisfied w.h.p. for: 1)  $\cs_t \sim \cd^m$  (``online'' SGD),
2) $\cs_t = \cs$ for some $\cs \sim \cd^m$ (``offline'' GD)
and 3) $\cs_t$ is a batch of size $m$
 sampled uniformly at random from $\cs$,
 where $\cs \sim \cd^M$
 and $M \ge m$ (``offline'' SGD).
 Indeed, in all these cases we have $\E_{\cs_t}\left[\nabla L_{\cs_t}(h_{\theta^\star_t}, \mathbf{0})\right] = \nabla L_\cd(h_{\theta^\star_t},\mathbf{0})$, and the above follows from standard concentration bounds.

The lower bound in this section uses standard statistical query arguments to show that gradient-based algorithms, \emph{without sufficient resources}, will fail to learn $(n,k)$-parities. We therefore start by stating the four types of resources that impact learnability:
\begin{itemize}
\setlength\itemsep{0em}
    \item The \emph{number of parameters} $r$ -- equivalently, the number of parallel ``queries''.
    \item The \emph{number of gradient updates} $T$ -- i.e. the serial running time of the training algorithm.
    \item The \emph{gradient precision} $\tau$ -- i.e. how close the empirical gradient is to the population gradient. As discussed above, $\tilde O(1/\tau^2)$ samples suffice to obtain such an estimator.
    \item The probability of success $\delta$.
\end{itemize}
The following theorem ties these four resources together, showing that without a sufficient allocation of these resources, gradient descent will fail to learn:

\begin{proposition}
    \label{thm:lower_bound}
    Assume that $\theta_0$ is randomly drawn from some distribution. For every $r,T,\delta,\tau > 0$, if $\frac{rT}{\tau^2 \delta} \le \frac{1}{2}\binom{n}{k}$, then there exists some $(n,k)$-parity s.t. with probability at least $1-\delta$ over the choice of $\theta_0$, the first $T$ iterates of SGD are statistically independent of the target function.
\end{proposition}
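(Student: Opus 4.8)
The plan is to show that over a random parity target, the SGD trajectory $\theta_t$ provably coincides with the target-independent trajectory $\theta^\star_t$ for the first $T$ steps, with high probability. The starting observation is that the loss derivative splits as $\ell'(\hat y, y) = -y + \ell_0(\hat y)$, so the population gradient of $L_\cd(h_\theta, f)$ decomposes into a target-independent piece (the $\ell_0(\hat y)$ term, which is exactly $\nabla L_\cd(h_\theta,\mathbf 0)$ up to the regularizer) plus a "correlation" term $-\E_{\x\sim\cd}[f(\x)\nabla h_\theta(\x)]$. When $f = \chi_S$ is a uniform-random $k$-parity and $\cd = \mathrm{Unif}(\{\pm1\}^n)$, this correlation term is the vector of degree-$k$ Fourier coefficients $\widehat{\partial_j h_\theta}(S)$ of the coordinate functions of $\nabla h_\theta$. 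The key quantitative fact is Parseval: for each coordinate $j$, $\sum_{|S|=k} \widehat{\partial_j h_{\theta}}(S)^2 \le \E_\x[(\partial_j h_\theta(\x))^2] \le 1$ by the normalization $\norm{\nabla h_\theta(\x)}_\infty \le 1$. Hence among the $\binom nk$ possible sets $S$, at most a $1/\gamma^2$ fraction can have $|\widehat{\partial_j h_\theta}(S)| > \gamma$, for any threshold $\gamma$.

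First I would set up the inductive coupling. Condition on $\theta_0$ and run the deterministic "reference" trajectory $\theta^\star_t$, which depends only on $\theta_0$ (not on $S$). For a fixed step $t$, the true update uses $\nabla L_{\cs_t}(h_{\theta_t},\chi_S) + \nabla R(\theta_t) + \xi_t$. Suppose inductively $\theta_t = \theta^\star_t$. Then the true gradient equals $\nabla L_\cd(h_{\theta^\star_t},\mathbf 0) + \nabla R(\theta^\star_t)$ — i.e., it equals the reference update — plus three error terms: (a) the sampling error $\nabla L_{\cs_t}(h_{\theta^\star_t},\mathbf 0) - \nabla L_\cd(h_{\theta^\star_t},\mathbf 0)$, which Assumption \ref{asm:grad_est} bounds by $\tau/2$ in $\ell_\infty$; (b) the adversarial noise $\xi_t \in [-\tau,\tau]^r$; and (c) the correlation term $-\E_\x[\chi_S(\x)\nabla h_{\theta^\star_t}(\x)]$, whose $j$-th entry is $-\widehat{\partial_j h_{\theta^\star_t}}(S)$. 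The crucial point is that a standard SQ argument lets us fold term (c) into term (b): if $|\widehat{\partial_j h_{\theta^\star_t}}(S)| \le \tau/2$ for every coordinate $j$ and every step $t < T$, then (a)+(c) together have $\ell_\infty$ norm at most $\tau$, so they constitute an \emph{admissible} adversarial perturbation, and the adversary can simply choose $\xi_t$ to cancel (a)+(c) exactly. Then $\theta_{t+1} = \theta^\star_{t+1}$, closing the induction; in particular $\theta_1,\dots,\theta_T$ are all equal to the $S$-independent sequence $\theta^\star_1,\dots,\theta^\star_T$, hence independent of $S$.

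It remains to bound the failure probability over the random choice of $S$ (equivalently, to show such an $S$ exists). For each of the $T$ steps and each of the $r$ coordinates $j \in [r]$, by Parseval the number of sets $S$ with $|\widehat{\partial_j h_{\theta^\star_t}}(S)| > \tau/2$ is at most $\binom nk /(\tau/2)^2 = 4\binom nk/\tau^2$. Union-bounding over the $rT$ pairs $(t,j)$, the number of "bad" sets $S$ — those for which the coupling can fail — is at most $4rT/\tau^2 \cdot \binom nk$. Wait: the statement wants this with probability $1-\delta$ over $\theta_0$ for a \emph{fixed} bad-for-everyone $S$, so the cleanest route is a counting/averaging argument: the expected (over $\theta_0$) number of bad $S$ is at most $4rT\binom nk/\tau^2$; a Markov-type argument then shows that if $4rT\binom nk/\tau^2 \le \delta \binom nk$, i.e. $rT/\tau^2 \le \delta/4$ — one should instead phrase it as: the fraction of pairs $(S,\theta_0)$ that are "bad" is at most $\tfrac{4rT}{\tau^2\binom nk}$, so by averaging there exists $S$ for which the bad-$\theta_0$-probability is at most that same quantity, which is $\le \delta$ exactly when $\tfrac{rT}{\tau^2\delta}\le\tfrac14\binom nk$. (The constant $\tfrac12$ in the proposition versus $\tfrac14$ here is absorbed by being slightly more careful, e.g. using $|\widehat{\cdot}|>\tau/2$ vs $\ge$, or by noting $\E[(\partial_j h)^2]\le 1$ can be sharpened; I would not belabor this.)

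The main obstacle is the bookkeeping in step three: making the "exists $S$" quantifier interact correctly with the "probability over $\theta_0$" quantifier. One must be careful that the set of bad $S$ depends on $\theta_0$ (through the reference trajectory $\theta^\star_t$, which is a deterministic function of $\theta_0$), so the clean statement is about the joint distribution of $(S,\theta_0)$ with $S$ uniform and independent of $\theta_0$; Fubini/averaging then extracts a single good $S$. The rest — the Fourier decomposition of the gradient, the Parseval bound, and the induction — is routine once the splitting $\ell' = -y + \ell_0$ is exploited and the adversarial-noise budget $\tau$ is used to swallow both the finite-sample error and the small Fourier correlations.
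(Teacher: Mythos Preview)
Your proposal is correct and follows essentially the same approach as the paper: decompose the gradient via $\ell'(\hat y,y) = -y + \ell_0(\hat y)$, bound the parity-correlation term coordinatewise using Parseval along the target-independent trajectory $\theta^\star_t$, apply Fubini/averaging over $(S,\theta_0)$ to extract a single $S$, and let the adversarial noise $\xi_t$ absorb the residual so that the SGD iterates coincide with $\theta^\star_t$. The only cosmetic difference is that the paper bounds $\E_S[\max_{i,t}(\cdot)^2]$ by $\sum_{i,t}\E_S[(\cdot)^2]\le rT/\binom{n}{k}$ and then applies Markov, whereas you count the number of $S$ exceeding a threshold; these are equivalent and account for the $1/2$ versus $1/4$ constant you noted.
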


The proof follows standard SQ lower bound arguments \citep{kearns1998efficient, feldman2008evolvability}, and for completeness is given in Appendix \ref{app-sec:proofs}. The core idea of the proof is the observation that any gradient step has very little correlation (roughly $n^{-k}$) with most sparse-parity functions, and so there exists a parity function that has small correlation with all steps. In this case, the noise can force the gradient iterates to follow the trajectory $\theta^\star_1, \dots, \theta^\star_T$, which is independent of the true target. We note that, while the focus of this paper is on sparse parities, similar analysis applies for a broader class of functions that are characterized by large SQ dimension (see \cite{blum1994weakly}).

Observe that there are various ways for an algorithm to escape the lower bound of Theorem \ref{thm:lower_bound}. We can scale a single resource with $\binom{n}{k}$, keeping the rest small, e.g. by training a network of size $n^{k}$, or using a sample size of size $n^{k}$. Crucially, we note the possibility of \emph{interpolating} between these extremes: one can spread this homogenized ``cost'' across multiple resources, by (e.g.) training a network of size $n^{k/2}$ using $n^{k/2}$ samples. In the next section, we show how neural networks can be tailored to solve the task in these interpolated resource scaling regimes.

\subsection{Upper bounds: many ways to trade off the terms}

As a warmup, we discuss some simple SQ algorithms that succeed in learning parities by properly scaling the different resources. First, consider deterministic exhaustive search, which computes the error of all possible $(n,k)$-parities, choosing the one with smallest error. This can be done with constant $\tau$ (thus, sample complexity logarithmic in $n$), but takes $T=\Theta(n^k)$ time. Since querying different parities can be done in parallel, we can reduce the number of steps $T$ by increasing the number of parallel queries $r$. Alternatively, it is possible to query only a randomly selected subset of parities, which reduces the overall number of queries but increases the probability of failure.

The above algorithms give a rough understanding of the frontier of algorithms that succeed at learning parities. However, at first glance, they do not seem to reflect algorithms used in deep learning, and are specialized to the parity problem. In this section, we will explore the ability of neural networks to achieve similar tradeoffs between the different resources. In particular, we focus on the interaction between sample complexity and network size, establishing learning guarantees with \emph{interpolatable mixtures} of these resources.

Before introducing our main positive theoretical results, we discuss some prior theoretical results on learning with neural networks, and their limitations in the context of learning parities. Positive results on learning with neural networks can generally be classified into two categories: those that reduce the problem to convex learning of linear predictors over a predefined set of features (e.g. the NTK), and those that involve neural networks departing from the kernel regime by modifying the fixed features of the initialization, known as the feature learning regime.

\paragraph{Kernel regime.} When neural networks trained with gradient descent stay close to their initial weights, optimization behaves like kernel regression on the neural tangent kernel \citep{jacot2018neural,du2018gradient}:
the resulting function is approximately of the form $\x \mapsto \inner{\psi(\x), \bw}$, where $\psi$ is a \emph{data-independent} infinite-dimensional embedding of the input, and $\bw$ is some weighting of the features. However, it has been shown that the NTK (more generally, any fixed kernel) cannot achieve low $\ell_2$ error on the $(n,k)$-parity problem, unless the sample complexity grows as $\Omega(n^k)$ (see \cite{kamath2020approximate}). Thus, no matter how we scale the network size or training time, neural networks trained in the NTK regime cannot learn parities with low sample complexity, and thus do not enjoy the flexibility of resource allocation discussed above.

\paragraph{Feature learning regime.} 
Due to the limitation of neural networks trained in the kernel regime, some works study learning in the ``rich'' regime, quantifying how hidden-layer features adapt to the data. Among these, \cite{barak2022hidden} analyze a feature learning mechanism requiring exceptionally small network width: SGD on 2-layer MLPs can solve the \emph{online} sparse parity learning problem with network width \emph{independent} of $n$ (dependent only on $k$), at the expense of requiring a suboptimal ($\approx n^{k}$) number of examples. This mechanism is \emph{Fourier gap amplification}, by which SGD through a \emph{single neuron} $\sigma(w^\top x)$ can perform feature selection in this setting, via exploiting a small gap between the relevant and irrelevant coordinates in the population gradient.
The proof of Theorem \ref{thm:oversparse_upper_bound} below relies on a similar analysis, extended to the offline regime (i.e., multiple passes over a dataset of limited size).

\subsubsection{``Data $\!\times\!$ model size'' success frontier for sparsely-initialized MLPs}
\label{subsubsec:sparse-init-theory}

In this section, we analyze a 2-layer MLP with ReLU ($\sigma(x) = \max(0,x)$) activation, trained with batch (``offline'') gradient-descent over a sample $\cs$ with $\ell_2$-regularized updates\footnote{For simplicity, we do not assume adversarial noise in the gradients as in the lower bound. However, similar results can be shown under bounded noise.}:
\[
\theta^{(t+1)} = (1-\lambda^{(t)})\theta^{(t)} - \eta^{(t)} \nabla L_\cs(h_{\theta^{(t)}})
\]
We allow learning rates $\eta$, and weight decay coefficients $\lambda$ to differ between layers and iterations. For simplicity, we analyze the case where no additional noise is added to each update; however, we believe that similar results can be obtained in the noisy case (e.g., using the techniques in \cite{feldman2017statistical}). Finally, we focus on ReLU networks with $s$-sparse initialization of the first layer: every weight $\bw_i$ has $s$ randomly chosen coordinates set to $1$, and the rest set to $0$. Note that after initialization, all of the network's weights are allowed to move, so sparsity is not necessarily preserved during training.

\paragraph{Over-sparse initialization ($s>\Omega(k)$).} The following theorem demonstrates a ``data $\!\times\!$ width'' success frontier when learning $(n,k)$-parities with sparsely initialized ReLU MLPs at sparsity levels $s > \Omega(k)$.
\begin{theorem}
    \label{thm:oversparse_upper_bound}
    Let $k$ be an even integer, and $\epsilon \in (0,1/2)$. Assume that $n \geq \Omega(1/\epsilon^2)$. For constants $c_1, c_2, c_3, c_4$ depending only on $k$, choose the following: (1) \textit{sparsity level}: $s \ge c_1/\epsilon^2$, for some odd $s$, (2) width of the network: $r =  c_2(n/s)^k$, (3) sample size: $m \ge c_3 (s/k)^{k-1}n^2 \log n$, and (4) number of iterations: $T \ge c_4/\epsilon^2$.
    Then, for every $(n,k)$-parity distribution $\cd$, with probability at least $0.99$ over the random samples and initialization, gradient descent with these parameter settings returns a function $h_T$ s.t.
    $L_\cd(h_T) \le \epsilon$.
\end{theorem}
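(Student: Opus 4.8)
The plan is to turn the ``lottery ticket'' picture of the introduction into a proof: the width $r=c_2(n/s)^k$ is chosen so that a constant number of hidden neurons happen to be initialized with their $s$-element support $A_i$ containing all of $S$; each such \emph{winning ticket} identifies $S$ in $O_k(1)$ gradient steps via a sparse-initialization version of the Fourier-gap mechanism of \citet{barak2022hidden}; and a convex argument then fits $\chi_S$ with the second layer. First I would handle the initialization combinatorics. Since the supports are independent and $\prob{S\subseteq A_i}=\binom{s}{k}/\binom{n}{k}=\Theta_k((s/n)^k)$, with $r=c_2(n/s)^k$ the number of winning tickets concentrates around $\Theta_k(1)$, and taking $c_2$ a large enough function of $k$ gives $\Theta(k)$ of them with probability $\ge 0.995$. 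Crucially, the first-layer biases do not move in expectation during feature selection (their population gradient is the level-$k$ Fourier coefficient of $\sigma'$, which vanishes because $k$ is even), so a coupon-collector estimate shows a random set of $\Theta(k)$ winning-ticket biases is spread densely enough over the relevant window that the associated one-dimensional threshold features can represent any function of $\sum_{j\in S}x_j$ --- in particular $\chi_S$.

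Next comes the Fourier gap itself. For a neuron with support $A$ and zero bias, $\sigma'(w^\top x)$ is $\MAJ_{|A|}$ up to the $\pm1$ pattern of $w$ (here one uses that $s$ is odd, to rule out ties). A short computation, using $\ell'(\hat y,y)=-y+\ell_0(\hat y)$ and $\hat y\approx 0$ at initialization, shows $\partial_{w_{ij}}L_\cd$ equals, up to sign and the small second-layer factor $v_i$, the quantity $\widehat{\MAJ_s}(S\setminus j)=\Theta_k(s^{-(k-1)/2})$ when $j\in S$ and $i$ is a winning ticket, $\widehat{\MAJ_s}(S\cup j)=\Theta_k(s^{-(k+1)/2})$ when $j\in A_i\setminus S$, and exactly $0$ when $j\notin A_i$; the same equalities (with a zero for the genuinely-irrelevant coordinates) must also be checked for ``near-miss'' neurons, whose support omits one element of $S$. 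This factor-$s$ separation is the gap $\gamma=\Theta_k((k/s)^{(k-1)/2})$. Passing to finite samples, a Bernstein bound with a union over the $\le nrT$ relevant (neuron, coordinate, step) triples puts the empirical gradient within $O(\gamma)$ of the population gradient everywhere once $m\gtrsim\gamma^{-2}\,\mathrm{polylog}$; tracking how the resulting tiny out-of-support weights are allowed to accumulate (below) sharpens this to $m\ge c_3(s/k)^{k-1}n^2\log n$, the extra factor over $\gamma^{-2}$ being $n\cdot(1/\epsilon^2)\le n^2$ via the hypothesis $n\ge\Omega(1/\epsilon^2)$.

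Then I would take one large first-layer step, with a learning rate big enough to push the relevant weights to magnitude $\Theta(1)$ in one shot: for each winning ticket, $w_{ij}$ lands at a common value $\Theta(1)$ for $j\in S$; at $\Theta(1/s)$ for $j\in A_i\setminus S$, whose aggregate contribution to the preactivation is $O(1/\sqrt s)\le\epsilon$ --- this is where $s\ge c_1/\epsilon^2$ is needed; and at $O(\sqrt{s^{k-1}\log(nrT)/m})$ for $j\notin A_i$, whose aggregate variance over the $\le n$ such coordinates is $\le\epsilon^2$ exactly when $m\gtrsim s^{k-1}n\log(n)/\epsilon^2$. A symmetric estimate keeps every non-winning neuron at scale $o(\epsilon)$ and hence negligible. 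After $O_k(1)$ such steps each winning ticket therefore computes, up to $L_2(\cd)$ error $\epsilon$, a fixed feature $\sigma(\Theta(1)\sum_{j\in S}x_j+b_i)$ depending on $x$ only through $S$. Freezing (or lightly regularizing) the first layer, the remaining $T-O_k(1)=\Omega(1/\epsilon^2)$ iterations are (sub)gradient descent on the convex least-squares objective in the second layer and $\beta$; by the first step $\chi_S$ lies $O(\epsilon)$-close to the span, so $L_\cd(h_T)\le\epsilon$, with the sample size above also covering the much milder ($\mathrm{poly}(2^k)/\epsilon^2$) generalization requirement of this phase.

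I expect the crux to be the combination of the second and third steps: establishing the sparse-neuron Fourier gap with exactly the right $s,k$ dependence and then pushing it through the finite-sample, multi-step dynamics --- in particular arguing that the $\Theta(n)$ first-layer weights on out-of-support coordinates never accumulate past $O(\epsilon)$ over all $T$ iterations, and correctly accounting for near-miss neurons, since these two effects pin down both the sample exponent and the width exponent. The even-$k$ hypothesis is genuinely used ($\widehat{\MAJ}$ vanishes on even levels, which both makes the biases inert and gives the relevant coordinates a nonzero signal); the odd-$k$ case would need a separate argument in the spirit of \citet{barak2022hidden}.
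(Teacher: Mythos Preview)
Your high-level plan is exactly the paper's: compute the probability that a neuron's $s$-support covers $S$, use the Majority Fourier gap on good neurons to select $S$ in one first-layer step, then run convex second-layer optimization. Two concrete mechanisms that make the paper's proof go through are missing from your write-up, and without them the claims you make do not hold.

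First, the assertion that after one large step $w_{ij}$ lands at $\Theta(1/s)$ for $j\in A_i\setminus S$ is false under a plain gradient step: those coordinates start at $1$ and their gradient is only $c_{k,s}=\Theta(C_{k,s}\cdot k/s)$, so they move to $1+O(1/s)$, not $\Theta(1/s)$. The paper gets your claimed scaling by taking weight decay $\lambda=1$ in the first step, so that $w^{(1)}=-\eta\nabla L_{\cs}$ exactly; then $w_{ij}=\eta c_{k,s}=\Theta(1/s)$ for $j\in A_i\setminus S$ and $\eta C_{k,s}=\Theta(1/k)$ for $j\in S$, which is where the $O(1/\sqrt s)$ aggregate error and hence $s\ge c_1/\epsilon^2$ come from. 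You should name this explicitly.

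Second, ``a symmetric estimate keeps every non-winning neuron at scale $o(\epsilon)$'' is not right for the near-miss neurons you yourself flag. If $|S\setminus A_i|=1$, the population gradient has exactly one nonzero entry, of magnitude $C_{k,s}$ (on the missing coordinate), so after the $\lambda=1$ step that neuron has $\|w^{(1)}\|_1=\eta C_{k,s}=\Theta(1/k)$, which is \emph{not} $o(\epsilon)$. The paper does not bound these neurons' output by a gradient estimate; it kills them via the bias. All biases are drawn from a fixed discrete set contained in $(-1,-15/(16k)]$ (not random over a symmetric window), so that for any bad neuron $|\langle w^{(1)},x\rangle|\le 10/(16k)+\eta\tau<|b_i|$ and the ReLU outputs zero identically. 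Your coupon-collector picture with random biases would need a separate argument to dispose of bad neurons whose bias happens to be near zero or positive.

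With those two ingredients added, your outline coincides with the paper's proof (Lemmas on good/bad neuron gradients, a one-step feature map approximation, and the regularized convex second phase).
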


Intuitively, by varying the sparsity parameter in Theorem \ref{thm:oversparse_upper_bound}, we obtain a family of algorithms which smoothly interpolate between the \emph{small-data/large-width} and \emph{large-data/small-width} regimes of tractability. First, consider a sparsity level linear in $n$ (i.e. $s = \alpha\cdot n$). In this case, a small network (with width $r$ independent of the input dimension) is sufficient for solving the problem, but the sample size must be large ($\Omega(n^{k+1})$) for successful learning; this recovers the result of \cite{barak2022hidden}. At the other extreme, if the sparsity is independent of $n$, the sample complexity grows only as $O(n^2 \log n)$\footnote{We note that the additional $n^2$ factor in the sample complexity can be removed if we apply gradient truncation, thus allowing only a logarithmic dependence on $n$ in the small-sample case.}, but the requisite width becomes $\Omega(n^k)$.

\textit{Proof sketch.} The proof of Theorem \ref{thm:oversparse_upper_bound} relies on establishing a Fourier anti-concentration condition, separating the relevant (i.e. indices in $S$) and irrelevant weights in the initial population gradient, similarly as the main result in \cite{barak2022hidden}. 
When we initialize an $s$-sparse neuron, there is a probability of $\gtrsim (s/n)^k$ that the subset of activated weights contains the ``correct'' subset $S$. In this case, to detect the subset $S$ via the Fourier gap, it is sufficient to observe $s^{k-1}$ examples instead of $n^{k-1}$. Initializing the neurons more sparsely makes it less probable to draw a \emph{lucky} neuron, but once we draw a lucky neuron, it requires fewer samples to find the right features. Thus, increasing the width reduces overall sample complexity, by sampling a large number of ``lottery tickets''.

\paragraph{Under-sparse initialization ($s < k$).} The sparsity parameter can modulate similar data vs. width tradeoffs for feature learning in the ``under-sparse'' regime. We provide a partial analysis for this more challenging case, showing that one step of gradient descent can recover a correct subnetwork. Appendix~\ref{subsec:under-feature-selection-proof} discusses the mathematical obstructions towards obtaining an end-to-end guarantee of global convergence.
\begin{theorem}
\label{thm:undersparse}
    For even $k, s$, sparsity level $s < k$, network width $r = O((n/k)^s)$, and $\eps$-perturbed\footnote{For ease of analysis, we use a close variant of the sparse initialization scheme: $s$ coordinates out of the $n$ coordinates are chosen randomly and set to 1, and the rest of the coordinates are set to $\epsilon < (n - s)^{-1}$. Without the small norm dense component in the initialization, the population gradient will be 0 at initialization.} $s$-sparse random initialization scheme s.t. for every $(n,k)$-parity distribution $\cd$, with probability at least $0.99$ over the choice of sample and initialization after one step of batch gradient descent (with gradient clipping) with sample size $m = O((n/k)^{k-s-1})$ and appropriate learning rate, there is a subnetwork in the ReLU MLP that approximately computes the parity function.
\end{theorem}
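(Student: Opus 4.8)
The argument follows the over-sparse template of Theorem~\ref{thm:oversparse_upper_bound}: locate a Fourier anti-concentration gap in the population gradient of a single ``lucky'' hidden neuron, argue that enough such neurons are drawn at initialization, verify that $m=\tilde O((n/k)^{k-s-1})$ samples keep the empirical gradient faithful to it, and conclude that one (clipped) step of gradient descent reshapes the first layer so that a sparse subnetwork supported on $S$ computes $\chi_S$. I would first set up with a symmetric (``two-headed'') initialization — neurons paired with identical first-layer weights and opposite second-layer signs $v_j=\pm1$ — so that $h_{\theta^{(0)}}\equiv 0$, $\ell'(0,y)=-y$, and the first-layer gradient of neuron $j$ at coordinate $i$ is exactly $v_j\,\E_{\x\sim\cd}[\chi_S(\x)\,\sigma'(w_j^\top\x+b_j)\,x_i]$, with no target-independent term to fight. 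Biases are drawn from a suitable grid (e.g.\ $0$, or near-integer values, so that the small dense component can actually flip the sign of $w_j^\top\x+b_j$) and held fixed during the step, which the freedom to choose per-layer learning rates permits.

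Call neuron $j$ \emph{lucky} if $A_j:=\{i:w^{(0)}_{j,i}=1\}\subseteq S$ (hence $|A_j|=s<k$) and $v_j$ has a prescribed sign. Writing $z_A=\sum_{i\in A_j}x_i$, $z_{\bar A}=\sum_{i\notin A_j}x_i$, and using $|\epsilon|<(n-s)^{-1}$, one has $\sigma'(w_j^\top\x+b_j)=g(z_A)+\ind[z_A=-b_j]\cdot\tfrac12\,\sgn(z_{\bar A})$ up to the negligible event $\{z_{\bar A}=0\}$, where $g$ depends only on $x_{A_j}$. Correlating against $\chi_S(\x)x_i=\chi_{S\triangle\{i\}}(\x)$ and using $|\E[\ind[z_A=-b_j]\chi_{A_j}(x_{A_j})]|=\Theta_k(1)$ together with the majority Fourier weights $|\widehat{\MAJ}_{n-s}(T)|=\Theta_{|T|}(n^{-|T|/2})$ yields the key three-way separation of the population gradient: (i) it vanishes for $i\in A_j$ — harmless, since $w^{(0)}_{j,i}=1$ there already; (ii) it has magnitude $\Theta_k(n^{-(k-s-1)/2})$ with a coordinate-\emph{independent} sign for $i\in S\setminus A_j$ (here $|S\setminus A_j\setminus\{i\}|=k-s-1$ is odd because $k,s$ are even, which is exactly why this coefficient is nonzero); and (iii) it has magnitude $\Theta_k(n^{-(k-s+1)/2})$ for $i\notin S$, a factor $\Theta(n)$ smaller; non-lucky neurons see only higher-degree Fourier mass and no comparable signal. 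Since a uniform $s$-subset lies in $S$ with probability $\binom{k}{s}/\binom{n}{s}=\Theta_k((k/n)^s)$, a width $r=\Theta_k((n/k)^s)$ gives, w.p.\ $\ge 0.995$, enough lucky neurons with the prescribed sign and with biases covering the thresholds we will need. With $m=\tilde O_k(n^{k-s-1})$ samples, Hoeffding plus a union bound over the $rn$ gradient entries give $\|\nabla L_\cs-\nabla L_\cd\|_\infty\le\tfrac12\,\Theta_k(n^{-(k-s-1)/2})$ w.p.\ $\ge 0.995$, so the empirical gradient inherits (i)--(iii); in particular the $S\setminus A_j$ entries keep a coherent sign and magnitude $\Theta_k(n^{-(k-s-1)/2})$.

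Finally, run one step with first-layer learning rate $\eta=\Theta_k(n^{(k-s-1)/2})$ and per-coordinate gradient clipping at a small multiple of the lucky $S\setminus A_j$ signal. After the step, a lucky neuron has $w^{(1)}_{j,i}=\Theta_k(1)$ with coherent sign for $i\in S$; passing to the subnetwork obtained by zeroing the (ignored) coordinates outside $S$, each such unit becomes a $k$-variable ReLU whose pre-activation is a $\Theta_k(1)$-weighted sum of $x_S$ plus a bias from our spread. Because $\chi_S$ is a function of any such generic weighted sum of $x_S$ (it takes $O_k(1)$ values, on which $\chi_S$ is fixed by the parity of the sign pattern), a linear combination of $O_k(1)$ lucky units whose first-layer directions and biases are sufficiently varied reproduces $\chi_S$ approximately, and the matching second-layer weights exhibit the claimed subnetwork.

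\textbf{Where the difficulty lies.}
The binding step is the last one, for two reasons. First and most seriously, the empirical-gradient noise on the $n-k$ irrelevant coordinates is of the \emph{same} order, $\Theta_k(n^{-(k-s-1)/2})$, as the genuine feature signal on $S\setminus A_j$; with only $m=\tilde O(n^{k-s-1})$ samples one cannot keep those coordinates negligible after a step large enough to grow the relevant ones, which is exactly why the statement asserts only that some \emph{subnetwork} computes parity (legitimately obtained by discarding the off-support weights) rather than that the GD iterate itself does, and is the root of the obstructions to an end-to-end global-convergence guarantee discussed in Appendix~\ref{subsec:under-feature-selection-proof}. Second, turning ``$w^{(1)}$ restricted to $S$ is $\Theta_k(1)$ and coherent'' into ``$O_k(1)$ biased ReLUs compute $\chi_S$'' requires pinning down the weight profile on $S$ (ideally forcing it essentially two-valued via aggressive clipping) and checking that the grid of biases resolves the $O_k(1)$ needed thresholds — routine but delicate bookkeeping that the clipping and the $O_k(\cdot)$ slack in $r$ are designed to absorb.
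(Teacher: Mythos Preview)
Your plan tracks the paper's argument closely: the symmetric two-headed initialization, the definition of lucky neurons as those with $A_j\subseteq S$, the three-way population-gradient separation via the $\HALF_s$/$\MAJ_{n-s}$ decomposition (zero on $A_j$, order $n^{-(k-s-1)/2}$ on $S\setminus A_j$, order $n^{-(k-s+1)/2}$ off $S$), and the width and sample-complexity counts all match the paper's Lemmas~\ref{lem:pop_parity}--\ref{lem:good_neurons} and its final tally.

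The one substantive divergence is your ``binding step.'' You read \emph{gradient clipping} as capping the magnitude, conclude that the $\Theta_k(n^{-(k-s-1)/2})$ empirical noise on the $n-k$ irrelevant coordinates is irreducible at this sample size, and therefore exhibit the subnetwork by manually zeroing off-$S$ weights after the step. In the paper, ``clipping'' means \emph{truncation}: entries with $|g_i|\le\gamma$ are set to zero. With $\tau$ a small constant fraction of $|\xi_{S\setminus A_j}|$ (so $m=\Theta(n^{k-s-1})$ suffices) and $\gamma$ placed in the nonempty window $(|\xi_{[n]\setminus S}|+\tau,\ |\xi_{S\setminus A_j}|-\tau)$, the irrelevant-coordinate empirical gradients fall below $\gamma$ and are zeroed by the step itself; combined with near-total weight decay $\lambda=1-\Theta(\epsilon)$, the post-step off-$S$ weights are $O(\epsilon^2)$, genuinely negligible against the $\Theta(\epsilon)$ on-$S$ weights. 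So the difficulty you flag is real for capping but dissolves under truncation, and the paper's ``subnetwork'' refers only to selecting the $O_k(1)$ good neurons with the right second-layer sign and bias---not to excising weights within a neuron. Your representation worry is handled the same way: the paper draws biases from an $O(\epsilon)$-scale grid of $O_k(1)$ values that, after the step, separate the $k+1$ levels of $\sum_{i\in S}x_i$, and a fixed linear combination of those ReLUs recovers $\chi_S$.
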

Here, the sample complexity can be improved by a factor of $n^s$, at the cost of requiring the width to be $n^s$ times larger. The proof of Theorem \ref{thm:undersparse} relies on a novel analysis of improved Fourier gaps with ``partial progress'': intuitively, if a neuron is randomly initialized with a subset of the relevant indices $S$, it only needs to identify $k-s$ more coordinates, inheriting the improved sample complexity for the $(n - s, k - s)$-parity problem. Note that the probability of finding such a lucky neuron scales as $(n/k)^{-s}$, which governs how wide (number of lottery tickets) the network needs to be.

\paragraph{Remarks on the exact-sparsity regime.}
Our theoretical analyses do not extend straightforwardly to the case of $s = \Theta(k)$. Observe that if the value of $k$ is known, initializing a network of size $r = \widetilde{O}\left(n^k\right)$  with sparsity $s=k$ gives w.h.p. a subnetwork with good first-layer features at initialization. We believe that with a proper choice of regularization and training scheme, it is possible to show that such a network learns to select this subnetwork with low sample complexity. We leave the exact details of this construction and end-to-end proofs for future work.

\paragraph{Analogous results for dense initialization schemes?}
We believe that the principle of ``parallel search with randomized per-subnetwork sample complexities'' extends to other initialization schemes (including those more commonly used in practice), and leads to analogous success frontiers. To support this, our experiments investigate both sparse and uniform initialization, with qualitatively similar findings. For dense initializations, the mathematical challenge lies in analyzing the Fourier anti-concentration of general halfspaces (see the discussion and experiments in Appendix C.1 of \citep{barak2022hidden}). The axis-aligned inductive biases imparted by sparse initialization may also be of independent practical interest.
\section{Experiments}
\label{sec:experiments}

A high-level takeaway from Section~\ref{sec:theory} is that when a learning problem is computationally difficult but statistically easy, a complex frontier of resource tradeoffs can emerge; moreover, it is possible to interpolate between extremes along this frontier using ubiquitous algorithmic choices in deep learning, such as overparameterization, random initialization, and weight decay. In this section, we explore the nature of the frontier with an empirical lens---first with end-to-end sparse parity learning, then with natural tabular datasets.

\subsection{Empirical Pareto frontiers for offline sparse parity learning}
\label{subsec:synthetic-experiments}

We launch a large-scale ($\sim$200K GPU training runs) exploration of resource tradeoffs when training neural networks to solve the offline sparse parity problem. While Section~\ref{sec:theory} analyzes idealized variants of SGD on MLPs which interpolate along the problem's resource tradeoff frontier, in this section we ask whether the same can be observed end-to-end with standard training and regularization.

\begin{figure}
    \centering
    \raisebox{3.3em}{\includegraphics[width=0.18\linewidth]{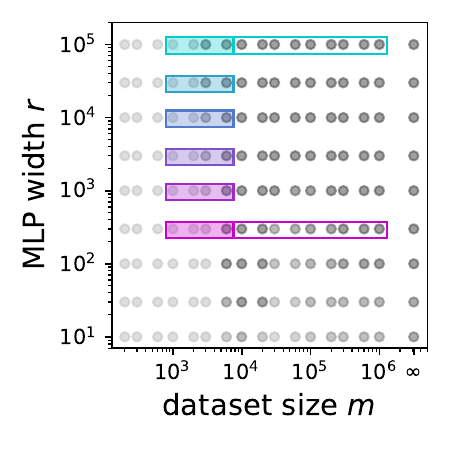}}
    \hspace{-0.2em}
    \raisebox{0.2em}{\includegraphics[width=0.42\linewidth]{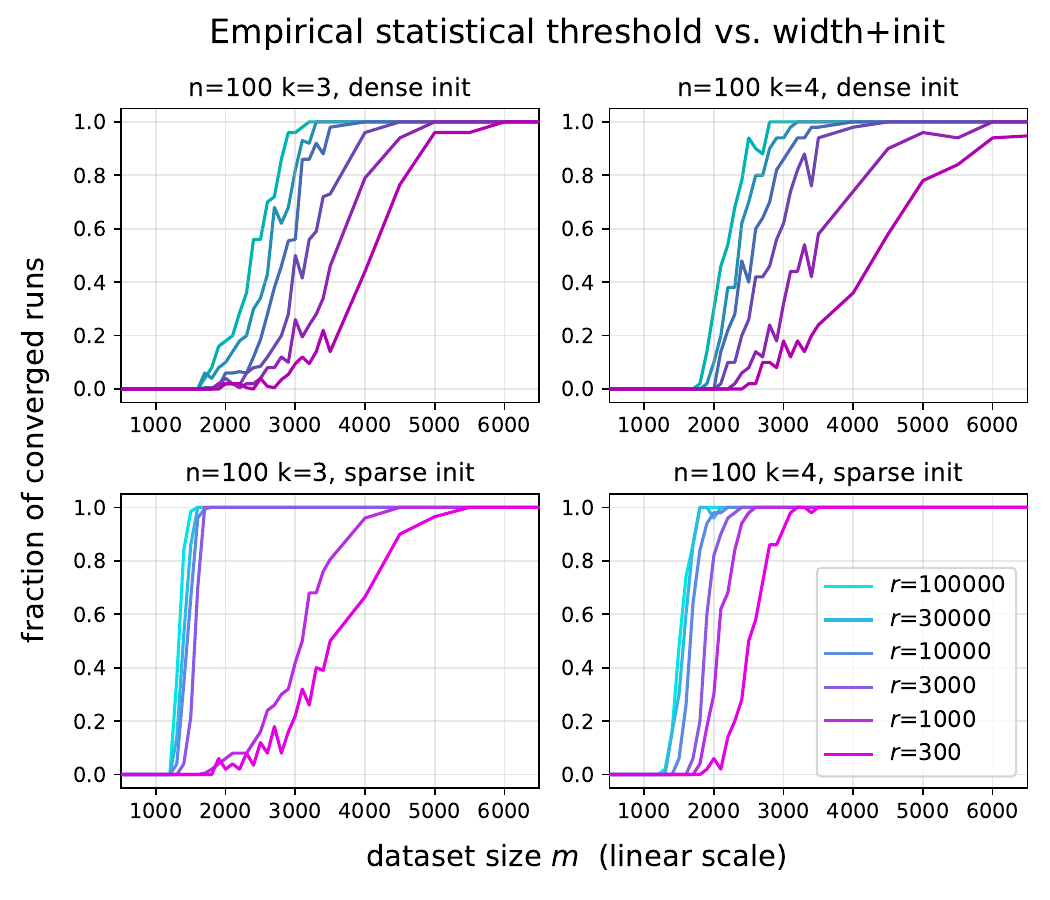}}
    \includegraphics[width=0.38\linewidth]
    {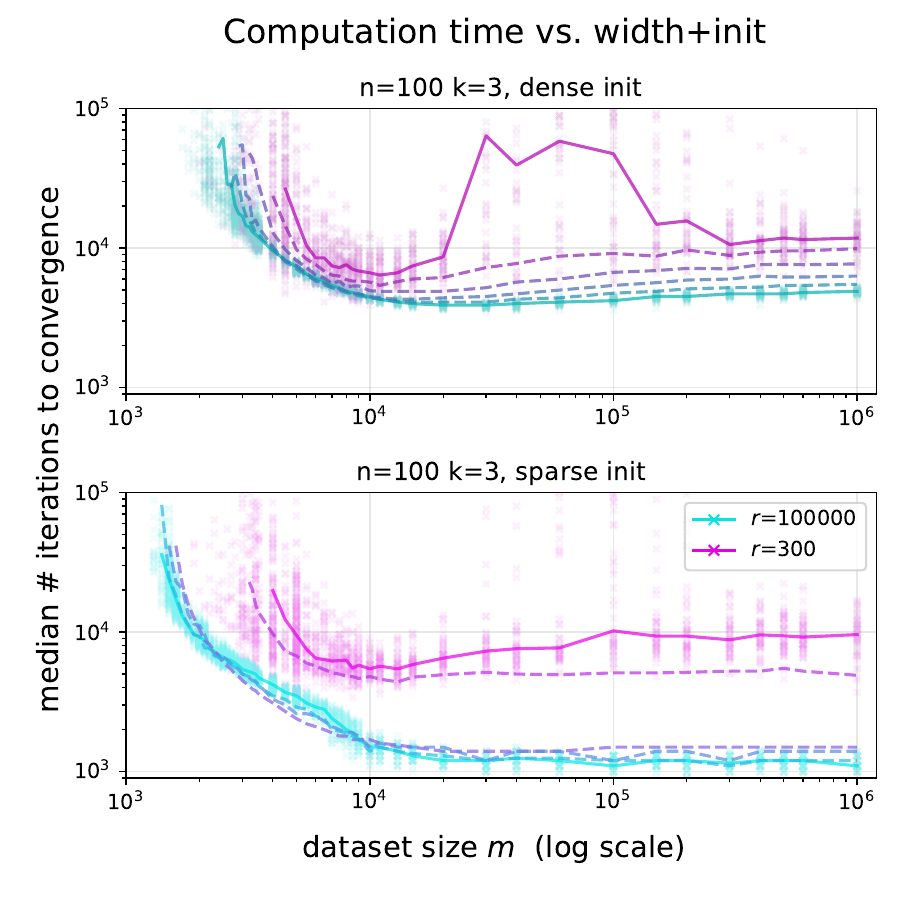}
    \caption{Zoomed-in views of the interactions between width, data, time, and luck. \emph{Left:} Locations of these runs in the larger parameter space. \emph{Center:} Success probability vs. dataset size. In accordance with our theory, \textbf{width buys luck, and improves end-to-end sample efficiency}, despite increasing the network's capacity. \emph{Right:} Number of iterations to convergence vs. dataset size. We observe a \textbf{data vs. time tradeoff} in the grokking regime (at the edge of feasibility), as well as a \textbf{``sample-wise double descent''} performance drop with more data (which can also be seen in Figure~\ref{fig:main-overview}, and disappears with larger widths). Comparing dense vs. sparse initializations (upper vs. lower plots; sparse inits are colored more brightly), we see computational and statistical benefits of the sparse initialization scheme from Section~\ref{subsubsec:sparse-init-theory}. }
    \label{fig:main-detail}
\end{figure}

On various instances of the $(n,k)$-sparse parity learning problem, we train a 2-layer MLP with identical hyperparameters, varying the network width $r \in \{10, 30, 100, \ldots, 10^5\}$ and the dataset size $m \in \{100, 200, 300, 600, 1000, \ldots, 10^6\}$. Alongside standard algorithmic choices, we consider one non-standard 
augmentation of SGD: the \emph{under-sparse} initialization scheme from Section \ref{subsubsec:sparse-init-theory}; we have proven that these give rise to ``lottery ticket'' neurons which learn the influential coordinates more sample-efficiently.
Figure~\ref{fig:main-overview} (in the introduction) and Figure~\ref{fig:main-detail} illustrate our findings at a high level;
details and additional discussion are in Appendix~\ref{subsec:data-width-experiments}). We list our key findings below:

\begin{enumerate}
    \item[(1)] \textbf{A ``success frontier'': large width can compensate for small datasets.} We observe convergence and perfect generalization when $m \ll n^k$. In such regimes, which are far outside the online setting considered by \citet{barak2022hidden}, high-probability sample-efficient learning is enabled by large width. This can be seen in Figure~\ref{fig:main-overview} (left), and analogous plots in Appendix~\ref{subsec:data-width-experiments}.
    \item[(2)] \textbf{Width is monotonically beneficial, and buys data, time, and luck.} In this setting, increasing the model size yields exclusively positive effects on success probability, sample efficiency, and the number of serial steps to convergence (see Figure~\ref{fig:main-detail}). This is a striking example where end-to-end generalization behavior runs \emph{opposite} to uniform convergence-based upper bounds, which predict that enlarging the model's capacity \emph{worsens} generalization.
    \item[(3)] \textbf{Sparse axis-aligned initialization buys data, time, and luck.} Used in conjunction with a wide network, we observe that a sparse, axis-aligned initialization scheme yields strong improvements on all of these axes; see Figure~\ref{fig:main-detail} (bottom row). In smaller hyperparameter sweeps, we find that $s = 2$ (i.e. initialize every hidden-layer neuron with a random $2$-hot weight vector) works best.
    \item[(4)] \textbf{Intriguing effects of dataset size.} As we vary the sample size $m$, we note two interesting phenomena; see Figure~\ref{fig:main-detail} (right). The first is grokking \citep{power2022grokking}, which has been previously documented in this setting \citep{barak2022hidden,merrill2023tale}. This entails a \emph{data vs. time} tradeoff: for small $m$ where learning is marginally feasible, optimization requires significantly more training iterations $T$. Our second observation is a ``sample-wise double descent'' \citep{nakkiran2021deep}: success probability and convergence times can worsen with \emph{increasing} data. Both of these effects are also evident in Figure~\ref{fig:main-overview}).
\end{enumerate}

\paragraph{Lottery ticket neurons.} The above findings are consistent with the viewpoint taken by the theoretical analysis, where randomly-initialized SGD plays the role of \emph{parallel search}, and a large width increases the number of random subnetworks available for this process---in particular, the ``winning lottery ticket'' neurons, for which feature learning occurs more sample-efficiently. To provide further evidence that sparse subnetworks are responsible for learning the parities, we perform a smaller-scale study of network prunability in Appendix~\ref{subsec:lottery}.

\subsection{Sample-efficient deep learning on natural tabular datasets}

Sparse parity learning is a toy problem, in that it is defined by an idealized distribution, averting the ambiguities inherent in reasoning about real-life datasets. However, due to its provable hardness (Theorem~\ref{thm:lower_bound}, as well as the discussion in Section~\ref{subsec:parity-background}), it is a \emph{maximally hard} toy problem in a rigorous sense\footnote{Namely, its SQ dimension is equal to the number of hypotheses, which is what leads to Theorem~\ref{thm:lower_bound}.}. In this section, we perform a preliminary investigation of how the empirical and algorithmic insights gleaned from Section~\ref{subsec:synthetic-experiments} can be transferred to more realistic learning scenarios.

To this end, we use the benchmark assembled by \cite{grinsztajn2022why}, a work which specifically investigates the performance gap between neural networks and tree-based classifiers (e.g. random forests, gradient-boosted trees), and includes a standardized suite of 16 classification benchmarks with numerical input features. The authors identify three common aspects of tabular\footnote{``Tabular data'' refers to the catch-all term for data sources where each coordinate has a distinct semantic meaning which is consistent across points.} tasks which present difficulties for neural networks, especially vanilla MLPs:
\begin{itemize}
    \item[(i)] \textbf{The feature spaces are not rotationally invariant.} In state-of-the-art deep learning, MLPs are often tasked with function representation in rotation-invariant domains (token embedding spaces, convolutional channels, etc.).
    \item[(ii)] \textbf{Many of the features are uninformative.} In order to generalize effectively, especially from a limited amount of data, it is essential to avoid overfitting to these features.
    \item[(iii)] \textbf{There are meaningful high-frequency/non-smooth patterns in the target function.} Combined with property (i), decision tree-based methods (which typically split on axis-aligned features) can appear to have the ideal inductive bias for tabular modalities of data.
\end{itemize}

\begin{figure}
    \centering
    \includegraphics[width=0.98\linewidth]{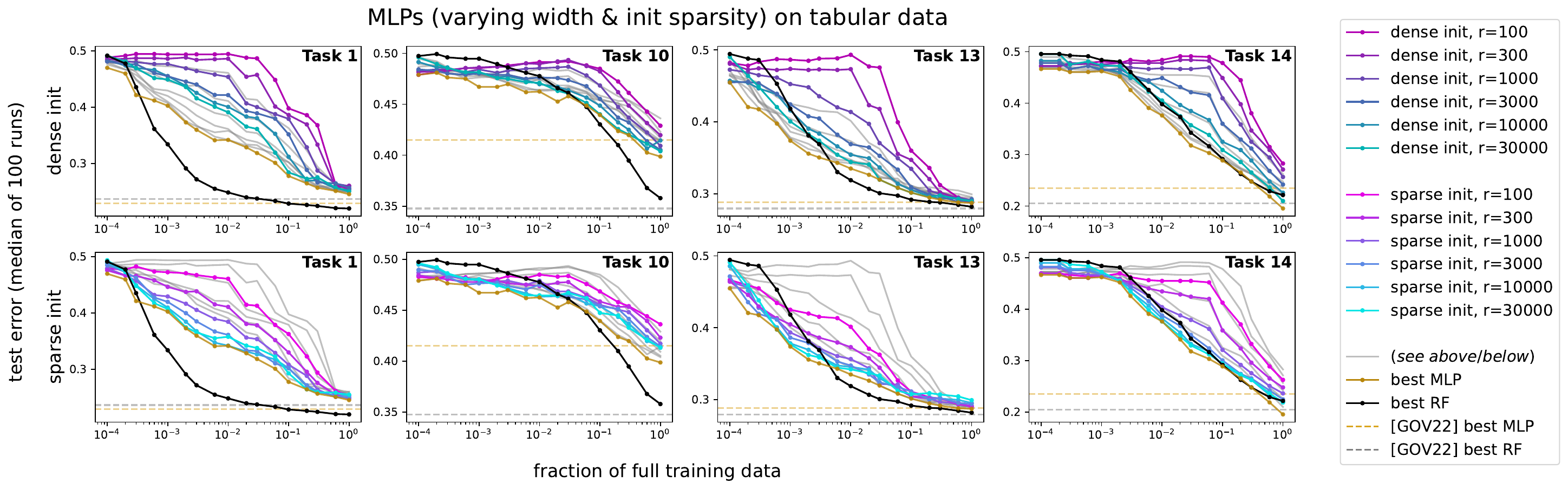}
    \caption{Analogous investigation of MLP width $r$ and sparse initialization for real-world tabular datasets (OpenML benchmarks assembled by \citet{grinsztajn2022why}), varying the dataset size $m$ via downsampling. Large width and sparse initialization tend to improve generalization, in accordance with the theory and experiments for synthetic parity tasks. In some settings, our best MLPs outperform tuned random forests. Dotted lines denote the test errors reported by \cite{grinsztajn2022why} of tuned MLPs and RFs on the full datasets. Full results on all 16 tasks are in Appendix~\ref{subsec:tabular-experiments}.}
    \label{fig:tabular-main}
\end{figure}

Noting that the sparse parity task possesses all three of the above qualities, we conduct a preliminary investigation on whether our empirical findings in the synthetic case carry over to natural tabular data. In order to study the impact of algorithmic choices (mainly width and sparse initialization) on sample efficiency, we create low-data problem instances by subsampling varying fractions of each dataset for training.
Figure~\ref{fig:tabular-main} provides a selection of our results. We note the following empirical findings, which are the tabular data counterparts of results (2) and (3) in Section~\ref{subsec:synthetic-experiments}:
\begin{itemize}[leftmargin=3em]
    \item[(2T)] \textbf{Wide networks generalize on small tabular datasets.} Like in the synthetic experiments, width yields nearly monotonic end-to-end benefits for learning. This suggests that the ``parallel search + pruning'' mechanisms analyzed in our paper are also at play in these settings. In some (but not all) cases, these MLPs perform competitively with tuned tree-based classifiers.
    \item[(3T)] \textbf{Sparse axis-aligned initialization sometimes improves end-to-end performance.} This effect is especially pronounced on datasets which are downsampled to be orders of magnitude smaller. We believe that this class of drop-in replacements for standard initialization merits further investigation, and may contribute to closing the remaining performance gap between deep learning and tree ensembles on small tabular datasets.
\end{itemize}

\section{Conclusion}

We have presented a theoretical and empirical study of offline sparse parity learning with neural networks; this is a provably hard problem which admits a multi-resource lower bound in the SQ model. We have shown that the lower bound can be surmounted using varied mixtures of these resources, which correspond to natural algorithmic choices and scaling axes in deep learning. By investigating how these choices influence the empirical ``success frontier'' for this hard synthetic problem, we have arrived at some promising improvements for MLP models of tabular data (namely, large width and sparse initialization).
These preliminary experiments suggest that a more intensive, exhaustive study of algorithmic improvements for MLPs on tabular data has a chance of reaping significant rewards, perhaps even surpassing the performance of decision tree ensembles.

\paragraph{Broader impacts and limitations.} The nature of this work is foundational; the aim of our theoretical and empirical investigations is to contribute to the fundamental understanding of neural feature learning, and the influences of scaling relevant resources. A key limitation is that our benchmarks on tabular data are only preliminary; it is a significant and perennial methodological challenge to devise fair and comprehensive comparisons between neural networks and tree-based learning paradigms.

\section*{Acknowledgements}

We are grateful to Boaz Barak for helpful discussions and to Matthew Salganik for helpful comments on a draft version. This work has been made possible in part by a gift from the Chan Zuckerberg Initiative Foundation to establish the Kempner Institute for the Study of Natural and Artificial Intelligence.
Sham Kakade acknowledges funding from the Office of Naval Research under award N00014-22-1-2377. Ben Edelman acknowledges funding from the National Science Foundation Graduate Research Fellowship Program under award \#DGE 2140743.

\bibliography{bib}
\bibliographystyle{apalike}

\renewcommand{\theequation}{\thesection.\arabic{equation}}

\newpage
\appendix

\addcontentsline{toc}{section}{Appendix} %
\part{Appendix} %
\parttoc %
\newpage

\section{Additional related work}\label{sec:app-related}
\paragraph{Learning parities with neural networks.} Parities (or XORs) have been shown to be computationally hard to learn for SQ algorithms including gradient-based methods. Various works make additional assumptions to avoid these hardness results to show that neural networks can be efficiently trained to learn parities \citep{daniely2020learning,shi2021theoretical,frei2022random,malach2021quantifying}. More recently, \citet{barak2022hidden} have focused on understanding how neural networks training on parities without any additional assumption behaves at this computational-statistical limit. They show that one step of gradient descent on a single neuron is able to recover the indices corresponding to the parity with $n^{O(k)}$ samples/computation. \citet{abbe2023sgd} improve this bound to $O(n^{k-1})$ online SGD steps and generalize the result to handle hierarchical staircases of parity functions which requires a multi-step analysis. \cite{telgarsky2022feature} studies the problem of 2-sparse parities with two-layer neural networks trained with vanilla SGD (unlike our restricted two-step training algorithm) and studies the margins achieved post training. They use the margins to get optimal sample complexity $\tilde{O}(n^2/\epsilon)$ in the NTK regime. Going beyond NTK, they analyze gradient flow (with certain additional modifications) on an exponential wide 2-layer network (making it computationally inefficient) to get the improved sample complexity of $\tilde{O}(n/\epsilon)$. In contrast to this, our goal is to improve sample complexity while maintaining computational efficiency, using random guessing via the sparse initialization. 

\paragraph{Learning single-index/multi-index models over Gaussians with neural networks.} Another line of work \citep{arous2021online,ba2022high,damian2022neural,bietti2022learning, damian2023smoothing} has focused on learning functions that depend on a few directions, in particular, single-index and multi-index models over Gaussians using neural nets. These can be thought of as a continuous analog to our sparse parity problem. In a similar analysis (as parities) of online SGD for single index models, \cite{arous2021online} propose the notion of an \textit{information exponent} which captures the initial correlation between the model and the target function, and get convergence results similar to the parity setting with sample complexity $O(n^{k-1})$ for information exponent $k$ (can be thought similar to the $k$ in the parity learning problem). \cite{damian2023smoothing} improve this result by showing that a smoothed version of GD achieves the optimal sample complexity (for CSQ algorithms) of $(n^{k/2})$. Going beyond CSQ algorithms, \cite{chen2020learning} provide a filtered-PCA algorithm that achieves polynomial dependence on the dimension $n$ in both compute and sample complexity. Note that this is not achievable for CSQ algorithms. For the parity learning problem, the SQ computational lower bounds are $\Omega(n^k)$ (as described in Section \ref{subsec:pareto}). 

\paragraph{Empirical inductive biases of large MLPs.}
Our experiments on tabular benchmarks suggest that wide and sparsely-initialized vanilla MLPs can sometimes close the performance gap between neural networks and decision tree ensemble methods. This corroborates recent findings that vanilla MLPs have strong enough inductive biases to generalize nontrivially in natural data modalities, despite the overparameterization and lack of architectural biases via convolution or recurrence.
Notably, many state-of-the-art computer vision models have removed convolutions \citep{dosovitskiy2020image,tolstikhin2021mlp}; recently, \citep{bachmann2023scaling} demonstrate that even large vanilla NLPs can compete with convolutional models for image classification. \citet{yang2022tensor} find monotonic improvements in terms of model width, which are stabilized by their theoretically-motivated hyperparameter scaling rules.

\paragraph{Multi-resource scaling laws for deep learning.} Many empirical studies \citep{kaplan2020scaling,henighan2020scaling,hoffmann2022training,zhai2022scaling}, motivated by the pressing need to allocate resources effectively in large-scale deep learning, corroborate the presence and regularity of neural scaling laws. Precise statements and hypotheses vary; \citet{kaplan2020scaling} fit power-law expressions which predict holdout validation log-perplexity of a language model in terms of dataset size, model size, and training iterations ($m, r, T$ in our notation). The present work shows how such a joint dependence on $m \times r \times T$ can arise from a single feature learning problem with a computational-statistical gap. Numerous works attempt to demystify neural scaling laws with theoretical models \citep{bahri2021explaining,hutter2021learning,michaud2023quantization}; ours is unique in that it does not suppose a long-tailed data distribution (the statistical complexity of identifying a sparse parity is benign). We view these accounts to be mutually compatible: we \emph{do not} purport that statistical query complexity is the unique origin of neural scaling laws, nor that there is a \emph{single} such mechanism.
\section{Proofs}
\label{app-sec:proofs}

\subsection{Multi-resource lower bound for sparse parity learning}
\label{subsec:lower-bound-proof}
For some target function $f$ and some parameters $\theta$, we denote the population gradient over the distribution $\cd$ by:
\[
g(f, \theta) = \E_{\x \sim \cd}\left[\nabla_\theta \ell(h_\theta(\x),f(\x))\right]
\]
and we denote by $g_i(\cdot, \cdot)$ the gradient w.r.t. the $i$-th coordinate of $\theta$.

Similarly, denote the empirical gradient by:
\[
\hat{g}(f, \theta) = \frac{1}{m} \sum_{\x \in \cs} \nabla_\theta \ell(h_\theta(\x), f(\x))
\]
and $\hat{g}_i(\cdot, \cdot)$ denotes the $i$-th coordinate of the empirical gradient.

\begin{lemma}
    \label{lem:var_bound}
    For every $\theta$ and every $i$ it holds that
    \[
        \E_{S \sim \binom{n}{k}}
        \left[\left(g_i(\chi_S, \theta)-\ell_0(h_\theta(\x))\cdot\frac{\partial}{\partial \theta_i}h_\theta(\x)\right)^2\right]
        \le \frac{1}{\binom{n}{k}}
    \]
\end{lemma}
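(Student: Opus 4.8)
The plan is to recognize the quantity inside the expectation as (minus) a single Fourier coefficient of the scalar function $\x \mapsto \partial_{\theta_i} h_\theta(\x)$, and then to bound its average over degree-$k$ characters using Parseval's identity together with the normalization $\norm{\nabla h_\theta(\x)}_\infty \le 1$. Throughout, $\cd$ is the uniform distribution on $\{\pm 1\}^n$ (the setting of the parity lower bound), and I read the subtracted term inside the square as $\E_{\x\sim\cd}\!\left[\ell_0(h_\theta(\x))\cdot \partial_{\theta_i} h_\theta(\x)\right]$, i.e. the $S$-independent part of the population gradient.

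First, I would expand the population gradient using the structural assumption $\ell'(\hat y, y) = -y + \ell_0(\hat y)$. By the chain rule,
\[
g_i(\chi_S, \theta) = \E_{\x\sim\cd}\!\left[\ell'(h_\theta(\x), \chi_S(\x))\, \partial_{\theta_i} h_\theta(\x)\right] = -\E_{\x}\!\left[\chi_S(\x)\,\partial_{\theta_i} h_\theta(\x)\right] + \E_{\x}\!\left[\ell_0(h_\theta(\x))\,\partial_{\theta_i} h_\theta(\x)\right].
\]
The second term does not depend on $S$; in fact it equals $g_i(\mathbf{0}, \theta)$, the population gradient at the constant-zero target, since $\ell'(\hat y, 0) = \ell_0(\hat y)$. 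Hence the difference appearing in the lemma equals $-\widehat{\phi}(S)$, where $\phi(\x) := \partial_{\theta_i} h_\theta(\x)$ and $\widehat{\phi}(S) := \E_{\x\sim\cd}[\phi(\x)\,\chi_S(\x)]$ is its Fourier coefficient on $\{\pm1\}^n$.

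Next, I would apply Parseval: $\sum_{S'\subseteq [n]} \widehat{\phi}(S')^2 = \E_{\x\sim\cd}[\phi(\x)^2] \le 1$, where the final inequality uses $|\phi(\x)| = |\partial_{\theta_i} h_\theta(\x)| \le \norm{\nabla h_\theta(\x)}_\infty \le 1$. Restricting the sum to $|S'| = k$ only decreases it, so $\sum_{|S'|=k}\widehat{\phi}(S')^2 \le 1$, and therefore
\[
\E_{S\sim\binom{[n]}{k}}\!\left[\widehat{\phi}(S)^2\right] = \frac{1}{\binom{n}{k}}\sum_{|S| = k}\widehat{\phi}(S)^2 \le \frac{1}{\binom{n}{k}},
\]
which is the claim.

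There is essentially no hard step; the only points needing care are (i) making the cancellation precise — it uses exactly that $\ell_0$ depends only on $\hat y = h_\theta(\x)$, which is the given form of $\ell'$ — and (ii) that Parseval is invoked with respect to the uniform measure, which matches the distribution defining the gradients in the parity setting. For a general \emph{product} distribution $\cd$ one would instead expand $\phi$ in the associated orthonormal Fourier-type basis; for a non-product $\cd$ the $\chi_S$ are no longer orthonormal and the statement would have to be adjusted.
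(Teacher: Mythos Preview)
Your proposal is correct and follows essentially the same argument as the paper: identify the centered gradient as (up to sign) the Fourier coefficient $\E_\x[\chi_S(\x)\,\partial_{\theta_i}h_\theta(\x)]$, then bound the average over $|S|=k$ by $\binom{n}{k}^{-1}$ via Parseval and the assumption $\norm{\nabla h_\theta(\x)}_\infty\le 1$. Your write-up is in fact a bit more careful than the paper's in making the cancellation explicit and in flagging that the subtracted term should be read as an expectation over $\x$.
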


\begin{proof}
    Fix some $i \in [r]$,
    \begin{align*}
        &\E_{S \sim \binom{n}{k}}\left[\left(g_i(\chi_S, \theta)-\ell_0(h_\theta(\x))\cdot\frac{\partial}{\partial \theta_i}h_\theta(\x)\right)^2\right] \\
        &= \E_{S \sim \binom{n}{k}}\left[\E_{\x \sim \cd}\left[\chi_S(\x)\cdot\frac{\partial}{\partial \theta_i}h_\theta(\x)\right]^2\right] \\
        &= \frac{1}{\binom{n}{k}} \sum_{S \in \binom{n}{k}} \E_{\x \sim \cd}\left[\chi_S(\x)\cdot\frac{\partial}{\partial \theta_i}h_\theta(\x)\right]^2
        \le \frac{1}{\binom{n}{k}}
    \end{align*}
    where the last inequality is from Parseval,
    using the assumption $\norm{\nabla h_\theta(\x)}_\infty \le 1$.

\end{proof}

\begin{proof}[Proof of Proposition \ref{thm:lower_bound}]
    Using the Lemma \ref{lem:var_bound} we get:
    \begin{align*}
        &\E_{S \sim \binom{n}{k}}
        \left[\max_{i,t}\left(g_i(\chi_S, \theta^*_t)-\ell_0(h_{\theta^*_t}(\x))\cdot\frac{\partial}{\partial \theta_i}h_{\theta^*_t}(\x)\right)^2\right] \\
        &\le \E_{S \sim \binom{n}{k}}
        \left[\sum_{i=1}^r \sum_{t=1}^T \left(g_i(\chi_S, \theta^*_t)-\ell_0(h_{\theta^*_t}(\x))\cdot\frac{\partial}{\partial \theta_i}h_{\theta^*_t}(\x)\right)^2\right] \\
        &\le \frac{rT}{\binom{  n}{k}} \le \delta \tau^2
    \end{align*}
    Therefore, taking expectation over the choice of $\theta_0$
    \begin{align*}
        &\E_{\theta_0} \E_{S \sim \binom{n}{k}} \left[\max_{i,t}{\left(g_i(\chi_S, \theta^*_t)-\ell_0(h_{\theta^*_t}(\x))\cdot\frac{\partial}{\partial \theta_i}h_{\theta^*_t}(\x)\right)}^2\right] \\
        &= \E_{S \sim \binom{n}{k}} \E_{\theta_0} \left[\max_{i,t}{\left(g_i(\chi_S, \theta^*_t)-\ell_0(h_{\theta^*_t}(\x))\cdot\frac{\partial}{\partial \theta_i}h_{\theta^*_t}(\x)\right)}^2\right] \le \delta \tau^2/2
    \end{align*}
    So, there exists some $S \in \binom{n}{k}$ s.t.
    \[
    \E_{\theta_0} \left[\max_{i,t}{\left(g_i(\chi_S, \theta^*_t)-\ell_0(h_{\theta^*_t}(\x))\cdot\frac{\partial}{\partial \theta_i}h_{\theta^*_t}(\x)\right)}^2\right] \le \delta \tau^2/2
    \]
    Observe the noise variable $\xi_t =  \ell_0(h_{\theta^*_t}(\x))\cdot\frac{\partial}{\partial \theta_i}h_{\theta^*_t}(\x)-\hat{g}_i(\chi_S, \theta^*_t,\cs_t)$.
    From Markov's inequality and Assumption \ref{asm:grad_est}, with probability at least $1-\delta$ over the choice of $\theta_0$, for all $t \le T$ and $i \in [r]$:
    \[
        \abs{\hat{g}_i(\chi_S, \theta^*_t,\cs_t)- \ell_0(h_{\theta^*_t}(\x))\cdot\frac{\partial}{\partial \theta_i}h_{\theta^*_t}(\x)} \le \tau 
    \]
    therefore, we get that $\xi_1, \dots, \xi_T \in [-\tau,\tau]^r$ (i.e., this is a valid choice of adversarial noise variables), and SGD follows the trajectory $\theta_1^\star, \dots, \theta^\star_T$.
\end{proof}

\subsection{Feature selection with an over-sparse initialization and a wide network}
\label{subsec:feature-selection-proof}
\subsubsection{Warmup: existence of good subnetworks}
Let $h_\bw(\x) = \sigma(\inner{\bw, \x})$ be a single ReLU neuron, where $\sigma(x) = \max \{x,0\}$. Fix some $4k < s \le n$. Assume we initialize $\bw \in \{0,1\}^n$ by randomly choosing $s$ coordinates and setting them to $1$, and setting the rest to zero. Fix some subset $S \subseteq \binom{n}{k}$. We say that $\bw$ is a \emph{good} neuron if $S \subseteq \bw$. We say that $\bw$ is a \emph{bad} neuron if it is not a \emph{good} neuron.

\begin{lemma}
    \label{lem:good_neuron_prob}
    With probability at least $\left(s/2n\right)^k$ over the choice of $\bw$, $\bw$ is a \emph{good} neuron.
\end{lemma}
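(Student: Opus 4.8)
The goal is to lower-bound the probability that a random $s$-subset of $[n]$ contains a fixed $k$-subset $S$. This is an elementary counting fact, and the plan is simply to compute this probability exactly and then bound it from below.

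\medskip

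\noindent\textbf{Plan.} First I would write down the exact probability. The neuron $\bw$ is chosen by picking $s$ coordinates uniformly at random out of $n$; it is \emph{good} precisely when all $k$ coordinates of $S$ are among the chosen $s$. Conditioning on which coordinates are chosen, this probability equals $\binom{n-k}{s-k}/\binom{n}{s}$, which one rewrites as the falling-factorial ratio
\[
\prob{\bw \text{ is good}} \;=\; \frac{\binom{n-k}{s-k}}{\binom{n}{s}} \;=\; \prod_{j=0}^{k-1} \frac{s-j}{n-j}.
\]
Then the main step is to bound each factor: since $j \le k-1 < s/4 \le s/2$ (using $4k < s$, hence $j < k \le s/4$), we have $s - j \ge s/2$, while $n - j \le n$. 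Thus each of the $k$ factors is at least $\frac{s/2}{n} = \frac{s}{2n}$, giving $\prob{\bw \text{ is good}} \ge (s/2n)^k$, as claimed.

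\medskip

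\noindent\textbf{Where the work is.} There is essentially no obstacle here — the only thing to be careful about is the direction of the bound on each factor and making sure the hypothesis $4k < s$ is used correctly to guarantee $s - j \ge s/2$ for all $j \in \{0,\dots,k-1\}$ (in fact $4k < s$ gives the stronger $s - j \ge s - k > 3s/4$, so the stated bound has slack). One could alternatively present the probability combinatorially as (number of $s$-subsets containing $S$) over (total number of $s$-subsets), namely $\binom{n-k}{s-k}/\binom{n}{s}$, and then apply the same per-factor estimate; either phrasing works. I expect the author's proof to be a two- or three-line version of exactly this computation.
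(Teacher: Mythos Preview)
Your proposal is correct and essentially identical to the paper's proof: both compute the probability as $\binom{n-k}{s-k}/\binom{n}{s}$, rewrite it as the product $\prod_{j=0}^{k-1}\frac{s-j}{n-j}$ (the paper writes the reciprocal), and bound each factor by $\frac{s}{2n}$ using $k < s/4$. The only difference is cosmetic---the paper phrases the bound as $\frac{n-j}{s-j} \le \frac{2n}{s}$ rather than your $\frac{s-j}{n-j} \ge \frac{s}{2n}$.
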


\begin{proof}
    There are $\binom{n}{s}$ choices for $\bw$, and there are $\binom{n-k}{s-k}$ \emph{good} choices for $\bw$. Observe that:
    \[
    \binom{n}{s} = \frac{n(n-1)\cdots(n-k+1)}{s(s-1)\cdots(s-k+1)}\binom{n-k}{s-k} \le \left(\frac{2n}{s}\right)^k \binom{n-k}{s-k}
    \]
    and therefore the required follows.
\end{proof}

\begin{lemma}
    \label{lem:bound_good}
    Assume $k$ is even, $s$ is odd and $k < \frac{s}{4}$. There exist constants $C_{k}, c_{k}$ s.t. if $\bw$ is a \emph{good} neuron, then:
    \begin{enumerate}
        \item For all $i \in S$,  $$\E_{\x} \left[ \frac{\partial}{\partial w_i}h_\bw(\x) \cdot \chi_S(\x) \right] = C_{k,s}$$
        \item For all $i \notin S$, 
        $$\E_{\x} \left[ \frac{\partial}{\partial w_i}h_\bw(\x) \cdot \chi_S(\x) \right] = w_i c_{k,s}$$
    \end{enumerate}
    and furthermore, there exists a constant $\kappa_k$ s.t. $\abs{C_{k,s}} > \kappa_k \binom{s}{k-1}^{-1/2}$ and $\frac{\abs{c_{k,s}}}{\abs{C_{k,s}}} \le \frac{4k}{s}$.
\end{lemma}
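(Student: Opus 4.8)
The plan is to rewrite each of the three expectations as a Fourier coefficient of the Majority function restricted to the support of $\bw$, and then read off the magnitudes from the classical closed form for those coefficients. First I would compute the gradient: since $\sigma(x)=\max\{x,0\}$ has $\sigma'(x)=\ind[x>0]$ wherever $x\neq 0$, and since $\inner{\bw,\x}=\sum_{j\in T}x_j$ with $T=\mathrm{supp}(\bw)$, $|T|=s$ odd, is a sum of an odd number of $\pm1$'s (hence never zero), we get $\frac{\partial}{\partial w_i}h_\bw(\x)=\ind[\inner{\bw,\x}>0]\,x_i$ at every $\x\in\{\pm1\}^n$. Writing $\ind[\inner{\bw,\x}>0]=\tfrac12(1+\MAJ_T(\x))$ with $\MAJ_T\in\{\pm1\}$ the majority vote over the coordinates in $T$, and using $\E_\x[x_i\chi_S(\x)]=0$ (a non-constant parity), we obtain for every $i$
\[
\E_\x\Bigl[\tfrac{\partial}{\partial w_i}h_\bw(\x)\,\chi_S(\x)\Bigr]=\tfrac12\,\E_\x\bigl[\MAJ_T(\x)\,x_i\,\chi_S(\x)\bigr].
\]

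Next I would split on the location of $i$, using that $\bw$ is good, i.e.\ $S\subseteq T$. If $i\in S$, then $x_i\chi_S(\x)=\chi_{S\setminus\{i\}}(\x)$ is a parity over the $(k-1)$-subset $S\setminus\{i\}\subseteq T$; since $\MAJ_T$ and this monomial involve only coordinates in $T$, the expectation equals $\tfrac12\widehat{\MAJ_T}(S\setminus\{i\})$, which by the permutation invariance of Majority depends only on $s$ and $k-1$ --- this defines $C_{k,s}$, visibly independent of $i$ and of the choice of good $\bw$. If $i\notin S$ and $w_i=0$ (so $i\notin T$), then $x_i$ is independent of $\MAJ_T\chi_S$ and the expectation is $0=w_i c_{k,s}$. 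If $i\notin S$ and $w_i=1$ (so $i\in T$), then $x_i\chi_S(\x)=\chi_{S\cup\{i\}}(\x)$ is a parity over the $(k+1)$-subset $S\cup\{i\}\subseteq T$, so the expectation is $\tfrac12\widehat{\MAJ_T}(S\cup\{i\})$, depending only on $s$ and $k+1$; naming this $c_{k,s}$ makes all three cases read $w_i c_{k,s}$. Since $k$ is even, both $k-1$ and $k+1$ are odd, which is exactly where $\MAJ_s$ carries nonzero Fourier mass, so neither constant is forced to vanish.

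It then remains to estimate the magnitudes using the explicit Fourier formula for Majority (see, e.g., \citep{o2014analysis}): for $|U|=\ell$ odd, $|\widehat{\MAJ_s}(U)|=2^{-(s-1)}\binom{s-1}{(s-1)/2}\binom{(s-1)/2}{(\ell-1)/2}\big/\binom{s-1}{\ell-1}$. Dividing the $\ell=k+1$ and $\ell=k-1$ instances, the factor $2^{-(s-1)}\binom{s-1}{(s-1)/2}$ cancels and the remaining binomial ratios telescope to $|c_{k,s}|/|C_{k,s}|=\tfrac{k-1}{s-k}$; since $k<s/4$ gives $s-k>\tfrac{3}{4}s$, this is at most $\tfrac{4(k-1)}{3s}<\tfrac{4k}{s}$. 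For the absolute lower bound I would combine the standard estimate $2^{-(s-1)}\binom{s-1}{(s-1)/2}=\Theta(1/\sqrt{s})$ with the crude bounds $\binom{(s-1)/2}{(k-2)/2}\ge\bigl(\tfrac{s-1}{k-2}\bigr)^{(k-2)/2}$ and $\binom{s-1}{k-2}\le\bigl(\tfrac{e(s-1)}{k-2}\bigr)^{k-2}$ to get $|C_{k,s}|=\Omega_k\bigl(s^{-(k-1)/2}\bigr)$, and then compare with $\binom{s}{k-1}^{-1/2}=\Theta_k\bigl(s^{-(k-1)/2}\bigr)$ to extract a constant $\kappa_k$ with $|C_{k,s}|>\kappa_k\binom{s}{k-1}^{-1/2}$. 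The structural reductions are routine Boolean Fourier manipulations; the one place that needs genuine care is the final constant bookkeeping --- keeping the $k$-dependent constants clean and verifying that $|c_{k,s}|/|C_{k,s}|\le 4k/s$ holds with room to spare --- which is where I would expect the real work to be.
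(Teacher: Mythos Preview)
Your proposal is correct and matches the paper's proof in structure: the case split into $i\in S$, $i\in T\setminus S$, and $i\notin T$, the reduction of each case to a Fourier coefficient of $\MAJ_s$ at level $k-1$, $k+1$, or $0$, and the appeal to permutation invariance to define $C_{k,s}$ and $c_{k,s}$ are all identical to what the paper does.

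The one genuine difference is in how you extract the quantitative bounds. The paper invokes the level-weight estimate for Majority (Theorem~5.22 in \cite{o2014analysis}), namely $\binom{s}{k-1}\widehat{\MAJ_s}(k-1)^2\ge \rho(k-1)$ and $\binom{s}{k+1}\widehat{\MAJ_s}(k+1)^2\le 2\rho(k+1)$, from which both the lower bound $|C_{k,s}|\ge \tfrac12\sqrt{\rho(k-1)}\binom{s}{k-1}^{-1/2}$ and the ratio bound $|c_{k,s}|/|C_{k,s}|\le \sqrt{2}\sqrt{\binom{s}{k-1}/\binom{s}{k+1}}\le 4k/s$ fall out directly. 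You instead plug in the explicit closed form for $\widehat{\MAJ_s}$, which gives the \emph{exact} ratio $|c_{k,s}|/|C_{k,s}|=(k-1)/(s-k)$ --- sharper and cleaner than the paper's $\sqrt{2k(k+1)/((s-k)(s-k+1))}$ --- and then recover the lower bound on $|C_{k,s}|$ via Stirling-type estimates on the individual binomials. Your route is slightly more direct for the ratio; the paper's route is slightly more direct for the absolute lower bound (since the $\binom{s}{k-1}^{-1/2}$ factor appears immediately from the level-weight inequality without any asymptotic bookkeeping). Both are equally valid and standard.
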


\begin{proof}
    First, consider the case where $i \in S$. Therefore,
    \begin{align*}
        \E_{\x \sim \{\pm 1\}^n} \left[ \frac{\partial}{\partial w_i}h_\bw(\x) \cdot \chi_S(\x) \right]
        &= \E_{\x \sim \{\pm 1\}^n} \left[\sigma'(\inner{\bw,\x}) \cdot x_i \cdot \chi_S(\x) \right] \\
        &= \E_{\x \sim \{\pm 1\}^n} \left[\ind_{\left\lbrace\sum_{j \in \bw}x_j \ge 0\right\rbrace} \chi_{S \setminus \{i\}}(\x)\right] \\
        &= \E_{\x \sim \{\pm 1\}^{s}}\left[\left(\frac{1}{2}\MAJ_{s}(\x)+\frac{1}{2}\right) \chi_{S \setminus \{i\}}(\x)\right] \\
        &= \frac{1}{2}\widehat{\MAJ_{s}}(S \setminus \{i\})
    \end{align*}
    where $S \setminus \{i\}$ is interpreted as a subset of $[s]$. 
    From symmetry of the Majority function, all Fourier coefficients of the same order are equal. Therefore, the first condition holds for $C_{k,s} = \frac{1}{2} \widehat{\MAJ_s}(k-1)$, where $\widehat{\MAJ_s}(k-1)$ denotes the $(k-1)$-th order Fourier coefficient.

    When $i \in S \setminus \bw$ we get:
    \begin{align*}
        \E_{\x \sim \{\pm 1\}^n} \left[ \frac{\partial}{\partial w_i}h_\bw(\x) \cdot \chi_S(\x) \right]
        &= \E_{\x \sim \{\pm 1\}^n} \left[\sigma'(\inner{\bw,\x}) \cdot x_i \cdot \chi_S(\x) \right] \\
        &= \E_{\x \sim \{\pm 1\}^n} \left[\ind_{\left\lbrace\sum_{j \in \bw}x_j > 0\right\rbrace} \chi_{S \cup \{i\}}(\x)\right] \\
        &= \E_{\x \sim \{\pm 1\}^{s}}\left[\left(\frac{1}{2}\MAJ_{s}(\x)+\frac{1}{2}\right) \chi_{S \cup \{i\}}(\x)\right] \\
        &= \frac{1}{2}\widehat{\MAJ_{s}}(S \cup \{i\})
    \end{align*}
    Finally, when $i \notin S$ we get:
    \begin{align*}
        \E_{\x \sim \{\pm 1\}^n} \left[ \frac{\partial}{\partial w_i}h_\bw(\x) \cdot \chi_S(\x) \right]
        &= \E_{\x \sim \{\pm 1\}^n} \left[\sigma'(\inner{\bw,\x}) \cdot x_i \cdot \chi_S(\x) \right] \\
        &= \E \left[\sigma'(\inner{\bw,\x}) \cdot \chi_S(\x) \right] \E_{x_i}\left[x_i\right] = 0
    \end{align*}
    Therefore, the second condition holds with $c_k = \frac{1}{2} \widehat{\MAJ_s}(k+1)$.
    
    Now, from Theorem 5.22 in \cite{o2014analysis} we have:
    \[
    \binom{s}{k-1} \cdot \widehat{\MAJ_s}(k-1)^2 = \sum_{S' \in \binom{s}{k-1}} \widehat{\MAJ_s}(S')^2 \ge \rho(k-1)
    \]
    where $\rho(v) = \frac{2}{\pi v2^{v}} \binom{v-1}{\frac{v-1}{2}}$. So, we get $\abs{C_{k,s}} \ge \frac{1}{2}\sqrt{\rho(k-1)} \binom{s}{k-1}^{-1/2}$. Using the same Theorem, we also have:
    \[
    \binom{s}{k+1} \cdot \widehat{\MAJ_s}(k+1)^2 = \sum_{S' \in \binom{s}{k+1}} \widehat{\MAJ_s}(S')^2 \le 2\rho(k+1) < 2 \rho(k-1)
    \]
    So, we get:
    \begin{align*}
        \frac{\abs{c_{k,s}}}{\abs{C_{k,s}}} &\le \frac{\sqrt{2\rho(k-1)}\binom{s}{k+1}^{-1/2}}{\sqrt{\rho(k-1)}\binom{s}{k-1}^{-1/2}} = \sqrt{2} \sqrt{\frac{\binom{s}{k+1}}{\binom{s}{k-1}}} \\
        &= \sqrt{\frac{2k(k+1)}{(s-k+2)(s-k+1)}} \le \sqrt{\frac{4k^2}{(1/4)s^2}} = 4\frac{k}{s}
    \end{align*}
\end{proof}

\begin{lemma}
\label{lem:bound_bad}
    Assume that $k < \frac{s}{4}$. If $\bw$ is a \emph{bad} neuron, then:
    \[
    \norm{\E_\x \left[\frac{\partial}{\partial \bw} h_\bw(\x) \cdot \chi_S(\x)\right]}_1 \le C_{k,s}
    \]
\end{lemma}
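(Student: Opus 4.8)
The plan is to compute the Fourier expansion of each partial derivative $\frac{\partial}{\partial w_i} h_\bw(\x) = \sigma'(\inner{\bw,\x}) x_i$ (with $x_i$ replaced by $\ind_{i\in\bw}$ inside when $i\in\bw$, since $\sigma'$ only depends on coordinates in $\bw$), correlate against $\chi_S$, and argue that because $\bw$ is \emph{bad} — i.e.\ $S \not\subseteq \bw$ — every such correlation reduces to a single Fourier coefficient of $\MAJ_s$ of order \emph{strictly larger} than $k-1$. Concretely, write $S = S_\In \sqcup S_\out$ where $S_\In = S\cap\bw$ and $S_\out = S\setminus\bw$, with $|S_\out|\ge 1$ since the neuron is bad. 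Mirroring the three cases in the proof of Lemma~\ref{lem:bound_good}: for $i\in\bw$ we get $\frac12\widehat{\MAJ_s}(S_\In \,\triangle\, \{i\}$ restricted to $[s])$, which has order $|S_\In|-1+2|S_\out|$ or $|S_\In|+1+\ldots$ depending on whether $i\in S_\In$; for $i\notin\bw$, the expectation is $\E[\sigma'(\inner{\bw,\x})\chi_{S_\In}(\x)]\cdot\E_{x_i}[x_i\chi_{S_\out\setminus\{i\}}(\x_{S_\out})]$, which vanishes unless $i\in S_\out$, in which case it is $\frac12\widehat{\MAJ_s}(S_\In)$ up to sign — but note this contributes only for the (at most $|S_\out|$-many) coordinates $i\in S_\out$. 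In all nonzero cases the relevant Fourier level $v$ of $\MAJ_s$ satisfies $v \ge k-1$, with $v$ of the same parity as $k-1$ (so $v$ odd when $k$ is even, matching $\MAJ_s$ being odd), hence $v\in\{k-1,k+1,k+3,\ldots\}$.

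Next I would bound the $\ell_1$ norm. There are at most $s$ nonzero entries from coordinates $i\in\bw$, at most $|S_\out|\le k$ nonzero entries from $i\in S_\out\setminus\bw$, and zeros elsewhere, so $\norm{\E_\x[\cdots]}_1 \le (s+k)\cdot\max_v \tfrac12|\widehat{\MAJ_s}(v)|$ where the max is over admissible levels $v\ge k-1$. It then suffices to show this product is at most $C_{k,s} = \frac12\widehat{\MAJ_s}(k-1)$, for which I would invoke the same estimate used in Lemma~\ref{lem:bound_good} (Theorem~5.22 of \cite{o2014analysis}): $\binom{s}{v}\widehat{\MAJ_s}(v)^2 \le 2\rho(v) \le 2\rho(k-1)$ for all odd $v \ge k-1$, giving $|\widehat{\MAJ_s}(v)| \le \sqrt{2\rho(k-1)}\binom{s}{v}^{-1/2} \le \sqrt{2\rho(k-1)}\binom{s}{k+1}^{-1/2}$ for $v\ge k+1$. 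Comparing with $|C_{k,s}| \ge \frac12\sqrt{\rho(k-1)}\binom{s}{k-1}^{-1/2}$, the ratio $(s+k)\cdot|\widehat{\MAJ_s}(v)|/(2|C_{k,s}|)$ is $O(s)\cdot\sqrt{\binom{s}{k-1}/\binom{s}{k+1}} = O(s)\cdot\sqrt{k(k+1)/((s-k+1)(s-k+2))} = O(k)$, which is \emph{not} automatically $\le 1$.

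This is the main obstacle: a naive level-by-level bound loses a factor of roughly $k$, so I cannot simply bound each coordinate's contribution by the top-level Fourier coefficient. The fix is to be more careful about \emph{which} level each coordinate contributes to. The key observation is that the $s$ contributions from $i\in\bw$ are \emph{not} all at level $k-1$: they are at level $|S_\In\triangle\{i\}|$ (as a subset of $[s]$) shifted by the parity structure, and crucially, since $|S_\out|\ge1$ these are all at level $\ge k+1$ — the level-$(k-1)$ coefficient $\widehat{\MAJ_s}(k-1)$ responsible for $C_{k,s}$ in the \emph{good} case never appears when the neuron is bad. So really $\norm{\E_\x[\cdots]}_1 \le (s + k)\cdot\frac12|\widehat{\MAJ_s}(k+1)|$ at worst (taking the largest admissible coefficient, which by the decay of $\MAJ$ Fourier weight is the level-$(k+1)$ one). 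Then the comparison becomes $(s+k)\cdot|\widehat{\MAJ_s}(k+1)| \overset{?}{\le} |\widehat{\MAJ_s}(k-1)|$, i.e.\ $(s+k)\sqrt{\binom{s}{k-1}/\binom{s}{k+1}} \cdot \sqrt{2} = O(1)$? We have $\sqrt{\binom{s}{k-1}/\binom{s}{k+1}} = \sqrt{k(k+1)/((s-k+1)(s-k+2))} \approx k/s$, so the product is $\approx (s+k)\cdot k/s \approx k$ — still off by $k$. The genuine resolution must exploit that the $s$ entries at level $k+1$ come from coordinates $i\in\bw\setminus S_\In$ and are all \emph{equal} in magnitude to $\frac12|\widehat{\MAJ_s}(|S_\In|+1)|$ which, when $|S_\In| = k - |S_\out| < k$, is a \emph{lower} level than $k+1$ when $|S_\out| \ge 2$... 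I would organize the argument by induction on $|S_\out|$ or by summing a telescoping bound over levels $k-1 < v \le k-1+2|S_\out|$ weighted by the number of coordinates hitting each level (which is $\le \binom{|S_\In|}{\cdot}$-controlled, hence small relative to $s$), using $\sum_v \binom{s}{v}\widehat{\MAJ_s}(v)^2 = 1$ to get the total $\ell_1$ mass under control. I expect the clean statement to be that the \emph{total} squared Fourier mass of $\MAJ_s$ above level $k-1$ times a crude count gives $\norm{\cdot}_1^2 \le (\text{poly in }k)\cdot\rho(k-1)\cdot\binom{s}{k-1}^{-1}$, matching $C_{k,s}^2$ up to a constant once $k < s/4$ is used to absorb the polynomial — and nailing down that constant is the calculation I would defer to the careful write-up.
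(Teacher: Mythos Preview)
Your proposal has a genuine gap: you've overlooked the independence structure that makes almost every coordinate of the gradient vanish outright, and this is why your level-counting argument never closes.

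Recall that $\sigma'(\inner{\bw,\x})$ depends only on the coordinates $\x_\bw$ indexed by $\bw$. Writing $S = S_\In \sqcup S_\out$ with $S_\out = S \setminus \bw$, the parity factors as $\chi_S = \chi_{S_\In}\chi_{S_\out}$, and $\chi_{S_\out}$ is independent of $\sigma'(\inner{\bw,\x})\chi_{S_\In}$. Now for $i \in \bw$, the $i$-th expectation factors as
\[
\E_{\x_\bw}\!\left[\sigma'(\inner{\bw,\x})\,x_i\,\chi_{S_\In}(\x)\right]\cdot \E\!\left[\chi_{S_\out}(\x)\right],
\]
and the second factor is \emph{zero} because $S_\out \ne \emptyset$ (the neuron is bad). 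So all $s$ of your ``$i\in\bw$'' entries are exactly zero --- not level-$(k{+}1)$ Fourier coefficients of $\MAJ_s$. Your expression ``order $|S_\In|-1+2|S_\out|$'' is not meaningful: $S_\out$ lies outside $\bw$ and cannot appear in any Fourier coefficient of $\MAJ_s$, which is a function of the $s$ coordinates in $\bw$ only.

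Similarly, for $i \notin \bw$ you correctly factor out $\E[\sigma'(\inner{\bw,\x})\chi_{S_\In}]$, but the remaining factor is $\E[x_i\,\chi_{S_\out}]$, which is zero unless $S_\out = \{i\}$ exactly. So there is \emph{at most one} nonzero coordinate in the entire gradient, not $|S_\out|$ many. When $|S_\out|\ge 2$, the gradient is identically zero; when $|S_\out|=1$, the single nonzero entry has $|S_\In|=k-1$ and equals $\tfrac12\widehat{\MAJ_s}(k-1) = C_{k,s}$ on the nose. This is the paper's argument, and it sidesteps all the Fourier-level bookkeeping you were attempting.
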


\begin{proof}
    First, assume that $|S \setminus \bw| \ge 2$. In this case, there exist $i,i' \in S$ s.t. $w_i = w_{i'}= 0$ and $i \ne i'$. Fix some $j \in [n]$, and choose some $j' \in {i,i'}$ s.t. $j' \ne j$.
        \begin{align*}
        \E_{\x \sim \{\pm 1\}^n} \left[ \frac{\partial}{\partial w_j}h_\bw(\x) \cdot \chi_S(\x) \right]
        &= \E_{\x \sim \{\pm 1\}^n} \left[\sigma'(\inner{\bw,\x})  \cdot \chi_S(\x) \cdot x_j\right] \\
        &= \E_{\x \sim \{\pm 1\}^n} \left[\sigma'(\inner{\bw, \x})  \cdot \chi_{S \setminus \{j'\}}(\x) \cdot x_j x_{j'}\right] \\
        &= \E_{x_j'} \left[x_{j'}\right] \cdot \E_{\x_{[n] \setminus \{j'\}}} \left[\sigma'(\inner{\bw, \x})  \cdot \chi_{S \setminus \{j'\}}(\x) \cdot x_j\right] = 0
    \end{align*}
    and this gives the required.

    Now, assume that $S \setminus \bw = \{i\}$ for some index $i$. For every $j \ne i$, similarly to the previous analysis, we have:
    \[
    \E_{\x \sim \{\pm 1\}^n} \left[ \frac{\partial}{\partial w_j}h_\bw(\x) \cdot \chi_S(\x) \right] = 0
    \]
    Finally, similarly to the proof of Lemma \ref{lem:bound_good}, we have:
    \begin{align*}
        \E_{\x \sim \{\pm 1\}^n} \left[ \frac{\partial}{\partial w_i}h_\bw(\x) \cdot \chi_S(\x) \right]
        &= \frac{1}{2}\widehat{\MAJ_{s}}(S \setminus \{i\}) = C_{k,s}
    \end{align*}
    and so we get the required.
\end{proof}

\subsubsection{End-to-end result}
We train the following network:
\[
f_{\W, \bb, \bu, \beta}(\x) = \sum_{i=1}^r u_i \sigma\left(\inner{\bw_i, \x} + b_i\right) + \beta
\]
We fix some $k \le s < n$ and initialize the network as follows:
\begin{itemize}
    \item Randomly initialize $\bw_1, \dots, \bw_{r/2} \in \{0,1\}^n$ s.t. $\norm{\bw_i}_1 = s$ (i.e., each $\bw_i$ has $s$ active coordinates), with a uniform distribution over all $\binom{n}{s}$ subsets.
    \item Randomly initialize $b_1, \dots, b_{r/2} \sim \left\lbrace\beta_1, \dots, \beta_{k/2-1} \right\rbrace$ where $\beta_i = \frac{1}{2k} \left(-k+2i+1/16\right)$.
    \item Randomly initialize $u_1, \dots, u_{r/2} \sim \{\pm 1\}$ uniformly at random.
    \item Initialize $\bw_{r/2+1}, \dots, \bw_r, b_{r/2+1}, \dots, b_r$ s.t. $\bw_i = \bw_{i-r/2}$ and $b_i = b_{i-r/2}$ (symmetric initialization).
    \item Initialize $u_{r/2+1}, \dots, u_{r}$ s.t. $u_i = -u_{i-r/2}$.
    \item Initialize $\beta = 0$
\end{itemize}

Let $\ell(\hat{y},y) = \max (1-y\hat{y},0)$ be the hinge-loss function. Given some distribution $\cd$ over $\cx \times \{\pm 1\}$, define the loss of $f$ over the distribution by:
\[
L_\cd(f) = \E_{(\x,y)\sim \cd}\left[\ell(f(\x), y)\right]
\]
Similarly, given a sample $\cs \subseteq \cx \times \{\pm 1\}$, define the loss of $f$ on the sample by:
\[
L_\cs(f) = \frac{1}{\abs{\cs}} \sum_{(\x,y) \in \cs} \ell(f(\x),y)
\]
We train the network by gradient descent on a sample $\cs$ with $\ell_2$ regularization (weight decay):
\[
\theta^{(t+1)} = (1-\lambda^{(t)})\theta^{(t)} - \eta^{(t)} \nabla L_\cs(f_{\theta^{(t)}})
\]

We allow choosing learning rate $\eta$, the weight decay $\lambda$ differently for each layer, separately for the weights and biases, and for each iteration.

\begin{lemma}
    \label{lem:grad_concentration}
    Fix some $\tau > 0, \delta > 0$. Let $\cs$ be a set of $m$ examples chosen i.i.d. from $\cd$. Then, if $m \geq \frac{4\log(4nr/\delta)}{\tau^2}$, with probability at least $1-\delta$ over the choice of $\cs$, it holds that:
    \[
    \norm{\nabla_{\W,\bb} L_\cd(f_{\W,\bb,\bu,\beta})-\nabla_{\W,\bb} L_\cs(f_{\W,\bb,\bu,\beta})}_\infty \le \tau
    \]
\end{lemma}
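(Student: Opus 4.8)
The plan is to bound each coordinate of the gradient difference by a scalar Hoeffding (or Bernstein) inequality, and then union-bound over all $nr + r$ weight and bias coordinates. First I would write the population and empirical gradients as averages of i.i.d. terms: for a fixed parameter setting $(\W,\bb,\bu,\beta)$ and a single example $(\x,y)$, the contribution to $\frac{\partial}{\partial w_{ij}} \ell(f_\theta(\x),y)$ is $-y\, u_i\, \sigma'(\inner{\bw_i,\x}+b_i)\, x_j \cdot \ind\{y f_\theta(\x) < 1\}$ (using that the hinge loss has derivative $-y$ times the model gradient on the active region), and similarly for the bias coordinates with $x_j$ replaced by $1$. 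The key structural observation is that each such term is \emph{bounded in absolute value by $1$}: indeed $|y| = 1$, $|u_i| = 1$ (the $u_i$ are initialized in $\{\pm 1\}$, and more generally one is tracking a normalization where $\norm{\nabla h_\theta}_\infty \le 1$), $\sigma'(\cdot) \in \{0,1\}$, $|x_j| = 1$, and the indicator is in $\{0,1\}$. So each coordinate of the per-example gradient is a bounded random variable in $[-1,1]$, and $\nabla L_\cd - \nabla L_\cs$ is, coordinatewise, the deviation of an empirical mean of $m$ such i.i.d. bounded variables from its expectation.

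Next I would apply Hoeffding's inequality to a single coordinate: for bounded-in-$[-1,1]$ i.i.d. summands, $\prob{\abs{\text{(empirical mean)} - \text{(mean)}} > \tau} \le 2\exp(-m\tau^2/2)$. There are at most $nr$ weight coordinates and $r$ bias coordinates, so at most $n r + r \le 2nr$ coordinates total; a union bound gives failure probability at most $2 \cdot 2nr \cdot \exp(-m\tau^2/2) = 4nr\exp(-m\tau^2/2)$. Setting this $\le \delta$ is equivalent to $m \ge \frac{2\log(4nr/\delta)}{\tau^2}$, which is implied by the stated hypothesis $m \ge \frac{4\log(4nr/\delta)}{\tau^2}$ (the extra factor of $2$ gives slack, e.g.\ absorbing the difference between $[-1,1]$ and $[0,1]$ ranges, or using a slightly looser Hoeffding constant). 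This establishes the claimed $\ell_\infty$ bound with probability $\ge 1-\delta$.

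The one point that needs a little care — and the only real obstacle — is justifying that each per-example gradient coordinate is bounded by $1$, since the hinge loss is only piecewise differentiable and the model $f_\theta$ has a bias term $\beta$ and second-layer weights $u_i$ that in principle could be large after training. The resolution is that the lemma is a statement about the gradient \emph{at a fixed $\theta$} (it will be invoked at the specific initialization or at controlled intermediate iterates where $\norm{\bu}_\infty$ and the relevant norms are bounded), so the boundedness holds in the regime of interest; at points of non-differentiability of the hinge we may take any subgradient, which does not affect the bound. I would state the boundedness as a short preliminary claim, note that the ReLU derivative is taken to be $0$ or $1$ at the kink, and then the concentration argument goes through verbatim. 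Everything else is a routine application of Hoeffding plus a union bound, so I would not belabor the constants.
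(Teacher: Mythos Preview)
Your proposal is correct and follows essentially the same approach as the paper: coordinatewise Hoeffding on the bounded per-example gradient terms, followed by a union bound over the $nr+r \le 2nr$ coordinates. The paper's proof is actually terser than yours (it does not spell out the boundedness of each coordinate), and uses the slightly looser Hoeffding constant $\exp(-m\tau^2/4)$, which is exactly what produces the stated threshold $m \ge \frac{4\log(4nr/\delta)}{\tau^2}$; your observation that $\frac{2\log(4nr/\delta)}{\tau^2}$ already suffices is correct.
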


\begin{proof}
Denote by $\theta \in \reals^{nr+r}$ the set of all parameters in $\W,\bb$.
For every $i \in [nr+r]$, from Hoeffding's inequality, we have:
\[
\Pr\left(\abs{\frac{\partial}{\partial \theta_i}L_\cd(f_\theta)-\frac{\partial}{\partial \theta_i}L_\cs(f_\theta)} \ge \tau\right) \le 2 \exp(-m\tau^2/4) \le \frac{\delta}{2nr}
\]
and the required follows from the union bound.
\end{proof}

Given some initialization of $\W,\bb$, for every $j$ denote by $I_j \subseteq [r/2]$ the set of indices of neurons with \emph{good} weights and bias equal to $\beta_j$.
We say that an initialization is $r'$-\emph{good} if for all $j$ we have $r'/2 \le \abs{I_j} \le 2r'$.

Let $g$ be some vector-valued function. We define:
\[
\norm{g}_{\infty,2} = \sup_{x}\norm{g(x)}_2
\]
For some mapping $\psi: \cx \to \reals^r$ and some $B > 0$, denote by $\ch_{\psi,B}$ the class of linear functions of norm at most $B$ over the mapping $\psi$:
\[
\ch_{\psi,B} = \{h_{\psi,\bu} ~:~ \norm{\bu}_2 \le B\}
\]
where $h_{\psi,\bu}(\x) = \inner{\psi(\x), \bu}$.

Denote by $\phi^{(t)}$ the output of the first layer of $f_{\W,\bb,\bu,\beta}$ after $t$ iterations of GD\footnote{We assume $1$ is appended to the vector for allowing bias}.

\begin{lemma}
    \label{lem:first_step}
    Fix some $\delta > 0$. Fix some $r'$-\emph{good} initialization $\W,\bb$. Let $\eta = \frac{1}{2k}\abs{C_{k,s}}^{-1}$, and let $\tau  \le \frac{1}{\eta 16kn}$. Assume that $m \ge \frac{4\log(4nr/\delta)}{\tau^2}$. Then, there exists some mapping $\psi$ s.t. the following holds:
    \begin{enumerate}
        \item $\norm{\psi}_{\infty,2} \le \sqrt{8kr'}$
        \item There exists $\bu^\star$ s.t. $L_\cd(h_{\psi,\bu^\star}) \le \frac{\sqrt{4kr'}B_k}{\sqrt{s}}$, with  $\norm{\bu^\star}_2 \le B_k$ for some constant $B_k$.
        \item With probability at least $1-\delta$ over the choice of $\cs \sim \cd^m$, we have 
        $$\norm{\psi-\phi^{(1)}}_{\infty,2} \le 4kr'n \eta \tau$$
    \end{enumerate}
\end{lemma}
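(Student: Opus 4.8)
Let $\psi$ be the first-layer feature map obtained by performing one \emph{population} gradient step (rather than the empirical one) from the given initialization, using the convenient layer- and iteration-specific hyperparameters allowed by the training scheme: weight decay $\lambda^{(0)}=1$ on the first-layer \emph{weights}, and $\eta^{(0)}=0,\ \lambda^{(0)}=0$ on the first-layer \emph{biases}, with first-layer weight learning rate $\eta=\tfrac1{2k}|C_{k,s}|^{-1}$. Since the symmetric initialization gives $f_{\theta^{(0)}}\equiv 0$, the hinge-loss derivative at initialization is just $\ell'(0,y)=-y$, so $\nabla_{\bw_i}L_\cd(f_{\theta^{(0)}})=-u_i\,\E_\x[\chi_S(\x)\,\sigma'(\inner{\bw_i,\x}+b_i)\,\x]$; because $\bw_i\in\{0,1\}^n$ with $\norm{\bw_i}_1=s$ odd and $|b_i|<1$, we have $\sigma'(\inner{\bw_i,\x}+b_i)=\sigma'(\inner{\bw_i,\x})$, so this is exactly the quantity in Lemmas~\ref{lem:bound_good} and~\ref{lem:bound_bad}. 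Hence after the step $\bw_i^{\psi}=\eta u_i(C_{k,s}\ind_S+c_{k,s}\ind_{\bw_i\setminus S})$ for a \emph{good} neuron, and $\norm{\bw_i^{\psi}}_1\le\eta|C_{k,s}|=\tfrac1{2k}$ for a \emph{bad} neuron, with all biases unchanged; $\psi$ is the resulting first-layer map (with a constant $1$ appended).

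\textbf{Items (1) and (3): bad neurons are inert.} Every bias level satisfies $\beta_j\le\beta_{k/2-1}=-\tfrac{31}{32k}<0$, so a bad neuron's post-step preactivation is at most $\norm{\bw_i^{\psi}}_1\norm{\x}_\infty+\beta_j\le\tfrac1{2k}+\beta_j<0$ and it contributes $0$ to $\psi$. Thus $\psi(\x)$ is supported on the constant feature and the at most $2\sum_j|I_j|\le 2kr'$ good neurons (counting mirrors, via the $r'$-good hypothesis), each with output bounded by a constant ($\tfrac1{2k}|{\textstyle\sum_{\ell\in S}}x_\ell|\le\tfrac12$, contamination $\le\tfrac1{2k}\cdot\tfrac{4k}{s}(s-k)$ by the Fourier gap $|c_{k,s}/C_{k,s}|\le 4k/s$ of Lemma~\ref{lem:bound_good}, and $|\beta_j|<\tfrac12$), which gives $\norm{\psi}_{\infty,2}\le\sqrt{8kr'}$ by counting (item~1). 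For item~(3): by Lemma~\ref{lem:grad_concentration}, with $m\ge\tfrac{4\log(4nr/\delta)}{\tau^2}$ the empirical and population first-layer gradients at $\theta^{(0)}$ differ by $\le\tau$ coordinatewise with probability $\ge1-\delta$, so $\norm{\bw_i^{(1)}-\bw_i^{\psi}}_\infty\le\eta\tau$ for all $i$ (biases are identical, being frozen); the hypothesis $\tau\le\tfrac1{16\eta kn}$ then yields $|\inner{\bw_i^{(1)}-\bw_i^{\psi},\x}|\le n\eta\tau\le\tfrac1{16k}$, so bad neurons still have negative preactivation and output $0$ in $\phi^{(1)}$ too. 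Hence $\psi-\phi^{(1)}$ is supported on the $\le2kr'$ good neurons, and on each of them $|\psi_i-\phi^{(1)}_i|\le\norm{\bw_i^{(1)}-\bw_i^{\psi}}_1\,\norm{\x}_\infty\le n\eta\tau$ by $1$-Lipschitzness of $\sigma$, so $\norm{\psi-\phi^{(1)}}_{\infty,2}\le\sqrt{2kr'}\,n\eta\tau\le 4kr'n\eta\tau$.

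\textbf{Item (2): linear realizability.} Writing $z:=\sum_{\ell\in S}x_\ell$, a good neuron at bias level $j$ with sign $u_i$ computes $\sigma\bigl(\tfrac{u_i\eps_C}{2k}(z+e_i(\x))+\beta_j\bigr)$, where $\eps_C:=\sgn(C_{k,s})$ and $|e_i(\x)|\le\tfrac{4k}{s}\bigl|\sum_{\ell\in\bw_i\setminus S}x_\ell\bigr|$; moreover each good neuron and its symmetric mirror become, respectively, a ``$z\ge$'' and a ``$z\le$'' threshold feature at the \emph{same} level, so the sign-split sets satisfy $|I_j^{+}|=|I_j^{-}|=|I_j|\in[r'/2,2r']$. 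Now $\chi_S(\x)=g(z)$ for the univariate ``parity comb'' $g(z):=(-1)^{(k-z)/2}$ on $\{-k,-k+2,\dots,k\}$, so we take $\bu^\star$ to place $g(0)$ on the constant feature and $\pm a_j/|I_j|$ on the level-$j$, sign-$\pm$ good neurons, where $(a_j)$ are coefficients chosen so that the corresponding combination of the available threshold features (plus the constant) reconstructs $g$ up to a residual negligible against the contamination. This gives $\norm{\bu^\star}_2^2\le g(0)^2+\sum_j 4a_j^2/r'\le B_k^2$ for a constant $B_k=B_k(k)$; on each input, $h_{\psi,\bu^\star}(\x)$ equals this combination up to an additive error of order $\tfrac1k\sum_j a_j\max_i|e_i(\x)|$, whose expectation over $\x$ is $O(\poly(k)/\sqrt s)$ up to a $\sqrt{\log(kr')}$ factor (Chernoff bound on $\sum_{\ell\in\bw_i\setminus S}x_\ell$ over the $O(kr')$ good neurons); $1$-Lipschitzness of the hinge loss then yields $L_\cd(h_{\psi,\bu^\star})\le\tfrac{\sqrt{4kr'}\,B_k}{\sqrt s}$.

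\textbf{Main obstacle.} The delicate step is item~(2): one must (a) identify precisely which post-step neurons realize which threshold feature — the good/mirror pairing, and the fact that the small biases $\beta_j$ act only through $\sigma$, not through $\sigma'$, at initialization — and (b) carry out the approximation-theoretic construction of $\bu^\star$, choosing the coefficients $(a_j)$ so that the threshold features available at the levels $\beta_j$ reconstruct the parity comb $g(z)$ with residual on the $O(1/\sqrt s)$ scale once the per-neuron Fourier-gap contamination $e_i$ is folded in; this matching, and the bookkeeping of the $k$-dependent constants it produces, is where the real work lies. Items~(1) and~(3) are then routine once one has established that bad neurons contribute $0$ both before and after the empirical step — which is exactly why the biases $\beta_j$ are taken negative and why the bound $\tau\le 1/(16\eta kn)$ is imposed.
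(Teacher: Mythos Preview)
Your proposal is essentially the paper's approach: define $\psi$ as the population first-step feature map (with weight decay $1$ on first-layer weights, frozen biases), use Lemmas~\ref{lem:bound_good}/\ref{lem:bound_bad} to see that bad neurons have preactivation $\le \tfrac{1}{2k}+\beta_j<0$ and hence vanish, that good neurons become thresholds in $z=\sum_{j\in S}x_j$ contaminated by an $O(1/s)$ term, and use Lemma~\ref{lem:grad_concentration} to transfer all this to $\phi^{(1)}$ with the stated $n\eta\tau$ slack. Your observation that the small biases do not affect $\sigma'$ at initialization (because $s$ is odd so $\inner{\bw_i,\x}$ is a nonzero odd integer) is exactly the mechanism the paper relies on implicitly.

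The one place where your route diverges is item~(2). The paper introduces an explicit ``clean'' intermediate $\psi^\star$ (good neurons with the contamination dropped), proves $L_\cd(h_{\psi^\star,\bu^\star})=0$ by a direct linear-independence argument on the vectors $\bv(s)=(1,v_1^+(s),\dots,v_{k/2-1}^+(s),v_1^-(s),\dots,v_{k/2-1}^-(s))$, and then bounds $|L_\cd(h_{\psi,\bu})-L_\cd(h_{\psi^\star,\bu})|$ by Cauchy--Schwarz and Jensen: $\le \|\bu\|\sqrt{\sum_i\E[(\psi_i-\psi_i^\star)^2]}\le \|\bu\|\sqrt{4kr'}\cdot|\eta c_{k,s}|\sqrt{s}\le \tfrac{\|\bu\|\sqrt{4kr'}}{\sqrt s}$, which is exactly the stated bound. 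Your route---bounding the pointwise error by $\max_i|e_i(\x)|$ and then invoking a Chernoff/union bound over the $O(kr')$ good neurons---also works and in fact yields a \emph{smaller} bound $O_k(\sqrt{\log(kr')}/\sqrt s)$, comfortably below $\sqrt{4kr'}B_k/\sqrt s$; but the $L^2$/Cauchy--Schwarz route is cleaner, needs no union bound, and lands on the lemma's constant directly. For the ``reconstruct $g$'' step you correctly flag as the obstacle, the paper's argument is simply: the matrix $V$ with rows $\bv(s)$, $s\in\{-k,\dots,k\}$, has full rank, so some $\nu$ with $\|\nu\|$ depending only on $k$ solves $V\nu=(\chi_S)$; spreading $\nu_j^\pm$ uniformly over $I_j^\pm$ then gives $\|\bu^\star\|\le\|\nu\|=:B_k$.
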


\begin{proof}
    We construct two mappings $\psi,\psi^\star$ as follows.
    \begin{itemize}
        \item We will denote $\psi_0(\x) = \psi^\star_0(\x) = 1$ to allow a bias term.
        \item For every $i$, if $\bw_i$ is a bad neuron, we set $\psi_i(\x) = \psi_i^\star(\x) = 0$.
        \item For every good neuron $\bw_i$ s.t. $i \in I_j$, we set:
        \begin{itemize}
            \item $\psi^\star_i(\x) = \sigma\left(\eta C_{k,s}\sum_{j' \in S}x_{j'} + \beta_j\right)$
            \item $\psi_i(\x) = \sigma\left(\eta C_{k,s}\sum_{j' \in S}x_{j'} +\eta c_{k,s} \sum_{j' \in \bw_i \setminus S} x_{j'}+ \beta_j\right)$
            \item $\psi^\star_{i+r/2}(\x) = \sigma\left(-\eta C_{k,s} x_{j'} + \beta_j\right)$
            \item $\psi_{i+r/2}(\x) = \sigma\left(-\eta C_{k,s}\sum_{j' \in S}x_{j'} +\eta c_{k,s} \sum_{j' \in \bw_i \setminus S} x_{j'}+ \beta_j\right)$
        \end{itemize}
    \end{itemize}
    
    We will show that $\psi^\star$ achieves loss zero, and that $\psi$ approximates it. We assume $C_{k,s} > 0$ and the case of $C_{k,s} < 0$ is derived similarly.

    First, notice that from Lemma \ref{lem:bound_good},
    \[
    \abs{\eta c_{k,s}} = \frac{1}{2k}\frac{\abs{c_{k,s}}}{\abs{C_{k,s}}} \le \frac{2}{s}
    \]

    \textbf{Claim:} for every $\bu$, $\abs{L_\cd(h_{\psi,\bu}) - L_\cd(h_{\psi^\star,\bu})} \le \frac{\norm{\bu}\sqrt{4kr'}}{\sqrt{s}}$

    \textbf{Proof:} Observe that
    \begin{align*}
        \abs{L_\cd(h_{\psi,\bu}) - L_\cd(h_{\psi^\star,\bu})} &\le \E_\cd\left[\abs{\ell(h_{\psi,\bu}(\x),y)-\ell(h_{\psi^\star,\bu}(\x),y)}\right] 
        \le \E_\cd \left[\abs{h_{\psi^\star,\bu}(\x) - h_{\psi,\bu}(\x)}\right] \\
        &= \E_\cd \left[\abs{\inner{\psi(\x)-\psi^\star(\x),\bu}}\right]
        \stackrel{\mathrm{C.S}}{\le} \norm{\bu} \E_\cd \left[\norm{\psi(\x)^\star-\psi(\x)}\right] \\
        &\stackrel{\mathrm{Jensen}}{\le} \norm{\bu} \sqrt{\E_\cd\left[\norm{\psi(\x)^\star-\psi(\x)}^2\right]} 
        = \norm{\bu}\sqrt{\sum_{i}\E_\cd\left[(\psi_i^\star(\x)-\psi_i(\x))^2\right]} \\
        &\le \norm{\bu}\sqrt{\sum_{i~\mathrm{is~good}} \E_\cd \left[\left(\eta c_{k,s}\sum_{j' \in \bw_i\setminus S}x_{j'}\right)^2\right]}
        \le \norm{\bu} \sqrt{4kr'} \abs{\eta c_{k,s}} \sqrt{s}\\
        &\le \frac{\norm{\bu}\sqrt{4kr'}}{\sqrt{s}}
    \end{align*}

    \textbf{Claim:} There exists $\bu^\star$ with norm $\norm{\bu^\star} \le B_k$ and $L_\cd(h_{\psi^\star,\bu^\star}) = 0$.
    
    \textbf{Proof:} For every $\x$, denote $s_\x = \sum_{j\in S}x_j$,
        and observe that $s_\x \in \{-k, -k+2,\dots, k-2,k\} =: \cs$ and $\chi_S(\x) = s_\x \mod 2$.
        For every $j$, denote $v_{j}^+(s) = \sigma(\frac{1}{2k}s+\beta_j)$ and $v_{j}^-(s) = \sigma(-\frac{1}{2k}s+b_j)$.
        Then, for every $s \in \cs$ denote $\bv(s) = (1,v_1^+(s), \dots, v_{k/2-1}^+,v_{1}^-(s), \dots, v_{k/2-1}^-(s))\in \reals^k$.
        Observe that $V = \{\bv(s)\}_{s\in \cs} \in \reals^{k \times k}$ has linearly independent rows,
        and therefore there exists $\nu = (\nu_0, \nu_{1}^+, \dots, \nu_{k/2-1}^+, \nu_{1}^-, \dots, \nu_{k/2-1}^-)\in \reals^k$ s.t. $\bv(s)^\top \nu = s \mod 2$.
        Now, define $\bu^\star$ s.t. for every bad $i$ we set $u^\star_i = 0$,
        and for every $i \in I_j$ we set $u^\star_i = \frac{1}{\abs{I_j}}\nu_{j}^+$ and
        $u^\star_{i+r/2,-} = \frac{1}{\abs{I_j}}\nu_{j}^-$, and set $u^\star_0 = \nu_0$.
        Observe that for every $\x \in \cx$ we have:
        \begin{align*}
            \inner{\psi(\x), \bu^\star} = \nu_0 + \sum_{j} \frac{1}{\abs{I_j}} \sum_{i \in I_j} (v_j^+(s_\x)\nu_j^+ + v_j^-(s_\x)\nu_j^-)
            = \inner{\bv(s_\x)^\top \nu} = s_\x \mod 2 = \chi_S(\x)
        \end{align*}
        and therefore $L_\cd(h_{\psi,\bu^\star}) = 0$. Additionally,
        observe that
        \[
            \norm{\bu^\star}^2 = \nu_0^2+ \sum_j \frac{1}{\abs{I_j}^2}((\nu_j^+)^2 + (\nu_j^-)^2)  \le \norm{\nu}_2^2
        \]
    Now we prove the statements in the main lemma:
    \begin{enumerate}
    \item For every $\x \in \cx$ we have
        \begin{align*}
            \norm{\psi(\x)}^2_2 = \sum_{j} \sum_{i \in I_j} (\psi_i(\x)^2+\psi_{i+r/2}(\x)^2) \le 4 \sum_{j} \abs{I_j} \le 8kr'
        \end{align*}
    \item Follows from the two previous claims.
    \item Assume we choose $\lambda = 1$ for the weights of the first layer, $\lambda=0$ for the biases of the first layer,
    $\eta = \frac{1}{2k}\abs{C_{k,s}}^{-1}$ for the weights of the first layer, and $\eta = 0$ for all other parameters.
    From Lemma \ref{lem:grad_concentration}, w.p. at least $1-\delta$ we have:
    \[
    \norm{\nabla_{\W,\bb} L_\cd(f_{\W,\bb,\bu,\beta})-\nabla_{\W,\bb} L_\cs(f_{\W,\bb,\bu,\beta})}_\infty \le \tau
    \]
    Denote by $\bw^{(1)}_i$ the $i$-th weight after the first gradient step,
    and denote $\bw^\star_i := -\eta\nabla_{\bw_i}L_\cd(f_{\W,\bb,\bu,\beta})$
    and $\widehat{\bw}_i := -\eta\nabla_{\bw_i}L_\cs(f_{\W,\bb,\bu,\beta})$.
    By the choice of $\lambda$, we get $\bw_i^{(1)} = \widehat{\bw}_i$.
    Observe that for all $i$:
    \[
        \norm{\widehat{\bw}_i-\bw^\star_i}_\infty
        = \eta \norm{\nabla_{\bw_i}L_\cs(f_{\W,\bb,\bu,\beta})-\nabla_{\bw_i}L_\cd(f_{\W,\bb,\bu,\beta})} \le \eta \tau
    \]
    \textbf{Claim}: For all $i,j$, if $w^\star_{i,j} = 0$, then $\abs{w^\star_{i,j}-w^{(1)}_{i,j}} \le  \frac{1}{16kn}$.
    
    \textbf{Proof}: We have $\abs{\widehat{w}_{i,j}} \le \eta \tau$, and the claim follows from the fact that $\eta \tau \le \frac{1}{16kn}$.

    First, consider the case where $\bw_i$ is a \emph{bad} neuron.
    In this case, by Lemma \ref{lem:bound_bad}, we have $\norm{\bw^\star_i}_1 \le \frac{1}{2k}$ and $\norm{\bw_i^\star}_0 \le 1$, and from the previous claim we get $\norm{\bw^{(1)}_i-\bw^\star_i}_1 \le \eta \tau + \frac{1}{16k}$.
    Therefore, for all $\x \in \cx$ we get:
    \begin{align*}
        \abs{\inner{\bw^{(1)}_i,\x}} = \abs{\inner{\bw_i^{(1)}-\bw^\star_i+\bw^\star_i, \x}}\le \left(\norm{\bw^\star_i}_1 + \norm{\bw_i^\star-\bw_i^{(1)}}_1\right) \norm{\x}_\infty \le \frac{10}{16k} + \eta\tau
    \end{align*}

    Since at initialization we have $b_i \le -\frac{15}{16k}$, and we have $\eta \tau \le \frac{1}{16k}$, for all $\x \in \cx$:
    \begin{align*}
        \sigma\left(\inner{\bw_i^{(1)},\x}+b_i\right) = 0 = \psi_i(\x)
    \end{align*}

    Now, assume that $\bw_i$ is a \emph{good} neuron.
    In this case, by Lemma \ref{lem:bound_good}, we have
    $$\bw^\star_{i,j} = \begin{cases} \eta C_{k,s} & j \in S \\ w_{i,j} \eta c_{k,s} & j \notin S\end{cases}$$
    Observe that $\psi_i(\x) = \sigma\left(\inner{\bw_i^\star,\x}+b_i\right)$, and therefore:
    \[
    \abs{\psi_i(\x) - \sigma\left(\inner{\bw^{(1)}_i,\x}+b_i\right)} \le \abs{\inner{\bw_i^{(1)}-\bw_i^\star,\x}} \le \norm{\bw_i^{(1)}-\bw_i^\star}_1 \le n\eta \tau
    \]
    
    Similarly, in this case we will get $\abs{\psi_{i+r/2}(\x) - \sigma\left(\inner{\bw^{(1)}_{i+r/2}, \x} + b_{i+r/2}\right)}\le n \eta \tau$.
    Now the required follows from all we showed.
    \end{enumerate}

\end{proof}
 
\begin{lemma}
\label{lem:bound_loss_diff}
    Fix some mappings $\psi, \psi'$ and some $\bw$. Then, for for every distribution $\cd$:
    \[
    \abs{L_\cd(h_{\psi,\bw})-L_\cd(h_{\psi', \bw})} \le \norm{\bw}\norm{\psi-\psi'}_{\infty,2}
    \]
    and for every sample $\cs$:
    \[
    \abs{L_\cs(h_{\psi,\bw})-L_\cs(h_{\psi', \bw})} \le \norm{\bw}\norm{\psi-\psi'}_{\infty,2}
    \]
\end{lemma}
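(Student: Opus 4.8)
The plan is to prove a pointwise (uniform-in-$\x$) bound on the difference of the predictors and then integrate. First I would fix a sample point $(\x,y)$ and compare the hinge losses $\ell(\inner{\psi(\x),\bw},y)$ and $\ell(\inner{\psi'(\x),\bw},y)$. Since $\ell(\hat y,y)=\max(1-y\hat y,0)$ is $1$-Lipschitz in its first argument (for $y\in\{\pm1\}$ its subgradient in $\hat y$ lies in $\{0,-y\}\subseteq[-1,1]$), we get
\[
\abs{\ell(\inner{\psi(\x),\bw},y)-\ell(\inner{\psi'(\x),\bw},y)}\le \abs{\inner{\psi(\x),\bw}-\inner{\psi'(\x),\bw}}=\abs{\inner{\psi(\x)-\psi'(\x),\bw}}.
\]

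Next I would apply Cauchy--Schwarz and then pass to the supremum over inputs: $\abs{\inner{\psi(\x)-\psi'(\x),\bw}}\le\norm{\bw}\,\norm{\psi(\x)-\psi'(\x)}_2\le\norm{\bw}\,\norm{\psi-\psi'}_{\infty,2}$, where the last inequality is just the definition of $\norm{\cdot}_{\infty,2}$ as $\sup_\x\norm{\cdot(\x)}_2$. This bound is uniform in $(\x,y)$, hence already stronger than the integrated statement we need.

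Finally I would integrate. For the population version, by the triangle inequality $\abs{L_\cd(h_{\psi,\bw})-L_\cd(h_{\psi',\bw})}\le\E_{(\x,y)\sim\cd}\abs{\ell(\inner{\psi(\x),\bw},y)-\ell(\inner{\psi'(\x),\bw},y)}$, and plugging in the pointwise bound gives $\le\norm{\bw}\,\norm{\psi-\psi'}_{\infty,2}$. For the empirical version the argument is verbatim the same, with the expectation over $\cd$ replaced by the average over $\cs$. In fact this lemma is essentially a restatement of the first displayed \textbf{Claim} inside the proof of Lemma~\ref{lem:first_step} (the ``$\abs{L_\cd(h_{\psi,\bu})-L_\cd(h_{\psi^\star,\bu})}$'' estimate), so one could alternatively just cite that chain of inequalities.

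I do not expect any real obstacle: the only step requiring (minimal) care is the $1$-Lipschitzness of the hinge loss in its prediction argument, which is immediate; the rest is the triangle inequality plus Cauchy--Schwarz.
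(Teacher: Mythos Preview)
Your proposal is correct and follows essentially the same approach as the paper: the paper's proof is exactly the chain ``triangle inequality on the expectation $\to$ $1$-Lipschitzness of $\ell$ $\to$ Cauchy--Schwarz $\to$ pass to the sup via $\norm{\cdot}_{\infty,2}$,'' and then repeats verbatim for the empirical loss. Your observation that this mirrors the first Claim inside the proof of Lemma~\ref{lem:first_step} is also accurate.
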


\begin{proof}
    Observe that, since $\ell$ is $1$-Lipschitz:
    \begin{align*}
        |L_{\cd}(h_{\psi',\bw})-L_{\cd}(h_{\psi,\bw})| &\le \E_{(\x,y) \sim \cd}\left[\abs{\ell(h_{\psi',\bw}(\x),y)-\ell(h_{\psi',\bw}(\x),y)}\right] \\
        &\le \E_{\cd} \left[\abs{h_{\psi',\bw}(\x)-h_{\psi,\bw}(\x)}\right] \le \E_{\cd} \left[\abs{\inner{\psi'(\x)-\psi(\x),\bw}}\right] \\
        &\le \E_{\cd}\left[\norm{\psi'(\x)-\psi(\x)} \norm{\bw}\right] \le \norm{\bw}\norm{\psi-\psi'}_{\infty,2}
    \end{align*}
    and similarly we get: $$\abs{L_\cs(h_{\psi',\bw})-L_\cs(h_{\psi,\bw})}\le \norm{\bw} \norm{\psi-\psi'}_{\infty,2}$$    
\end{proof}

\begin{lemma}
\label{lem:generalization_bound}
    Fix some mapping $\psi$, and let $\cs$ be a sample of size $m$ sampled i.i.d. from $\cd$. Then, with probability at least $1-\delta$ over the choice of $\cs$, for every $\psi'$ and for every $h \in \ch_{\psi', B}$, we have:
    \[
    L_\cd(h) \le L_\cs(h) + \frac{(2B\norm{\psi}_{\infty,2}+1)\sqrt{2\log(2/\delta)}}{\sqrt{m}} + 2B \norm{\psi-\psi'}_{\infty,2}
    \]
\end{lemma}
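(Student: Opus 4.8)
The plan is to split the argument into two stages: first prove an ordinary uniform-convergence bound for the \emph{fixed} reference class $\ch_{\psi,B}$, then transfer it to an arbitrary feature map $\psi'$ using the perturbation estimate of Lemma~\ref{lem:bound_loss_diff}. The point worth flagging at the outset is that the only randomness to control is over the sample $\cs$, and the event I will condition on --- uniform convergence over $\ch_{\psi,B}$ --- does not reference $\psi'$ at all; the $\psi'$ dependence enters only through the deterministic quantity $\norm{\psi-\psi'}_{\infty,2}$, so there is no union bound over feature maps.

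For the first stage, I would note that every $h = h_{\psi,\bw}\in\ch_{\psi,B}$ has $\abs{h(\x)}\le\norm{\bw}\,\norm{\psi(\x)}_2\le B\norm{\psi}_{\infty,2}$ by Cauchy--Schwarz, so on this class the hinge loss $\ell(h(\x),y)=\max(1-yh(\x),0)$ is bounded by $1+B\norm{\psi}_{\infty,2}$ and is $1$-Lipschitz in its first argument. Then the standard recipe applies: symmetrization bounds $\E_\cs[\sup_{h\in\ch_{\psi,B}}(L_\cd(h)-L_\cs(h))]$ by twice the Rademacher complexity of the composed loss class; Talagrand's contraction lemma reduces this to the Rademacher complexity of $\ch_{\psi,B}$, which for norm-$B$ linear functionals over features of Euclidean norm at most $\norm{\psi}_{\infty,2}$ is at most $B\norm{\psi}_{\infty,2}/\sqrt m$ via the usual Cauchy--Schwarz/Jensen estimate; and McDiarmid's inequality, with bounded-difference constant $(1+B\norm{\psi}_{\infty,2})/m$, upgrades the expectation bound to a high-probability one. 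Collecting the terms and doing a little constant bookkeeping (e.g. $2+\sqrt{\log(1/\delta)/2}\le 2\sqrt{2\log(2/\delta)}$) yields, with probability $\ge 1-\delta$ over $\cs$, that $L_\cd(h)\le L_\cs(h)+(2B\norm{\psi}_{\infty,2}+1)\sqrt{2\log(2/\delta)}/\sqrt m$ for all $h\in\ch_{\psi,B}$.

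For the second stage I would work on the probability-$(1-\delta)$ event just obtained, fix any $\psi'$ and any $h=h_{\psi',\bw}\in\ch_{\psi',B}$ (so $\norm{\bw}\le B$), and compare $h$ with its twin $h_{\psi,\bw}\in\ch_{\psi,B}$. Two applications of Lemma~\ref{lem:bound_loss_diff} give $L_\cd(h_{\psi',\bw})\le L_\cd(h_{\psi,\bw})+B\norm{\psi-\psi'}_{\infty,2}$ and $L_\cs(h_{\psi,\bw})\le L_\cs(h_{\psi',\bw})+B\norm{\psi-\psi'}_{\infty,2}$; chaining these around the stage-one bound applied to $h_{\psi,\bw}$ gives precisely the claimed inequality, and it holds simultaneously over all $\psi'$ and all $h\in\ch_{\psi',B}$ since the conditioning event never depended on $\psi'$.

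I do not expect a real obstacle --- this is a routine Rademacher argument glued to Lemma~\ref{lem:bound_loss_diff}. The one subtlety to get right is that the hinge loss is unbounded in general, so the range estimate feeding McDiarmid must be taken over $\ch_{\psi,B}$, whose outputs are controlled by $\norm{\psi}_{\infty,2}$, and \emph{not} over $\ch_{\psi',B}$, whose features could be arbitrarily large; this is legitimate exactly because the stage-two reduction routes everything through predictors in $\ch_{\psi,B}$. The only other effort is the constant-chasing needed to land on the stated form of the error term.
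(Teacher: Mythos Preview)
Your proposal is correct and follows essentially the same route as the paper: the paper obtains the stage-one uniform-convergence bound for $\ch_{\psi,B}$ by invoking Theorem~26.12 of Shalev-Shwartz \& Ben-David (which packages the Rademacher/contraction/McDiarmid argument you sketch), and then performs exactly your stage-two chaining via two applications of Lemma~\ref{lem:bound_loss_diff}. Your observation that the high-probability event depends only on $\psi$, so no union bound over $\psi'$ is needed, is precisely the point the paper exploits as well.
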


\begin{proof}
    First, observe that using Theorem 26.12 in \cite{shalev2014understanding}, with probability at least $1-\delta$ over the choice of $\cs$, for every $h_{\psi,\bw} \in \ch_{\psi,B}$ we have:
    \[
    L_\cd(h_{\psi,\bw}) \le L_\cs(h_{\psi,\bw}) + \frac{(2B\norm{\psi}_{\infty,2}+1)\sqrt{2\log(2/\delta)}}{\sqrt{m}}
    \]
    In this case, using the previous lemma, for every $\psi'$ and every $h_{\psi', \bw} \in \ch_{\psi', B}$ we have:
    \begin{align*}
        L_\cd(h_{\psi',\bw})
        &\le L_\cd(h_{\psi,\bw}) + B\norm{\psi-\psi'}_{\infty,2} \\ 
        &\le L_\cs(h_{\psi,\bw}) + \frac{(2B\norm{\psi}_{\infty,2}+1)\sqrt{2\log(2/\delta)}}{\sqrt{m}} + B\norm{\psi-\psi'}_{\infty,2} \\
        &\le L_\cs(h_{\psi',\bw}) + \frac{(2B\norm{\psi}_{\infty,2}+1)\sqrt{2\log(2/\delta)}}{\sqrt{m}} + 2B \norm{\psi-\psi'}_{\infty,2}
    \end{align*}
\end{proof}

\begin{lemma}
    \label{lem:full_one_step}
    Fix $\epsilon, \delta \in (0,1/2)$.
    Let $\psi$ be some mapping s.t. there exists $\bw^\star$ satisfying $\norm{\bw^{\star}}\le B$ and $L_\cd(h_{\psi,\bw^\star}) \le \epsilon$. Let $\cs$ be a sample of size $m$ from $\cd$. With probability at least $1-2\delta$ over the choice of $\cs$, there exists a choice of learning rate, weight decay and truncation parameters s.t. if $\norm{\phi^{(1)}-\psi}_{\infty,2} \le \frac{\epsilon}{B}$ and $\frac{T}{\log(T)} \ge \frac{100}{\epsilon^2}\left(\norm{\psi}_{\infty,2}+B^{-1}\right)^2$ and $m \ge \frac{(4\sqrt{2}B\norm{\psi}_{\infty,2}+1)^2\log(2/\delta)}{\epsilon^2}$, GD returns a function $h$ s.t.
    $L_\cd(h) \le 7\epsilon$.
\end{lemma}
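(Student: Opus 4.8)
The plan is to view the algorithm in two phases — a single gradient step that produces the (sample-dependent) first-layer embedding $\phi^{(1)}$, which by hypothesis satisfies $\norm{\phi^{(1)}-\psi}_{\infty,2}\le\epsilon/B$, followed by gradient descent on the second-layer weights with the first layer frozen (a convex problem) — and then to bound the population loss $L_\cd(h)$ of the returned predictor $h=h_{\phi^{(1)},\bu_T}$ by a telescoping chain of $O(\epsilon)$ terms: the population loss $\le\epsilon$ of the reference predictor $h_{\psi,\bw^\star}$; its sampling gap; the cost of swapping $\psi$ for $\phi^{(1)}$ (twice — once empirically, once in the generalization bound); the optimization suboptimality of GD on the second layer; and the final generalization gap. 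Each term is $O(\epsilon)$ under the stated hypotheses, and they sum to $7\epsilon$.

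First I would control the data-independent reference: since $h_{\psi,\bw^\star}$ does not depend on $\cs$ and the hinge loss it incurs is bounded by $1+B\norm{\psi}_{\infty,2}$, Hoeffding's inequality together with the sample-size assumption gives, with probability $\ge 1-\delta$, $L_\cs(h_{\psi,\bw^\star})\le L_\cd(h_{\psi,\bw^\star})+\epsilon\le 2\epsilon$. Applying the empirical part of Lemma~\ref{lem:bound_loss_diff} with $\norm{\bw^\star}\le B$ and $\norm{\phi^{(1)}-\psi}_{\infty,2}\le\epsilon/B$ then yields $L_\cs(h_{\phi^{(1)},\bw^\star})\le L_\cs(h_{\psi,\bw^\star})+\epsilon\le 3\epsilon$, i.e. the frozen features $\phi^{(1)}$ admit a linear predictor of norm $\le B$ with empirical loss $\le 3\epsilon$.

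For the optimization phase, observe that $\bu\mapsto L_\cs(h_{\phi^{(1)},\bu})$ is convex and Lipschitz with constant $\norm{\phi^{(1)}}_{\infty,2}\le\norm{\psi}_{\infty,2}+\epsilon/B\le\norm{\psi}_{\infty,2}+B^{-1}$. Choosing the learning rate, per-iteration weight decay, and truncation so that every iterate stays in the ball of radius $B$, the standard convergence guarantee for (projected, weight-decayed) subgradient descent on a convex Lipschitz objective ensures that after the stated number $T$ of steps GD outputs $\bu_T$ with $\norm{\bu_T}\le B$ and $L_\cs(h_{\phi^{(1)},\bu_T})\le\min_{\norm{\bu}\le B}L_\cs(h_{\phi^{(1)},\bu})+\epsilon\le L_\cs(h_{\phi^{(1)},\bw^\star})+\epsilon\le 4\epsilon$. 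Finally I would invoke Lemma~\ref{lem:generalization_bound} with $\psi'=\phi^{(1)}$ and $h=h_{\phi^{(1)},\bu_T}\in\ch_{\phi^{(1)},B}$: with probability $\ge 1-\delta$ its $1/\sqrt{m}$ term is $\le\epsilon$ by the sample-size hypothesis, and its last term equals $2B\norm{\psi-\phi^{(1)}}_{\infty,2}\le 2\epsilon$, so $L_\cd(h_{\phi^{(1)},\bu_T})\le L_\cs(h_{\phi^{(1)},\bu_T})+3\epsilon\le 7\epsilon$. A union bound over the two failure events (Hoeffding for the reference predictor and Lemma~\ref{lem:generalization_bound}) gives the claimed probability $1-2\delta$.

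The crux — and the reason the whole argument is routed through $\psi$ rather than carried out directly on $\phi^{(1)}$ — is that $\phi^{(1)}$ is a function of the training sample, so no concentration estimate may be applied to hypotheses built over it; every quantity involving $\phi^{(1)}$ must be transferred to the sample-independent $\psi$ using the $\norm{\cdot}_{\infty,2}$-stability estimates of Lemmas~\ref{lem:bound_loss_diff}--\ref{lem:generalization_bound}, which is precisely what the $2B\norm{\psi-\psi'}_{\infty,2}$ slack in those statements is designed for. The other delicate point is the interplay between truncation/weight decay and convergence: the hyperparameters must pin the second-layer iterates to the radius-$B$ ball (so that the generalization bound applies with the intended norm and the Lipschitz constant stays $O(\norm{\psi}_{\infty,2}+B^{-1})$) without spoiling the $O(1/\sqrt{T})$ rate, which is exactly what forces the $T\gtrsim\epsilon^{-2}$ (up to a logarithmic factor) requirement.
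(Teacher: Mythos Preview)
Your proposal is correct and follows essentially the same chain as the paper's proof: Hoeffding for the sample-independent reference predictor $h_{\psi,\bw^\star}$, transfer to $\phi^{(1)}$ via Lemma~\ref{lem:bound_loss_diff}, convex optimization on the second layer, and generalization via Lemma~\ref{lem:generalization_bound}, with a union bound over the two probabilistic events. The one substantive difference is in the optimization step: you keep the iterates in the radius-$B$ ball by projection/truncation and invoke the $O(GB/\sqrt{T})$ rate for constrained subgradient descent, whereas the paper runs $\ell_2$-regularized GD with $\lambda=\epsilon/B^2$, appeals to the last-iterate bound of \cite{shamir2013stochastic} for strongly convex objectives, and only afterwards extracts the norm control $\norm{\bw_T}\le 2\sqrt{2}B$ from the bound $L(\bw_T)\le 4\epsilon$ on the regularized loss---which is exactly why the constant $4\sqrt{2}$ (rather than $2$) appears in the sample-size hypothesis.
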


\begin{proof}
    Consider the following convex function:
    \[
    L(\bw) = L_\cs(h_{\phi^{(1)}, \bw})+\frac{\lambda}{2} \norm{\bw}^2
    \]
    \textbf{Claim 1}: for all $\x$ and $y$, we have $\abs{\ell(h_{\psi,\bw^\star}(\x),y)} \le B \norm{\psi}_{\infty,2}$.

    \textbf{Proof}: Observe that,
    \[
    \abs{\ell(h_{\psi,\bw^\star}(\x),y)} \le \abs{h_{\psi,\bw^\star}(\x)} =
    \abs{\inner{\psi(\x),\bw^\star}} \le \norm{\psi(\x)}_{2}\norm{\bw^\star} \le B\norm{\psi}_{\infty,2} 
    \]
    
    \textbf{Claim 2}: W.p. at least $1-\delta$ we have $L_\cs(h_{\psi,\bw^\star}) \le \epsilon + \frac{B\norm{\psi}_{\infty,2}\sqrt{2\log(2/\delta)}}{\sqrt{m}}$
    
    \textbf{Proof}: from Hoeffding's inequality, using the previous claim: 
    \[
    \Pr\left[L_S(h) \ge L_\cd(h)+t\right] \le \exp\left(-\frac{mt^2}{2\norm{\psi}_{\infty,2}^2B^2}\right)
    \]
    And therefore,
    \[
    \Pr\left[L_\cs(h_{\psi,\bw^\star})\ge L_\cd(h_{\psi,\bw^\star})+\frac{\norm{\psi}_{\infty,2}B\sqrt{2\log(2/\delta)}}{\sqrt{m}}\right] \le \delta/2
    \]
    and the required follows from the assumption $L_\cd(h_{\psi,\bw^\star}) \le \epsilon$
    
    \textbf{Claim 3}: $L(\bw^\star) \le 2\epsilon + \frac{B\norm{\psi}_{\infty,2}\sqrt{2\log(2/\delta)}}{\sqrt{m}}+ \frac{\lambda B^2}{2}$.
    
    \textbf{Proof}: from the previous claim, we have $L_\cs(h_{\psi,\bw^\star}) \le \epsilon + \frac{B\norm{\psi}_{\infty,2}\sqrt{2\log(2/\delta)}}{\sqrt{m}}$. Using Lemma \ref{lem:bound_loss_diff}, we get $L_\cs(h_{\phi^{(1)},\bw^\star}) \le \epsilon + \frac{B\norm{\psi}_{\infty,2}\sqrt{2\log(2/\delta)}}{\sqrt{m}}+B \norm{\psi-\phi^{(1)}}_{\infty,2}$ and the required follows.

    \textbf{Claim 3}: there exists a step-size schedule for GD s.t. $L_\cs(\bw_T) \le \inf_{\bw} L_\cs(\bw) + \frac{100 (\norm{\psi}_{\infty,2}+\epsilon/B)^2(1+\log(T))}{\lambda T}$.
    
    \textbf{Proof}: Using \cite{shamir2013stochastic} 

    Combining the previous claims, we get:
    \[
    L(\bw_T) \le 2\epsilon + \frac{B\norm{\psi}_{\infty,2}\sqrt{2\log(2/\delta)}}{\sqrt{m}}+\frac{\lambda B^2}{2} +\frac{100 (\norm{\psi}_{\infty,2}+\epsilon/B)^2(1+\log(T))}{\lambda T}
    \]

    Now, choosing $\lambda = \frac{\epsilon}{B^2}$ we get:
    \[
        L(\bw_T) \le 2\epsilon + \frac{B\norm{\psi}_{\infty,2}\sqrt{2\log(2/\delta)}}{\sqrt{m}}+\frac{100 B^2(\norm{\psi}_{\infty,2}+\epsilon/B)^2(1+\log(T))}{\epsilon T}
    \]

    So, if $\frac{T}{\log(T)} \ge \frac{100}{\epsilon^2}\left(\norm{\psi}_{\infty,2}+B^{-1}\right)^2$ and $m \ge \frac{(4\sqrt{2}B\norm{\psi}_{\infty,2}+1)^2\log(2/\delta)}{\epsilon^2}$ we have $L(\bw_T) \le 4 \epsilon$ and therefore $\norm{\bw_T}^2 \le \frac{2}{\lambda} L(\bw_T) \le 8 B^2$.

    In this case, using Lemma \ref{lem:generalization_bound}, we w.p. at least $1-\delta$:
    \begin{align*}
    L_\cd(h_{\phi^{(1)},\bw^T}) &\le L_\cs(h_{\phi^{(1)}, \bw_T}) + \frac{(4\sqrt{2}B\norm{\psi}_{\infty,2}+1)\sqrt{2\log(2/\delta)}}{\sqrt{m}} + 2 B \norm{\psi-\phi^{(1)}}_{\infty,2} 
    \le 7 \epsilon
    \end{align*}
\end{proof}

\begin{lemma}
    \label{lem:good_init}
    Fix some $\delta$. Assume we initialize a network of size $r \ge 20k(2n/s)^k\log\left(\frac{2k}{\delta}\right)$. Then, w.p. at least $1-\delta$, $\W,\bb$ is $r'$-good for $r' =\frac{r}{2k}(s/2n)^k$. 
\end{lemma}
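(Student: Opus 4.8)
\textit{Proof sketch.} The plan is a standard concentration-of-measure argument. First, fix a bias level $j\in\{1,\dots,k/2-1\}$ and write $\abs{I_j}=\sum_{i=1}^{r/2}Z_{i,j}$, where $Z_{i,j}=\ind\!\left[\,S\subseteq\bw_i\ \text{and}\ b_i=\beta_j\,\right]$. Since the first $r/2$ weight vectors and biases are drawn independently across neurons, and for each neuron the sparse support of $\bw_i$ is drawn independently of $b_i$, for fixed $j$ the variables $Z_{1,j},\dots,Z_{r/2,j}$ are i.i.d.\ Bernoulli with parameter $p_j=\Pr\!\left[S\subseteq\bw_i\right]\cdot\tfrac1{k/2-1}$; the symmetric second half $\bw_{r/2+i}=\bw_i$ plays no role, since $I_j\subseteq[r/2]$.

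Next I would pin down the expectation $\E\abs{I_j}=\tfrac r2 p_j$. Lemma~\ref{lem:good_neuron_prob} gives $\Pr[S\subseteq\bw_i]\ge(s/2n)^k$, and since $\Pr[S\subseteq\bw_i]=\prod_{\ell=0}^{k-1}\tfrac{s-\ell}{n-\ell}\le(s/n)^k$ as well, combining with $k/2-1=\Theta(k)$ gives $\E\abs{I_j}=\Theta_k\!\left(r(s/n)^k\right)=\Theta_k(r')$ for the stated $r'=\tfrac r{2k}(s/2n)^k$. In particular, the lower estimate gives $\E\abs{I_j}\ge\tfrac r2\cdot\tfrac{(s/2n)^k}{k/2-1}\ge\tfrac rk(s/2n)^k=2r'$, so under the hypothesis $r\ge 20k(2n/s)^k\log(2k/\delta)$ we get $\E\abs{I_j}\ge 20\log(2k/\delta)$.

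The final step is a two-sided multiplicative Chernoff bound: for each $j$, $\Pr\!\left[\abs{I_j}\notin\left[\tfrac12\E\abs{I_j},\,2\E\abs{I_j}\right]\right]\le e^{-\E\abs{I_j}/3}+e^{-\E\abs{I_j}/8}\le 2e^{-\E\abs{I_j}/8}$, which is at most $\delta/k$ given $\E\abs{I_j}\ge 20\log(2k/\delta)$. A union bound over the fewer than $k$ bias levels then shows that with probability at least $1-\delta$ over the choice of $\W,\bb$, every $\abs{I_j}$ lies within a factor $2$ of its mean simultaneously; since $\E\abs{I_j}=\Theta_k(r')$, this yields $r'/2\le\abs{I_j}\le 2r'$, where the displayed constants $\tfrac12$ and $2$ absorb the $k$-dependent factor relating $\E\abs{I_j}$ to $r'$ (harmless, since $k$ is treated as a fixed constant throughout).

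I do not expect a genuine obstacle here. The only care points are justifying the independence that makes each $\abs{I_j}$ a clean sum of i.i.d.\ indicators, and bookkeeping the $k$-dependent constants (the gap between $(s/2n)^k$ and $(s/n)^k$, and $k/2-1$ versus $k$) so that $\E\abs{I_j}$ both exceeds the threshold needed for the Chernoff tail to beat the union bound over bias levels and stays comparable to the target value $r'$.
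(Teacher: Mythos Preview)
Your approach is essentially the same as the paper's: express each $|I_j|$ as a sum of $r/2$ i.i.d.\ Bernoulli indicators, lower-bound the mean via Lemma~\ref{lem:good_neuron_prob}, apply a two-sided multiplicative Chernoff bound, and union-bound over the (fewer than $k$) bias levels. The paper is terser and simply writes $\E[r'_i]=\frac{r}{2k}(s/2n)^k$ as though it were an equality before invoking Chernoff; you are more explicit in noting that $\E|I_j|$ is only $\Theta_k(r')$ rather than exactly $r'$.

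One caveat on your final sentence: the constants $\tfrac12$ and $2$ in the definition of $r'$-good are fixed, so a $k$-dependent factor (of order $2^k\cdot\frac{k}{k/2-1}$, coming from $(s/n)^k$ versus $(s/2n)^k$ and from $k/2-1$ versus $k$) cannot literally be ``absorbed'' into them; the upper bound $|I_j|\le 2r'$ does not follow from $|I_j|\le 2\E|I_j|$ with the stated constant. The paper's proof has the same slack (it only lower-bounds the mean, so its upper-tail step is equally loose), and since every downstream use of the lemma only needs $|I_j|=\Theta_k(r')$, this is indeed harmless---but it is a gap in the lemma as literally stated, in both proofs.
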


\begin{proof}
    From Lemma \ref{lem:good_neuron_prob}, the probability of drawing a good neuron is $\ge (s/2n)^k$. So, for every $i$, the probability of drawing a \emph{good} neuron with bias $\beta_i$ is at least $\frac{1}{k}(s/2n)^k$. Denote by $r'_i$ the number of good neurons with bias $\beta_i$. Observe that $\E[r'_i] = \frac{r}{2k}(s/2n)^k$. 
    Using Chernoff's bound, we have:
    \[
        \Pr\left[r'_i > \frac{r}{4k}(s/2n)^k\right] \le \exp\left(-\frac{r}{20k}(s/2n)^k\right) \le \frac{\delta}{2 k}
    \]
    and similarly $\Pr\left[r'_i < \frac{3r}{4k}(s/2n)^k\right]\le \frac{\delta}{2k}$. So, using the union bound we get the required.
\end{proof}

 \begin{theorem}
    Fix $\delta \in (0,1/2), \epsilon \in (0,1/2)$, and assume we choose:
    \begin{itemize}
        \item $s \ge \alpha^{(k)}_1 \frac{\log(1/\delta)}{\epsilon^2}$.
        \item $r =  \left\lceil \alpha^{(k)}_2(n/s)^k\log\left(1/\delta\right)\right \rceil$
        \item $m \ge \alpha^{(k)}_3 \binom{s}{k-1}n^2\log(nr/\delta)\log(1/\delta)$
        \item $T \ge \alpha^{(k)}_4\frac{\log(1/\delta)\log(T)}{\epsilon^2}$
    \end{itemize}
    for some constants $\alpha^{(k)}_1, \alpha^{(k)}_2, \alpha^{(k)}_3, \alpha^{(k)}_4$.
    Then, with probability at least $1-\delta$ over the choice of sample size and initialization, gradient descent returns after $T$ iterations a function $h$ s.t.
    $L_\cd(h) \le \epsilon$.
\end{theorem}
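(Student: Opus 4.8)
The plan is to obtain the theorem by composing the three workhorse lemmas established above. Lemma~\ref{lem:good_init} guarantees that the $s$-sparse random initialization contains enough \emph{good} first-layer neurons (those whose active support contains the relevant set $S$). Lemma~\ref{lem:first_step} shows that one first-layer gradient step turns the feature map $\phi^{(1)}$ into something $\norm{\cdot}_{\infty,2}$-close to an explicit map $\psi$ on which the target parity is realized by a bounded-norm linear readout $\bu^\star$ with small population loss. Lemma~\ref{lem:full_one_step} then shows that regularized gradient descent on the second layer (first layer frozen) fits that readout and generalizes. So the proof is a two-phase argument --- \emph{feature selection} in one step, then \emph{convex fitting} of the frozen features --- glued by a bookkeeping calculation on $r,s,\tau,m,T$. (Throughout, $k$ is even and $s$ is taken to be the least odd integer above the stated lower bound, so $s>4k$ as the lemmas require.)

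Concretely: first, set $r=\lceil\alpha_2^{(k)}(n/s)^k\log(1/\delta)\rceil$ and invoke Lemma~\ref{lem:good_init} with failure budget $\delta/4$ (choosing $\alpha_2^{(k)}$ to swallow the $20k\cdot 2^k$ and the $\log(2k/\delta)$ overhead), obtaining an $r'$-\emph{good} initialization with $r'=\frac{r}{2k}(s/2n)^k=\Theta_k(\log(1/\delta))$, on which we condition. Next, set the first-layer learning rate $\eta=\frac{1}{2k}\abs{C_{k,s}}^{-1}$ and the layerwise weight decay as in Lemma~\ref{lem:first_step}; the majority-Fourier bounds of Lemma~\ref{lem:bound_good} give $\abs{C_{k,s}}=\Theta_k(\binom{s}{k-1}^{-1/2})$, so $\eta=\Theta_k(\binom{s}{k-1}^{1/2})$. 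Then choose the gradient precision $\tau$ small enough to meet \emph{both} the hypothesis $\tau\le\frac{1}{16kn\eta}$ of Lemma~\ref{lem:first_step} \emph{and}, looking ahead, $4kr'n\eta\tau\le\epsilon/(7B_k)$ so that $\phi^{(1)}$ lands close enough to $\psi$ for phase~2; this drives $\tau^{-2}=\Theta_k(n^2\binom{s}{k-1}\poly\log(1/\delta)/\epsilon^2)$, and the implied requirement $m\ge 4\log(4nr/\delta)/\tau^2$ is squeezed below (a constant times) the stated $m\ge\alpha_3^{(k)}\binom{s}{k-1}n^2\log(nr/\delta)\log(1/\delta)$ by spending the hypothesis $s\ge\alpha_1^{(k)}\log(1/\delta)/\epsilon^2$ to trade surplus $\epsilon^{-2}$ factors for factors of $s$. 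Lemma~\ref{lem:first_step} then yields (except with probability $\delta/4$ over $\cs$) the map $\psi$ with $\norm{\psi}_{\infty,2}\le\sqrt{8kr'}$, a readout $\norm{\bu^\star}\le B_k$, $L_\cd(h_{\psi,\bu^\star})\le\sqrt{4kr'}\,B_k/\sqrt{s}$, and $\norm{\psi-\phi^{(1)}}_{\infty,2}\le\epsilon/(7B_k)$.

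Finally, put $\epsilon':=\epsilon/7$ and $B:=B_k$. The hypothesis on $s$, together with $r'=\Theta_k(\log(1/\delta))$, makes $\sqrt{4kr'}\,B_k/\sqrt{s}\le\epsilon'$, so $(\psi,\bu^\star)$ supplies the linear-predictor hypothesis of Lemma~\ref{lem:full_one_step} at accuracy $\epsilon'$; its remaining hypotheses --- $\norm{\phi^{(1)}-\psi}_{\infty,2}\le\epsilon'/B_k$, $T/\log T\ge 100\,(\epsilon')^{-2}(\norm{\psi}_{\infty,2}+B_k^{-1})^2=\Theta_k(\log(1/\delta)/\epsilon^2)$, and a further mild lower bound on $m$ --- hold by phase~1 and by the stated lower bounds on $T$ and $m$ (absorbing constants into $\alpha_4^{(k)},\alpha_3^{(k)}$). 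The lemma then gives, except with probability $\delta/2$ over $\cs$, that gradient descent with the prescribed per-layer step sizes, weight-decay coefficients and truncation returns $h$ with $L_\cd(h)\le 7\epsilon'=\epsilon$; a union bound over the three bad events ($\delta/4+\delta/4+\delta/2=\delta$) finishes the proof.

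Since every quantitative estimate already lives inside the cited lemmas, the one genuinely delicate point in this assembly is that \emph{both} gradient phases run on the \emph{same} sample $\cs$: the post-first-step feature map $\phi^{(1)}$ depends on $\cs$ and so is not independent of the data used to fit the second layer. This is exactly why Lemma~\ref{lem:generalization_bound} was stated uniformly over all feature maps $\psi'$ (so it applies with the data-dependent $\psi'=\phi^{(1)}$) and why Lemma~\ref{lem:full_one_step} is phrased as a conditional statement over $\cs$ rather than using a fresh sample. The secondary nuisance is the $\tau$-versus-$m$ calibration above, where the sparsity lower bound $s\ge\alpha_1^{(k)}\log(1/\delta)/\epsilon^2$ must be spent to keep the sample size on the claimed frontier. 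The real mathematical weight of the theorem, of course, sits in Lemma~\ref{lem:first_step} itself and in the majority-function Fourier anti-concentration estimates it rests on.
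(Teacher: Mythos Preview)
Your proposal is correct and follows essentially the same assembly as the paper's proof: invoke Lemma~\ref{lem:good_init} to get an $r'$-good initialization with $r'=\Theta_k(\log(1/\delta))$, feed this into Lemma~\ref{lem:first_step} with $\eta=\frac{1}{2k}|C_{k,s}|^{-1}$ and $\tau$ calibrated so that both $\tau\le\frac{1}{16kn\eta}$ and $4kr'n\eta\tau\le\epsilon/B_k$, then apply Lemma~\ref{lem:full_one_step} for the second-layer fitting. Your explicit $\delta$-splitting and the $\epsilon'=\epsilon/7$ rescaling are slightly tidier than the paper's version (which concludes with $L_\cd(h)\le 7\epsilon$ and implicitly absorbs the factor into the $\alpha_i^{(k)}$), and your remark about the shared-sample subtlety handled via the uniformity of Lemma~\ref{lem:generalization_bound} is a nice point the paper leaves implicit.
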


\begin{proof}
    Choose $r = \left\lceil 20k(2n/s)^k\log\left(\frac{2k}{\delta}\right)\right \rceil$.
    \begin{itemize}
        \item From Lemma \ref{lem:good_init}, with probability at least $1-\delta$ we get an $r'$-\emph{good} init with $r' = \frac{r}{2k}(s/2n)^k$ and notice that $10\log(2k/\delta) \le r' \le 20 \log(2k/\delta)$.
        \item Let $\eta = \frac{1}{2k}\abs{C_{k,s}}^{-1} \ge \frac{\kappa_k}{2k} \sqrt{\binom{s}{k-1}}$ and 
        $$\tau = \frac{\epsilon}{40B_kn\kappa_k\sqrt{\binom{s}{k-1}} \log(2k/\delta)} \le \min \left(\frac{\epsilon}{4B_kkr'n\eta}, \frac{1}{16\eta k n}\right)$$
        
        Choosing
        \begin{itemize}
            \item $m \ge \frac{4\cdot 40^2 B_k^2 \kappa_k n^2 \binom{s}{k-1} \log(2k/\delta)^2 \log(4nr/\delta)}{\epsilon^2} \ge \frac{4\log(4nr/\delta)}{\tau^2}$
            \item $s \ge \frac{80 kB_k^2 \log(2k/\delta)}{\epsilon^2} \ge \frac{4kr'B_k^2}{\epsilon^2}$
        \end{itemize}
        from Lemma \ref{lem:first_step}, the conditions for Lemma \ref{lem:full_one_step} are satisfied with
        \begin{enumerate}
            \item $B = B_k$.
            \item $\norm{\psi}_{\infty,2} \le \sqrt{8kr'} \le 4\sqrt{10k\log(2k/\delta)}$
            \item  
            $\norm{\psi-\phi^{(1)}}_{\infty,2} \le 4kr'n \eta \tau \le \frac{\epsilon}{B_k}$
        \end{enumerate}
        \item Choosing 
        \begin{itemize}
            \item $\frac{T}{\log(T)} \ge \frac{100\left(4\sqrt{10k\log(2k/\delta)}+B^{-1}\right)^2}{\epsilon^2} \ge \frac{100}{\epsilon^2}\left(\norm{\psi}_{\infty,2}+B^{-1}\right)^2$
            \item $m \ge \frac{(16\sqrt{2}B\sqrt{10k\log(2k/\delta)}+1)^2\log(2/\delta)}{\epsilon^2} \ge \frac{(4\sqrt{2}B\norm{\psi}_{\infty,2}+1)^2\log(2/\delta)}{\epsilon^2} $
        \end{itemize}
        using Lemma \ref{lem:full_one_step} we get w.p. at least $1-2\delta$ we have $L_\cd(h) \le 7 \epsilon$.
    \end{itemize}

\end{proof}

\subsection{Feature selection with an under-sparse initialization and a narrow network}
\label{subsec:under-feature-selection-proof}
Fix some subset $S \subseteq \binom{n}{k}$ to be the true parity function. Let $k$ be even. 

We train the following network:
\[
f_{\W, \bb, \bu, \beta}(\x) = \sum_{i=1}^r u_i \sigma\left(\inner{\bw_i, \x} + b_i\right) + \beta.
\]
We initialize the network as follows:
\begin{itemize}
    \item Randomly initialize $\bw_1, \dots, \bw_{r/2} \in \{\epsilon,1\}^n$ s.t. $s$ coordinates have weight 1 and rest have weight $\epsilon < \frac{1}{(n-s)}$, with a uniform distribution over all $\binom{n}{s}$ subsets.
    \item Randomly initialize $b_1, \dots, b_{r/2} \sim \left\lbrace-\epsilon\frac{k-1}{2k}, -\epsilon\frac{k-3}{2k}, \dots, -\epsilon\frac{1}{2k}, \epsilon\frac{1}{2k}, \dots, \epsilon\frac{k-3}{2k}, \epsilon\frac{k-1}{2k}\right\rbrace$.
    \item Randomly initialize $u_1, \dots, u_{r/2} \in \{\pm 1\}$ uniformly at random.
    \item Initialize $\bw_{r/2+1}, \dots, \bw_r, b_{r/2+1}, \dots, b_r$ s.t. $\bw_i = \bw_{i-r/2}$ and $b_i = b_{i-r/2}$ (symmetric initialization).
    \item Initialize $u_{r/2+1}, \dots, u_{r}$ s.t. $u_i = -u_{i-r/2}$.
    \item Initialize $\beta = 0$
\end{itemize}

Similar to the over-sparse case, we consider hinge-loss. We consider one-step of gradient descent on a sample $\cs$ with $\ell_2$ regularization (weight decay):
\[
\theta^{(1)} = (1-\lambda)\theta^{(0)} - \eta ~\mathrm{trunc}(\nabla L_\cs(f_{\theta^{(0)}}), \gamma)
\]
with learning rate $\eta$, truncation parameter $\gamma$, and the weight decay $\lambda$ chosen differently for each layer and separately for the weights and biases. Note that truncation just zeros out gradients with magnitude $\le \gamma$.

\paragraph{Majority and Half.}
We will make use of two Boolean functions: (1) Majority, and (2) Half (derivative of Majority). For input $x \in \{\pm 1\}^n$, we define
\[
\MAJ(x) = \sgn\left(\sum_{i=1}^n x_i\right)
\]
where $\sgn(a) = 1$ is $a> 0$ else $-1$. The derivative of the Majority function is denoted by $D_n \MAJ_n = \HALF$ and is defined for input $x \in \{\pm 1\}^{n-1}$ as:
\[
\HALF(x) = \mathbbm{1}\left(\sum_{i=1}^n x_i = 0\right).
\]
The corresponding Fourier coefficients corresponding to set $S$ are denoted by $\widehat{\MAJ}_n(S)$ and $\widehat{\HALF}_{n-1}(S)$. Note that both functions are permutation invariant, so the Fourier coefficients only depend on the size of the set.

\begin{lemma}[\cite{o2014analysis}]\label{lem:half}
For any integers $m \ge j$, we have
$$
    \widehat{\HALF}_{2m}(2j) = \widehat{\MAJ}_{2m + 1}(2j + 1) = (-1)^j\frac{\binom{m}{j}}{{\binom{2m}{2j}}} \frac{\binom{2m}{m}}{2^{2m}}.
$$
\end{lemma}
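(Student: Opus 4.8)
The plan is to reconstruct the derivation behind this quoted fact from \cite{o2014analysis}, in two parts: first the discrete-derivative identity relating $\widehat{\HALF}_{2m}$ to $\widehat{\MAJ}_{2m+1}$, and then the explicit evaluation of the Half coefficients.

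First I would show that $\HALF_{2m}$ is the discrete derivative $D_i\,\MAJ_{2m+1}$ in any coordinate $i$. For $x \in \{\pm1\}^{2m+1}$, flipping $x_i$ changes the sign of $\MAJ_{2m+1}(x)$ exactly when the remaining $2m$ coordinates sum to $0$, so $D_i\,\MAJ_{2m+1}(x) = \mathbbm{1}(\sum_{j\neq i} x_j = 0) = \HALF_{2m}$. Applying the standard rule $\widehat{D_i f}(U) = \widehat{f}(U \cup \{i\})$ (valid for $i \notin U$) gives $\widehat{\HALF}_{2m}(U) = \widehat{\MAJ}_{2m+1}(U \cup \{i\})$, and since both functions are permutation-symmetric every Fourier coefficient depends only on the degree; taking $|U| = 2j$ yields the first equality $\widehat{\HALF}_{2m}(2j) = \widehat{\MAJ}_{2m+1}(2j+1)$.

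Next I would compute $\widehat{\HALF}_{2m}(2j) = \E_{x \sim \{\pm1\}^{2m}}[\,\mathbbm{1}(\sum_i x_i = 0)\prod_{i=1}^{2j} x_i\,]$ by brute enumeration: only the $\binom{2m}{m}$ balanced strings contribute, and on such a string $\prod_{i=1}^{2j} x_i = (-1)^\ell$ where $\ell$ counts the $-1$'s among the first $2j$ coordinates. Grouping balanced strings by this $\ell$ gives
\[
\widehat{\HALF}_{2m}(2j) = 2^{-2m}\sum_{\ell}(-1)^\ell \binom{2j}{\ell}\binom{2m-2j}{m-\ell} = 2^{-2m}\,[z^m]\,(1-z)^{2j}(1+z)^{2m-2j}.
\]
The remaining step is then the purely algebraic identity $[z^m](1-z)^{2j}(1+z)^{2m-2j} = (-1)^j\binom{m}{j}\binom{2m}{m}/\binom{2m}{2j}$, which I would obtain by factoring $(1-z)^{2j}(1+z)^{2m-2j}$ as $(1-z^2)^{2j}(1+z)^{2(m-2j)}$ (or the mirror grouping when $2j > m$), extracting the coefficient of $z^m$, and simplifying the resulting sum of binomials.

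The only step with any friction is this final coefficient extraction, which is a routine but slightly fiddly binomial manipulation; everything preceding it is formal. Since the statement is quoted verbatim from Chapter 5 of \cite{o2014analysis}, the cleanest write-up would carry out the derivative reduction and the enumeration above and then simply invoke that reference for the closed-form evaluation.
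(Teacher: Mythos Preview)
The paper does not prove this lemma at all: it is stated with a bracketed citation to \cite{o2014analysis} and immediately followed by the next subsection, with no proof block. So there is no ``paper's own proof'' to compare against --- the intended proof is simply the reference.

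Your sketch is a correct reconstruction of the argument that actually appears in O'Donnell's Chapter~5: the identification $\HALF_{2m} = D_i\,\MAJ_{2m+1}$ via the observation that flipping one coordinate changes the majority of $2m+1$ bits iff the remaining $2m$ bits are balanced, followed by the Fourier rule $\widehat{D_i f}(U) = \widehat{f}(U\cup\{i\})$ and symmetry, gives the first equality; and the direct enumeration over balanced strings, written as the convolution $2^{-2m}[z^m](1-z)^{2j}(1+z)^{2m-2j}$, is exactly the route to the closed form. You already noted in your last paragraph that the cleanest write-up is to cite the reference for the final binomial evaluation, which matches what the paper does (in fact the paper cites the reference for the entire statement).
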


\paragraph{Population gradients at initialization.} Consider a fixed weight $\bw$ with the set of 1 weights indicated by $S' \subseteq S$. We first start with computing the population gradients for this neuron ($h_\bw(\x) = \sigma(\inner{\bw, \x})$ be a single ReLU neuron where $\sigma(x) = \max \{x,0\}$) at initialization for varying $S \cap S' = \bar{S}$ and parity and non-parity variables.
\begin{lemma}[Population gradient at initialization for parity variables] \label{lem:pop_parity}
Assuming $s<k$ with $s,k$ even, for $i \in S$, we have
\[
\E_{\x \sim \{\pm 1\}^n} \left[ \frac{\partial}{\partial w_i}h_\bw(\x) \cdot \chi_S(\x) \right] = \frac{1}{2}\widehat{\HALF}_{s}(\bar{S} \setminus \{i\})\widehat{\MAJ}_{n-s}(S \setminus (\bar{S}  \cup \{i\})).
\]
\end{lemma}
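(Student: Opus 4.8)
The plan is a direct Fourier-analytic computation that isolates the role of the small dense $\eps$-component as a tie-breaker in the ReLU activation. Write $\bw$ for the fixed neuron, with $1$-weights on $S'$ (so $|S'|=s$) and $\eps$-weights on $[n]\setminus S'$, and set $\bar S = S\cap S'$. Since $\frac{\partial}{\partial w_i}h_\bw(\x) = \sigma'(\inner{\bw,\x})\,x_i = \ind\{\inner{\bw,\x}>0\}\,x_i$ (taking $\sigma'(0)=0$), and $x_i\chi_S(\x)=\chi_{S\setminus\{i\}}(\x)$ for $i\in S$, the target quantity is $\E_\x[\ind\{\inner{\bw,\x}>0\}\,\chi_{S\setminus\{i\}}(\x)]$. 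Writing $\inner{\bw,\x}=A+\eps B$ with $A=\sum_{j\in S'}x_j\in\integers$ and $B=\sum_{j\notin S'}x_j$, the bound $\eps<1/(n-s)$ forces $|\eps B|<1$, so the sign of $\inner{\bw,\x}$ equals that of $A$ unless $A=0$, in which case it follows $B$; that is, $\ind\{\inner{\bw,\x}>0\}=\ind\{A>0\}+\ind\{A=0\}\ind\{B>0\}$.

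Next I would split the coordinates into the block $S'$ (which determines $A$, hence $\ind\{A>0\}$ and $\ind\{A=0\}$) and its complement (which determines $B$), noting these blocks are independent under the uniform measure, and correspondingly split $S\setminus\{i\}$ into $\bar S\setminus\{i\}\subseteq S'$ and $(S\setminus\bar S)\setminus\{i\}\subseteq[n]\setminus S'$. For the first term, $\E_\x[\ind\{A>0\}\chi_{S\setminus\{i\}}(\x)]$ factors into a function of the $S'$-coordinates times $\E[\chi_{(S\setminus\bar S)\setminus\{i\}}]$ over the remaining coordinates; since $s<k$ with $s,k$ both even gives $k-s\ge 2$, the set $(S\setminus\bar S)\setminus\{i\}$ is nonempty whether or not $i\in\bar S$, so this factor — hence the whole first term — is $0$. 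Only the tie-breaking term survives, and by independence
\[
\E_\x\!\left[\ind\{\inner{\bw,\x}>0\}\chi_{S\setminus\{i\}}(\x)\right] = \E_\x[\ind\{A=0\}\chi_{\bar S\setminus\{i\}}]\cdot\E_\x[\ind\{B>0\}\chi_{(S\setminus\bar S)\setminus\{i\}}].
\]

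To finish, I would identify each factor with a Fourier coefficient. On the $S'$-block, $\ind\{A=0\}=\ind\{\sum_{j\in S'}x_j=0\}$ is precisely $\HALF_s$, so the first factor is $\widehat{\HALF}_s(\bar S\setminus\{i\})$. On the complementary $(n-s)$-variable block, write $\ind\{B>0\}=\tfrac12\sgn(B)+\tfrac12-\tfrac12\ind\{B=0\}$; against the nonempty set $(S\setminus\bar S)\setminus\{i\}$ the constant $\tfrac12$ contributes nothing, the even function $\ind\{B=0\}$ contributes nothing against the odd-sized sets on which $\widehat{\MAJ}$ is supported (so it drops out), and $\sgn(B)$ coincides with $\MAJ_{n-s}$ (when $n-s$ is odd, so $B\neq 0$), leaving $\tfrac12\widehat{\MAJ}_{n-s}((S\setminus\bar S)\setminus\{i\})$; the explicit values then come from Lemma~\ref{lem:half}. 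Multiplying the two factors yields the claimed identity. I expect the only delicate points to be (i) pinning down the $\sigma'(0)$ convention and verifying the tie-breaking identity in the degenerate case $A=B=0$, and (ii) the parity bookkeeping in the last step — in particular checking that the spurious constant and even-function pieces of $\ind\{B>0\}$ really vanish against $(S\setminus\bar S)\setminus\{i\}$, which relies on that set being nonempty and of the parity class for which $\widehat{\HALF}_s$ and $\widehat{\MAJ}_{n-s}$ are nonzero; the remaining manipulations are routine.
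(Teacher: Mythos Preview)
Your approach is correct and essentially identical to the paper's: both split the ReLU indicator as $\ind\{A>0\}+\ind\{A=0\}\ind\{B>0\}$ via the $\eps$ tie-breaker, factor over the independent $S'$ and $[n]\setminus S'$ blocks, kill the $\ind\{A>0\}$ term using that $(S\setminus\bar S)\setminus\{i\}$ is nonempty (since $k-s\ge 2$), and identify the surviving factors with $\widehat{\HALF}_s$ and $\widehat{\MAJ}_{n-s}$. Your treatment of the $\ind\{B=0\}$ correction is in fact slightly more careful than the paper's (which silently replaces $\ind\{B>0\}$ by $\tfrac12(\MAJ_{n-s}+1)$), and the parity bookkeeping you flag does go through: since $|\bar S\setminus\{i\}|+|(S\setminus\bar S)\setminus\{i\}|=k-1$ is odd, either $|(S\setminus\bar S)\setminus\{i\}|$ is odd (so $\widehat{\HALF}_{n-s}$ on it vanishes) or $|\bar S\setminus\{i\}|$ is odd (so the $\widehat{\HALF}_s$ factor is already zero), and in both cases the $\ind\{B=0\}$ piece drops out.
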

\begin{proof} For $i \in S$, we have
\begin{align*}
    &\E_{\x \sim \{\pm 1\}^n} \left[ \frac{\partial}{\partial w_i}h_\bw(\x) \cdot \chi_S(\x) \right]\\
        &= \E_{\x \sim \{\pm 1\}^n} \left[\sigma'(\inner{\bw,\x}) \cdot x_i \cdot \chi_S(\x) \right] \\
\intertext{Since $i \in S$:}
        &= \E_{\x \sim \{\pm 1\}^n} \left[\ind_{\left\lbrace\sum_{j \in S'}x_j + \epsilon \sum_{j \not\in S'}x_j > 0 \right\rbrace} \chi_{S \setminus \{i\}}(\x)\right] \\
\intertext{Splitting based on value of $\sum_{j \in S'}x_j$:}
        &= \E_{\x \sim \{\pm 1\}^n} \left[\ind_{\left\lbrace\sum_{j \in S'}x_j = 0 \right\rbrace}\ind_{\left\lbrace \sum_{j \not\in S'}x_j > 0 \right\rbrace} \chi_{S \setminus \{i\}}(\x)\right] + \E_{\x \sim \{\pm 1\}^n} \left[\ind_{\left\lbrace\sum_{j \in S'}x_j > 0 \right\rbrace}\ind_{\left\lbrace\sum_{j \in S'}x_j + \epsilon\sum_{j \not\in S'}x_j > 0\right\rbrace} \chi_{S \setminus \{i\}}(\x)\right]\\
\intertext{Using the fact that $\left|\epsilon\sum_{j \not\in S'}x_j\right| \le (n - s)\epsilon < 1$:}
        &= \E_{\x \sim \{\pm 1\}^n} \left[\ind_{\left\lbrace\sum_{j \in S'}x_j = 0 \right\rbrace}\ind_{\left\lbrace \sum_{j \not\in S'}x_j \ge 0 \right\rbrace} \chi_{S \setminus \{i\}}(\x)\right] + \E_{\x \sim \{\pm 1\}^n} \left[\ind_{\left\lbrace\sum_{j \in S'}x_j > 0\right\rbrace} \chi_{S \setminus \{i\}}(\x)\right]\\
\intertext{Splitting between variables in $S'$ and outside $S'$:}
       &= \E_{\x \sim \{\pm 1\}^n} \left[\ind_{\left\lbrace\sum_{j \in S'}x_j = 0 \right\rbrace} \chi_{\bar{S} \setminus \{i\}}(\x)\right]\E_{\x \sim \{\pm 1\}^n} \left[\ind_{\left\lbrace \sum_{j \not\in S'}x_j \ge 0 \right\rbrace} \chi_{S \setminus (\bar{S}  \cup \{i\})}(\x)\right] + \E_{\x \sim \{\pm 1\}^n} \left[\ind_{\left\lbrace\sum_{j \in S'}x_j > 0\right\rbrace} \chi_{S \setminus \{i\}}(\x)\right]\\
\intertext{Replacing indicators with Maj and Half appropriately:}
        &= \E_{\x \sim \{\pm 1\}^{n}} \left[\HALF_{s}(\x_{S'}) \chi_{\bar{S} \setminus \{i\}}(\x)\right]\E_{\x \sim \{\pm 1\}^{n}} \left[\frac{1}{2} \left(\MAJ_{n-s}(\x_{[n] \setminus S'}) + 1\right) \chi_{S \setminus (\bar{S}  \cup \{i\})}(\x)\right] \\
        &\qquad+ \E_{\x \sim \{\pm 1\}^{n}} \left[\frac{1}{2} \left(\MAJ_{s}(\x_{S'}) + 1\right) \chi_{\bar{S} \setminus \{i\}}(\x)\right]  \E_{\x \sim \{\pm 1\}^{n - s}} \left[\chi_{S \setminus (\bar{S}  \cup \{i\})}(\x)\right]\\
\intertext{Replacing indicators with Fourier coefficients appropriately:}
        &= \frac{1}{2}\widehat{\HALF}_{s}(\bar{S} \setminus \{i\}) \left(\widehat{\MAJ}_{n-s}(S \setminus (\bar{S}  \cup \{i\})) + \ind_{\left\lbrace S \subseteq \bar{S}  \cup \{i\} \right\rbrace}\right)  +\frac{1}{2} \left(\widehat{\MAJ}_{s}(\bar{S} \setminus \{i\}) + \ind_{\left\lbrace \bar{S} \setminus \{i\} = \phi \right\rbrace}\right)\ind_{\left\lbrace S \subseteq \bar{S}  \cup \{i\} \right\rbrace}\\
        &= \frac{1}{2}\widehat{\HALF}_{s}(\bar{S} \setminus \{i\})\widehat{\MAJ}_{n-s}(S \setminus (\bar{S}  \cup \{i\}))+ \frac{1}{2} \left(\widehat{\MAJ}_{s}(\bar{S} \setminus \{i\}) + \ind_{\left\lbrace \bar{S} \setminus \{i\} = \phi \right\rbrace} + \widehat{\HALF}_{s}(\bar{S} \setminus \{i\}) \right)  \ind_{\left\lbrace S \subseteq \bar{S}  \cup \{i\} \right\rbrace}\\
\intertext{Since $k, s$ are even and $k < s$, $|\bar{S}\cup \{i\}| \le s + 1 < k = |S|$ therefore $\ind_{\left\lbrace S \subseteq \bar{S}  \cup \{i\} \right\rbrace} = 0$:}
        &= \frac{1}{2}\widehat{\HALF}_{s}(\bar{S} \setminus \{i\})\widehat{\MAJ}_{n-s}(S \setminus (\bar{S}  \cup \{i\})).
\end{align*}
This gives us the desired result.
\end{proof}

\begin{lemma}[Population gradient at initialization for non-parity variables] \label{lem:pop_non_parity}
Assuming $s<k$ with $s,k$ even, for $i \not\in S$, we have
\[
\E_{\x \sim \{\pm 1\}^n} \left[ \frac{\partial}{\partial w_i}h_\bw(\x) \cdot \chi_S(\x) \right] = \frac{1}{2}\widehat{\HALF}_{s}(\bar{S} \cup (S' \cap \{i\})) \widehat{\MAJ}_{n-s}((S \cup \{i\}) \setminus S').
\]
\end{lemma}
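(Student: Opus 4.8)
The plan is to follow the proof of Lemma~\ref{lem:pop_parity} essentially verbatim, with one structural substitution: because $i\notin S$ we now have $x_i\,\chi_S(\x)=\chi_{S\cup\{i\}}(\x)$ rather than $\chi_{S\setminus\{i\}}(\x)$, so the monomial whose Fourier coefficients we chase has degree $k+1$ instead of $k-1$.

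Concretely, I would first write $\frac{\partial}{\partial w_i}h_\bw(\x)\cdot\chi_S(\x)=\sigma'(\inner{\bw,\x})\,x_i\,\chi_S(\x)=\ind_{\{\inner{\bw,\x}>0\}}\,\chi_{S\cup\{i\}}(\x)$, using $i\notin S$ to absorb $x_i$ into the parity. Then, exactly as in the parity case, I would exploit the scale separation of the initialization: since $\sum_{j\in S'}x_j$ is an even integer with $\abs{\sum_{j\in S'}x_j}\le s$ while $\abs{\epsilon\sum_{j\notin S'}x_j}\le(n-s)\epsilon<1$, the ReLU indicator splits as $\ind_{\{\inner{\bw,\x}>0\}}=\ind_{\{\sum_{j\in S'}x_j>0\}}+\ind_{\{\sum_{j\in S'}x_j=0\}}\,\ind_{\{\sum_{j\notin S'}x_j>0\}}$, which decomposes $\E_\x\bigl[\ind_{\{\inner{\bw,\x}>0\}}\chi_{S\cup\{i\}}(\x)\bigr]$ into two terms.

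The main computation is to factor $\chi_{S\cup\{i\}}=\chi_A\chi_B$ along the independent coordinate blocks indexed by $S'$ and by $[n]\setminus S'$, where $A:=(S\cup\{i\})\cap S'=\bar S\cup(S'\cap\{i\})$ and $B:=(S\cup\{i\})\setminus S'$, and then use independence of the two blocks to split each term into a product of two expectations. For the first term this gives $\E\bigl[\ind_{\{\sum_{S'}x_j>0\}}\chi_A\bigr]\cdot\E[\chi_B]$; since $\abs{S\cup\{i\}}=k+1>s$ we have $B\neq\emptyset$, so $\E[\chi_B]=0$ and this term vanishes --- this is exactly where $s<k$ is used, and it plays the role of the vanishing factor $\ind_{\{S\subseteq\bar S\cup\{i\}\}}$ in Lemma~\ref{lem:pop_parity}. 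For the second term, the $S'$-block contributes $\E\bigl[\ind_{\{\sum_{S'}x_j=0\}}\chi_A\bigr]=\widehat{\HALF}_s(A)$ by the definitions of $\HALF_s$ and of a Fourier coefficient, and the $[n]\setminus S'$-block contributes $\E\bigl[\ind_{\{\sum_{j\notin S'}x_j>0\}}\chi_B\bigr]=\tfrac12\widehat{\MAJ}_{n-s}(B)+\tfrac12\ind_{\{B=\emptyset\}}=\tfrac12\widehat{\MAJ}_{n-s}(B)$, using $\ind_{\{t>0\}}=\tfrac12\bigl(1+\MAJ(t)\bigr)$ and once more $B\neq\emptyset$. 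Multiplying the two factors yields $\tfrac12\widehat{\HALF}_s\bigl(\bar S\cup(S'\cap\{i\})\bigr)\,\widehat{\MAJ}_{n-s}\bigl((S\cup\{i\})\setminus S'\bigr)$, which is the claim.

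I do not expect a genuinely hard step here, given Lemma~\ref{lem:pop_parity}. The only points requiring care are bookkeeping the set identities uniformly over the two sub-cases $i\in S'$ and $i\notin S'$ --- the notation $S'\cap\{i\}$ and $(S\cup\{i\})\setminus S'$ is chosen precisely so a single formula covers both --- and checking that every ``constant'' contribution (the $\E[\chi_B]$ factor in the first term and the $\tfrac12\ind_{\{B=\emptyset\}}$ produced when $\MAJ$ is rewritten as an indicator in the second term) is annihilated by $B\neq\emptyset$, which holds because the $(k+1)$-element set $S\cup\{i\}$ cannot fit inside the $s$-element set $S'$ when $s<k$.
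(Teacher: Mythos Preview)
Your proposal is correct and follows essentially the same route as the paper's proof: both rewrite $x_i\chi_S$ as $\chi_{S\cup\{i\}}$, split the ReLU indicator via the scale separation between the $S'$-block and its complement, factor the parity along $A=(S\cup\{i\})\cap S'=\bar S\cup(S'\cap\{i\})$ and $B=(S\cup\{i\})\setminus S'$, and kill the unwanted terms using $|S\cup\{i\}|=k+1>s=|S'|$ so that $B\neq\emptyset$. Your explicit tracking of the $\ind_{\{B=\emptyset\}}$ contributions is if anything a bit cleaner than the paper's bookkeeping.
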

\begin{proof} 
For $i \not\in S$, using similar calculations as in the proof of Lemma \ref{lem:pop_parity}, we have
\begin{align*}
    &\E_{\x \sim \{\pm 1\}^n} \left[ \frac{\partial}{\partial w_i}h_\bw(\x) \cdot \chi_S(\x) \right]\\
     &= \E_{\x \sim \{\pm 1\}^n} \left[\sigma'(\inner{\bw,\x}) \cdot x_i \cdot \chi_S(\x) \right] \\
        &= \E_{\x \sim \{\pm 1\}^n} \left[\ind_{\left\lbrace\sum_{j \in S'}x_j + \epsilon \sum_{j \not\in S'}x_j > 0 \right\rbrace} \chi_{S \cup \{i\}}(\x)\right] \\
        &= \E_{\x \sim \{\pm 1\}^n} \left[\ind_{\left\lbrace\sum_{j \in S'}x_j = 0 \right\rbrace}\ind_{\left\lbrace \sum_{j \not\in S'}x_j > 0 \right\rbrace} \chi_{S \cup \{i\}}(\x)\right] + \E_{\x \sim \{\pm 1\}^n} \left[\ind_{\left\lbrace\sum_{j \in S'}x_j > 0\right\rbrace} \chi_{S \cup \{i\}}(\x)\right]\\
        &= \E_{\x \sim \{\pm 1\}^n} \left[\ind_{\left\lbrace\sum_{j \in S'}x_j = 0 \right\rbrace} \chi_{\bar{S} \cup (S' \cap \{i\})}(\x)\right]\E_{\x \sim \{\pm 1\}^n} \left[\ind_{\left\lbrace \sum_{j \not\in S'}x_j > 0 \right\rbrace} \chi_{(S \cup \{i\}) \setminus S' }(\x)\right] \\
        &\qquad+ \E_{\x \sim \{\pm 1\}^n} \left[\ind_{\left\lbrace\sum_{j \in S'}x_j > 0\right\rbrace} \chi_{S \cup \{i\}}(\x)\right]\\
        &= \E_{\x \sim \{\pm 1\}^{n}} \left[\HALF_{s}(\x_{S'}) \chi_{\bar{S} \cup (S' \cap \{i\})}(\x)\right]\E_{\x \sim \{\pm 1\}^{n}} \left[\frac{1}{2} \left(\MAJ_{n-s}(\x_{[n] \setminus S'}) + 1\right) \chi_{(S \cup \{i\}) \setminus S'}(\x)\right] \\
        &\qquad+ \E_{\x \sim \{\pm 1\}^{n}} \left[\frac{1}{2} \left(\MAJ_{s}(\x_{S'}) + 1\right)\chi_{\bar{S} \cup (S' \cap \{i\})}(\x)\right]  \E_{\x \sim \{\pm 1\}^{n}} \left[\chi_{(S \cup \{i\}) \setminus S'}(\x)\right]\\
        &= \frac{1}{2}\widehat{\HALF}_{s}(\bar{S} \cup (S' \cap \{i\})) \widehat{\MAJ}_{n-s}((S \cup \{i\}) \setminus S')  \\
        &\qquad+ \frac{1}{2}\left( \widehat{\MAJ}_{s}(\bar{S} \cup (S' \cap \{i\})) + \ind_{\left\lbrace \bar{S} = \phi \land i \not \in S' \right\rbrace} + \widehat{\HALF}_{s}(\bar{S} \setminus \{i\}) \right)  \ind_{\left\lbrace S \cup \{i\} \subseteq S'\right\rbrace}\\
        \intertext{Using the fact that $|S'| < |S|$ therefore $\ind_{\left\lbrace S \cup \{i\} \subseteq S'\right\rbrace} = 0$:}
        &= \frac{1}{2}\widehat{\HALF}_{s}(\bar{S} \cup (S' \cap \{i\})) \widehat{\MAJ}_{n-s}((S \cup \{i\}) \setminus S').
\end{align*}
\end{proof}

Similar to the over-sparse setting, we say a neuron is \textit{good} if $S' \subset S$, that is, if the selected variables are a subset of the relevant variables. Then we have,
\begin{lemma}[Population gradient at initialization for good neurons] \label{lem:good_neurons}
Assuming $s<k$ with $s,k$ even, for good neurons, for $\bar{S} = S' \cap S$, we have
\begin{align*}
\E_{\x \sim \{\pm 1\}^n} \left[ \frac{\partial}{\partial w_i}h_\bw(\x) \cdot \chi_S(\x) \right]= 
\begin{cases}
\frac{1}{2}\widehat{\HALF}_{s}(s)\widehat{\MAJ}_{n-s}(k - s - 1) & \text{ if }i \in S \setminus S',\\
\frac{1}{2}\widehat{\HALF}_{s}(s) \widehat{\MAJ}_{n-s}(k - s + 1) & \text{ if }i \not\in S,\\
0 &\text{ otherwise.}
\end{cases}
\end{align*}
\end{lemma}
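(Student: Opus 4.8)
The plan is to obtain all three cases as direct specializations of Lemmas~\ref{lem:pop_parity} and~\ref{lem:pop_non_parity} to a \emph{good} neuron. For such a neuron the set $S'$ of unit-weight coordinates satisfies $S'\subseteq S$, so that $\bar S:=S'\cap S=S'$ and $|\bar S|=s$; moreover every index $i$ lies in exactly one of $S'$, $S\setminus S'$, and $[n]\setminus S$, so the three cases are exhaustive. Throughout I will use that $\widehat{\HALF}_s$ and $\widehat{\MAJ}_{n-s}$ are permutation-invariant, so their values depend only on the size of the argument set.

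First I would handle the two nonzero cases. If $i\in S\setminus S'$, then $i\in S$, so Lemma~\ref{lem:pop_parity} applies; here $\bar S\setminus\{i\}=S'$ has size $s$ and $S\setminus(\bar S\cup\{i\})=S\setminus(S'\cup\{i\})$ has size $k-s-1$, which gives $\tfrac12\widehat{\HALF}_s(s)\,\widehat{\MAJ}_{n-s}(k-s-1)$ after replacing sets by their sizes. If $i\notin S$, then also $i\notin S'$ (since $S'\subseteq S$), so in Lemma~\ref{lem:pop_non_parity} we have $S'\cap\{i\}=\emptyset$, hence $\bar S\cup(S'\cap\{i\})=S'$ (size $s$) and $(S\cup\{i\})\setminus S'=(S\setminus S')\cup\{i\}$ (size $k-s+1$), giving $\tfrac12\widehat{\HALF}_s(s)\,\widehat{\MAJ}_{n-s}(k-s+1)$. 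Both match the claimed values.

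For the remaining case $i\in S'$, again $i\in S$, so Lemma~\ref{lem:pop_parity} gives $\tfrac12\widehat{\HALF}_s(\bar S\setminus\{i\})\,\widehat{\MAJ}_{n-s}(S\setminus(\bar S\cup\{i\}))$; but now $\bar S\setminus\{i\}=S'\setminus\{i\}$ has \emph{odd} size $s-1$ (as $s$ is even), and since $\HALF_s$ is an even Boolean function its Fourier coefficients supported on odd-size sets vanish, so this term is $0$. (One could equally note that $S\setminus(\bar S\cup\{i\})=S\setminus S'$ has even size $k-s$ and that $\widehat{\MAJ}$ vanishes on even-size sets because $\MAJ$ is odd.) The only real content in the whole argument is this size/parity bookkeeping---verifying that the three relevant sizes come out as $s$, $k-s-1$, $k-s+1$, and $s-1$, and invoking the even/odd constraints on the Fourier supports---so I do not expect a genuine obstacle, the delicate Fourier computation having already been carried out in Lemmas~\ref{lem:pop_parity} and~\ref{lem:pop_non_parity}.
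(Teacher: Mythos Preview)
Your proof is correct and follows essentially the same approach as the paper: both specialize Lemmas~\ref{lem:pop_parity} and~\ref{lem:pop_non_parity} to the good-neuron setting $S'\subseteq S$, reduce the Fourier arguments to their sizes, and use the parity constraint on the support of $\widehat{\HALF}_s$ to kill the $i\in S'$ case. The only cosmetic difference is that the paper lists all four $(i\in S?,\, i\in S'?)$ combinations before observing that the odd-size $\HALF$ coefficients vanish, whereas you note up front that $i\notin S,\, i\in S'$ is vacuous and work with three cases.
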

\begin{proof}
Using Lemma \ref{lem:pop_parity} and \ref{lem:pop_non_parity}, we get \begin{enumerate}
    \item $i \in S, i \in S'$: 
    \[
    \E_{\x \sim \{\pm 1\}^n} \left[ \frac{\partial}{\partial w_i}h_\bw(\x) \cdot \chi_S(\x) \right]= \frac{1}{2}\widehat{\HALF}_{s}(\bar{S} \setminus\{i\})\widehat{\MAJ}_{n-s}(S \setminus \bar{S})
    \]
    \item $i \in S, i \not \in S'$: 
    \[
    \E_{\x \sim \{\pm 1\}^n} \left[ \frac{\partial}{\partial w_i}h_\bw(\x) \cdot \chi_S(\x) \right]= \frac{1}{2}\widehat{\HALF}_{s}(\bar{S})\widehat{\MAJ}_{n-s}(S \setminus (\bar{S}  \cup \{i\}))
    \]
    \item $i \not\in S, i \in S'$: 
    \[
    \E_{\x \sim \{\pm 1\}^n} \left[ \frac{\partial}{\partial w_i}h_\bw(\x) \cdot \chi_S(\x) \right]= \frac{1}{2}\widehat{\HALF}_{s}(\bar{S} \cup \{i\}) \widehat{\MAJ}_{n-s}(S \setminus \bar{S})
    \]
    \item $i \not\in S, i \not\in S'$: 
    \[
    \E_{\x \sim \{\pm 1\}^n} \left[ \frac{\partial}{\partial w_i}h_\bw(\x) \cdot \chi_S(\x) \right]= \frac{1}{2}\widehat{\HALF}_{s}(\bar{S}) \widehat{\MAJ}_{n-s}((S \setminus \bar{S}) \cup \{i\})
    \]
\end{enumerate}
Since $S' \subseteq S$, the arguments to the $\widehat{\HALF}$ for (1) and (3) are odd. The corresponding Fourier coefficients for odd sets for $\HALF$ are 0 (see \cite{o2014analysis}), thus we have the desired result.
\end{proof}

\begin{theorem}[Formal version of Theorem \ref{thm:undersparse}]   %
    Fix $s,k,\epsilon$ such that $s < k, \epsilon > 0$. Then for network width $r \geq \Omega((n/k)^s)$, the initialization scheme proposed here guarantees that for every $(n,k)$-parity distribution $\cd$, with probability at least $0.99$ over the choice of sample and initialization, after one step of batch gradient descent with sample size $m = O((n/k)^{k-s-1})$ and appropriate choice of learning rate, there is a subnetwork in the one-layer ReLU MLP that has at least $\frac{1}{2} - \epsilon$ correlation with the parity function.
\end{theorem}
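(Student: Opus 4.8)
The plan is to re-run the over-sparse argument of Section~\ref{subsec:feature-selection-proof} with ``partial progress'': a lucky (\emph{good}) neuron is now one whose $s$ active coordinates $S'$ form a subset of the true set $S$, so that after one gradient step it must still discover the missing $k-s$ coordinates $S\setminus S'$. The four ingredients are (a) the width needed for many good neurons with each prescribed bias; (b) a quantitative ``partial-progress'' Fourier gap controlling how many samples the step needs to separate $S\setminus S'$ from the irrelevant coordinates; (c) a description of a good neuron's first-layer weights after one truncated step; (d) a readout over good neurons correlating with $\chi_S$. For (a), a random $s$-sparse weight is good with probability $\binom{k}{s}/\binom{n}{s}=\Theta_k\!\big((k/n)^s\big)$, and conditioned on goodness its bias is uniform over the $k$ shifts, so Chernoff plus a union bound over bias classes (exactly as in Lemmas~\ref{lem:good_neuron_prob} and~\ref{lem:good_init}) shows $r=\widetilde{\Omega}\!\big((n/k)^s\big)$ gives, with probability $\ge 0.995$, at least one good neuron per bias class. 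Bad neurons are handled as in Lemma~\ref{lem:bound_bad}: their population gradient is supported on at most one coordinate with $O_k(1)$ magnitude, so weight decay and truncation keep them inactive and they are simply dropped from the subnetwork.

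Ingredient (b) is the technical heart and the step I expect to be the main obstacle. By Lemmas~\ref{lem:pop_parity}--\ref{lem:good_neurons}, for a good neuron the population gradient equals $\tfrac12\widehat{\HALF}_s(s)\,\widehat{\MAJ}_{n-s}(k-s-1)$ on each coordinate of $S\setminus S'$, $\tfrac12\widehat{\HALF}_s(s)\,\widehat{\MAJ}_{n-s}(k-s+1)$ on each coordinate outside $S$, and exactly $0$ on $S'$. Feeding Lemma~\ref{lem:half} into these formulas, I would establish the two facts that make the theorem work: (i) the ``signal'' $\big|\widehat{\HALF}_s(s)\,\widehat{\MAJ}_{n-s}(k-s-1)\big|=\Theta_k\!\big(n^{-(k-s-1)/2}\big)$, using $|\widehat{\HALF}_s(s)|=\binom{s}{s/2}2^{-s}$ and the closed form for $\widehat{\MAJ}$; and (ii) the gap ratio $\big|\widehat{\MAJ}_{n-s}(k-s+1)/\widehat{\MAJ}_{n-s}(k-s-1)\big|=\Theta(k/n)$, i.e.\ off-$S$ coordinates receive a gradient smaller by a factor $\Theta(n/k)$. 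Combining (i) with Hoeffding's inequality and a union bound over the $\le nr$ coordinates then shows $m=\widetilde{O}\!\big((n/k)^{k-s-1}\big)$ samples (constants absorbing $\mathrm{poly}(k,1/\epsilon)$) make the empirical gradient within a quarter of the signal everywhere simultaneously --- precisely the sample complexity ``inherited'' from the $(n-s,k-s)$-parity problem. Carrying the constants through Lemma~\ref{lem:half} is the genuinely new part relative to \cite{barak2022hidden}, where the neuron starts with no relevant coordinates.

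For (c), choose the truncation threshold a constant fraction of the signal so that the truncated empirical gradient vanishes off $S$ and on $S'$ and, on $S\setminus S'$, equals the signal up to a factor $(1\pm\tfrac14)$; with first-layer weight decay shrinking the unit $S'$-weights down to $\Theta(\epsilon)$ scale and a matched first-layer learning rate $\eta=\Theta_k(\epsilon\cdot n^{(k-s-1)/2})$, a good neuron's weight then stays $\Theta(\epsilon)$-scaled on $S'$ (gradient zero there), becomes $\Theta(\epsilon)$-scaled on $S\setminus S'$ (calibrate $\eta$ so the two coincide), and stays $O(\epsilon^2)$ off $S$ (so $\sum_{j\notin S}$ contributes a perturbation $\xi(\x)$ that is $O(1/n)$ in $\ell_1$). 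Hence the neuron is $\approx \sigma\!\big(\epsilon(\textstyle\sum_{j\in S}x_j)+b_i+\xi(\x)\big)$ with the $k$ bias shifts now of scale comparable to the weights.

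For (d), since $k$ is even $\chi_S$ is a fixed function of $t:=\sum_{j\in S}x_j$, and as in the construction of $\bu^\star$ in Lemma~\ref{lem:first_step} the ReLUs over one good neuron per bias class span functions of $t$ with enough breakpoints to interpolate it; combining this with several choices of $S'$ (to handle the event $\sum_{j\in S'}x_j=0$, where $\chi_{S'}$ is a constant and the neuron's nonlinearity exposes $\chi_{S\setminus S'}$) produces a subnetwork equal to $\chi_S$ on a $\ge 1/2-\epsilon/2$ fraction of inputs, up to the error from $\xi$ and the $O(\mathrm{signal})$ weight slack; a Lipschitz bound on the piecewise-linear readout controls that error, yielding a subnetwork with correlation at least $1/2-\epsilon$ with $\chi_S$. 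The only place the argument genuinely differs from a routine re-run of the over-sparse proof is (b), so that is where I would concentrate the effort.
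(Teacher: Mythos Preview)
Your proposal is essentially correct and follows the same approach as the paper: identify good neurons ($S'\subseteq S$), compute their population Fourier gap via Lemmas~\ref{lem:pop_parity}--\ref{lem:good_neurons} and Lemma~\ref{lem:half}, use concentration to get the $\widetilde{O}\big((n/k)^{k-s-1}\big)$ sample complexity, and then choose $(\eta,\lambda,\gamma)$ so that truncation plus weight decay leaves every good neuron with weight $\approx\epsilon/(2k)$ on all of $S$ and $O(\epsilon^2)$ elsewhere. The paper does exactly this, only more tersely.

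Two minor points. First, your appeal to Lemma~\ref{lem:bound_bad} for bad neurons is misplaced---that lemma is for the over-sparse regime and its ``at most one nonzero coordinate'' conclusion does not hold here---but this is harmless, since the theorem only asks you to exhibit a subnetwork and you may simply exclude bad neurons without analyzing them. Second, your part (d) over-engineers the readout. You already establish in (c) that after the step every good neuron computes $\approx\sigma\big(\tfrac{\epsilon}{2k}\sum_{j\in S}x_j+b_i\big)$; this does not depend on which $S'\subseteq S$ the neuron started from, so there is no need to combine ``several choices of $S'$'' or to restrict to the event $\sum_{j\in S'}x_j=0$. One good neuron per bias value (plus the global bias $\beta$) already gives $k+1$ linearly independent evaluations on the $k+1$ values of $t=\sum_{j\in S}x_j$, because the $k$ bias values in the initialization place the ReLU breakpoints strictly between consecutive values of $t$. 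This yields a subnetwork computing $\chi_S$ exactly up to the $O(\epsilon^2)$ perturbation, hence correlation well above $1/2-\epsilon$. The paper's proof is sketchier than yours on this readout step, but the underlying construction is the same.
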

\begin{proof}
To compute the parity function, we need to have $k$ neurons which identify the correct coordinates and have the appropriate biases. In order to identify the correct coordinates, we will focus only on good neurons, and on population gradient. We will then argue by standard concentration arguments that this holds from samples.

Let us consider a good neuron with $S' \subseteq S$. Firstly note that the scale of the bias is set such that it does not affect the gradient, Since it is at most $\epsilon/2$. Thus we can assume the no bias case for gradient computation. For all $i \in S \setminus S'$, that is, the set of relevant variables that are not selected in the initialization, let $\xi_{S \setminus S'}$ denote the gradient at initialization, and for all $i \not \in S$, that is, the set of irrelevant variables, let $\xi_{[n] \setminus S}$ denote the gradient at initialization. Then using Lemma \ref{lem:good_neurons} and Lemma \ref{lem:half}, we have
\[
\frac{\left|\xi_{S \setminus S'}\right|}{\left|\xi_{[n] \setminus S}\right|} =  \frac{n - s}{k - s - 1}  > 1.
\]
With $\eta$ and $\gamma$ being 0 on the bias terms and the second layer, $\eta = -\frac{\epsilon}{2k\left|\xi_{S \setminus S'}\right|}$ for the first layer weights, $\lambda = 1 - \frac{\epsilon}{2k}$ and $\gamma = \left|\xi_{[n] \setminus S}\right|$ , one step of truncated population gradient descent gives us, for all $i \in S$, $w_i = \frac{\epsilon}{2k}$ and for all $i \not \in S$, $|w_i| = \frac{\epsilon^2}{k}$. Since our initialization has two copies of the same neuron with second layer weights $1$ and $-1$, one of them will have the gradient in the correct direction, ensuring the above, in particular, the one with the weight being $\sgn(\xi_{S \setminus S'})$. Since $\epsilon$ can be set to be arbitrarily small, terms with $\epsilon^2$ can be ignored in comparison to terms with $\epsilon$. Thus the non-parity coefficients do not affect the output of the function and we get the appropriate parity coefficients (scaled by $\epsilon/2k$). To extend these guarantees to the batch gradient setting, we need to compute gradients to precision 
\[
\tau = \left|\xi_{[n] \setminus S}\right|/2 = c_s (n - s)^{-\frac{k - s - 1}{2}}
\]
for some constant $c_s$ dependent only on $s$. This implies a sample complexity of $O(c_s^2(n - s)^{k - s - 1})$ using standard Chernoff bound. As for the width, we need to ensure that we have the required number of good neurons with appropriate bias. The probability of a randomly initialized neuron to be good is 
\[
\frac{\binom{k}{s}}{\binom{n}{s}} = \Theta\left(\left(\frac{k}{n}\right)^s\right).
\]
The probability of choosing the appropriate bias is $1/(k+1)$. Thus to be able to choose $k+1$ good neurons with correct biases, we need width $O(k^2(n/k)^s)$.
\end{proof}

We provide some additional remarks on the proof of Theorem~\ref{thm:undersparse}:
\begin{itemize}
    \item The sample complexity compared to the dense initialization studied by \cite{barak2022hidden} improves by a factor of $n^s$, at the cost of a higher width by a factor of $n^s$.
    \item We conjecture that Theorem~\ref{thm:undersparse} can be strengthened to an end-to-end guarantee for the full network, like Theorem~\ref{thm:oversparse_upper_bound}. The technical challenge lies in analyzing how weight decay uniformly prunes the large irrelevant coordinates, without decaying the good subnetwork. We believe that this occurs robustly (from the experiments on sparse initialization), but requires a more refined analysis of the optimization trajectory.
\end{itemize}

\section{Full experimental results}
\label{app-sec:experiments}

\subsection{Full sweeps over dataset size and width}
\label{subsec:data-width-experiments}

\begin{figure}
    \centering
    \includegraphics[width=\linewidth]{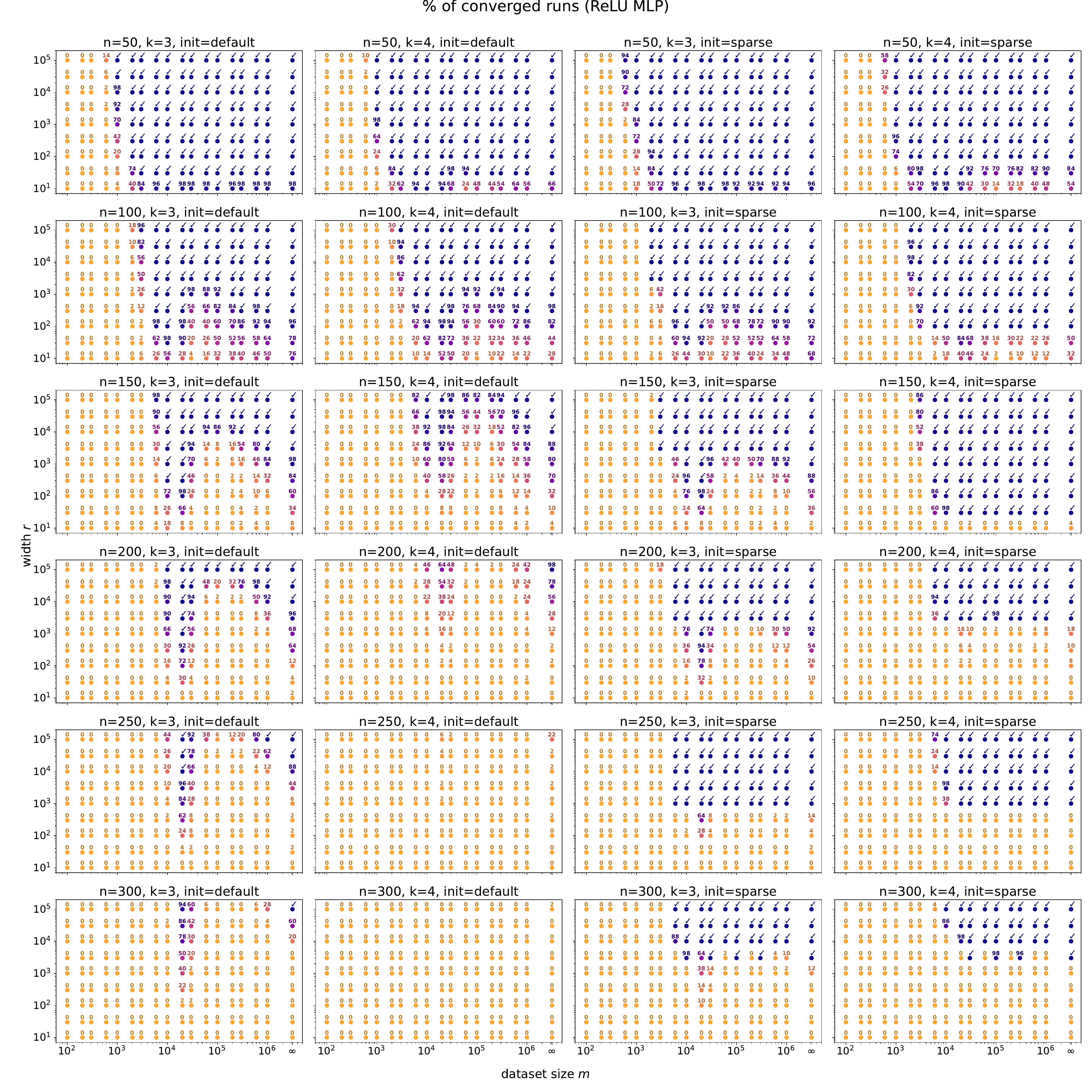}
    \caption{Full results for the sweep over dataset sizes and MLP widths. Each point represents $50$ independent training runs of a 2-layer ReLU MLP on the offline sparse parity problem (input dimension $n$, parity degree $k$, initialization scheme, width $r$, dataset size $m$). All other algorithmic choices (learning rate $\eta = 0.1$, weight decay $\lambda = 0.01$, batch size $B = 32$, number of training iterations $T = 10^5$) are kept the same. We note the following: (1) \textbf{A success frontier}, where wide networks can learn at small sample sizes $m \ll BT$ (far outside online regime); (2) \textbf{Monotonic benefit of width}: overparameterization does not worsen the failure mode of overfitting in this setting, and amplifies success probabilities; (3) \textbf{Benefit of sparse axis-aligned initialization}: for sufficiently large $n$ where default initialization no longer works, sparse initialization scheme enlarges the feasible regime; (4) \textbf{Non-monotonic effects of dataset size:} there are unpredictable failures as we vary $m$ (horizontal slices of these plots). }
    \label{fig:full_sweep}
\end{figure}

In our main set of large-scale synthetic experiments, we train a large number of 2-layer MLPs to solve various $(n,k)$-sparse parity problems, from $m$ samples. The full set of hyperparameters is listed below:
\begin{itemize}
    \item Problem instance sizes: $n \in \{50, 100, 150, \ldots, 300\}$, $k \in \{3,4\}$. Note that \citet{barak2022hidden} investigate empirical \emph{computational time} scaling curves for larger $k$ (up to $8$) in the online ($m=\infty$) setting. We are able to observe convergence for larger $k$ in the offline setting, but we omit these results from the systematic grid sweep (the feasible regime is too small in terms of $n$).
    \item Dataset size: $m \in \{100, 200, 300, 600, 1000, 2000, 3000, \ldots, 600000, 1000000\}$. We also include runs in the online regime (rightmost columns in Figure~\ref{fig:full_sweep}), which correspond to the regime studied by \citet{barak2022hidden}.
    \item Network width: $r \in \{10, 30, 100, 300, \ldots, 10000, 30000, 100000\}$.
    \item Initialization scheme: PyTorch default (uniform on the interval $[-1/\sqrt{n}, 1/\sqrt{n}]$), and random $2$-sparse rows. In coarser-grained hyperparameter searches, we found $2$ to be the optimal sparsity constant for large-width ($\geq 1000$) regimes studied in this paper; we do not fully understand why this is the case. We also keep the PyTorch default initialization scheme (uniform on the width-dependent interval $[-1/\sqrt{r}, 1/\sqrt{r}]$) for the second layer, and use default-initialized biases.
\end{itemize}

At each point in this hyperparameter space, we conduct $50$ training runs, and record the \emph{success probability}, defined as the probability of achieving test error $\leq 10\%$ on a held-out sample of size $10^4$ within $T = 10^5$ training iterations. The hyperparameters for SGD, selected via coarse-grained hyperparameter search to optimize for convergence time in the $n=200, k=3, r=10000$ setting, are as follows: minibatch size $B = 32$; learning rate $\eta = 0.1$; weight decay $\lambda = 0.01$.

Figure~\ref{fig:full_sweep} summarizes all of our runs, and overviews all of the findings (1) through (4) enumerated in the main paper. We go into more detail below:

\begin{enumerate}
    \item[(1)] \textbf{A ``success frontier'': large width can compensate for small datasets.} We observe convergence and perfect generalization when $m \ll n^k$. In such regimes, which are far outside the online setting considered by \citet{barak2022hidden}, high-probability sample-efficient learning is enabled by large width. Note that neither our theoretical or empirical results have sufficient granularity to predict or measure the precise way the smallest feasible sample size $m$ scales with the other size parameters (like $n,k,r$). The theoretical upper bounds show that if $r = \Omega(n^k)$, idealized algorithms (modified for tractability of analysis) can obtain $O(\poly(n))$ or even $O(\log(n))$ sample complexity, and smaller $r$ can yield milder reductions of the exponent.
    \item[(2)] \textbf{Width is monotonically beneficial, and buys data, time, and luck.} Despite the capacity of wider neural networks to overfit larger datasets, we find that there are \emph{monotonic sample-efficiency benefits} to increasing network width, in \emph{all} of the hyperparameter settings considered in the grid sweep. This can be quickly quantitatively confirmed by starting at any point in Figure~\ref{fig:full_sweep}, and noting that success probabilities only increase\footnote{Small exceptions (such as $98\% \rightarrow 96\%$ for $n=100, k=4, m=2000$) are all within the standard error margins of Bernoulli confidence intervals.} going upwards (increasing $r$, keeping all other parameters equal). Along some of these vertical slices, we observe that transitions from $0\%$ to $100\%$ are present: at these corresponding dataset sizes $m$, \emph{large width makes sample-efficient learning possible}. Figure~\ref{fig:main-detail} (center) shows this in greater detail, by choosing a denser grid of sample sizes $m$ near the empirical statistical limit.
    \item[(3)] \textbf{Sparse axis-aligned initialization buys data, time, and luck.} Used in conjunction with a wide network, we observe that a sparse, axis-aligned initialization scheme yields strong improvements on all of these axes. This can be seen by comparing the pairs of subplots in columns 1 vs. 3 and 2 vs. 4 in Figure~\ref{fig:full_sweep}. We found that $s = 2$ (i.e. initialize every hidden-layer neuron with a random $2$-hot weight vector) works best for the settings considered in this study.
    \item[(4)] \textbf{Intriguing effects of dataset size.} Unlike the monotonicity along vertical slices in Figure~\ref{fig:full_sweep}, some of the \emph{horizontal} slices exhibit non-monotonic success probabilities. Namely, as $m$ increases, keeping all else the same, the network enters and exits a first feasible regime; then, at large enough sample sizes (including the online setting), learning is observed to succeed again. Sparse initialization reduces this counterintuitive behavior, but not entirely (see, e.g., the $n=200, k=3$ cell). We do not attempt to explain this phenomenon; however, we found in preliminary investigations that the locations of the transitions are sensitive to the choice of weight decay hyperparameter. Figure~\ref{fig:main-detail} (right) shows this in greater detail, plotting median convergence times (as defined above) instead of success probabilities.
\end{enumerate}

\subsection{Lottery ticket subnetworks}\label{subsec:lottery}
Our theoretical analysis of sparse networks and experimental findings suggest that width provides a form of parallelization: wider networks have a higher probability of containing lucky neurons which have sufficient Fourier gaps at initialization to learn from the dataset. In Figure~\ref{fig:lottery} we perform an experiment in the style of \cite{frankle2018lottery}, showing that indeed the neurons which end up being important in a wide sparsely-initialized network form an unusually lucky `winning ticket' subnetwork. When we rewind the weights of this subnetwork to initialization, its test error starts out poor, but when we train just this subnetwork from initialization its performance quickly improves, unlike the large majority of randomly initialized subnetworks of the same size. For this experiment, batch size=32 and learning rate=0.1.

\begin{figure}
    \centering
    \includegraphics[width=0.9\linewidth]{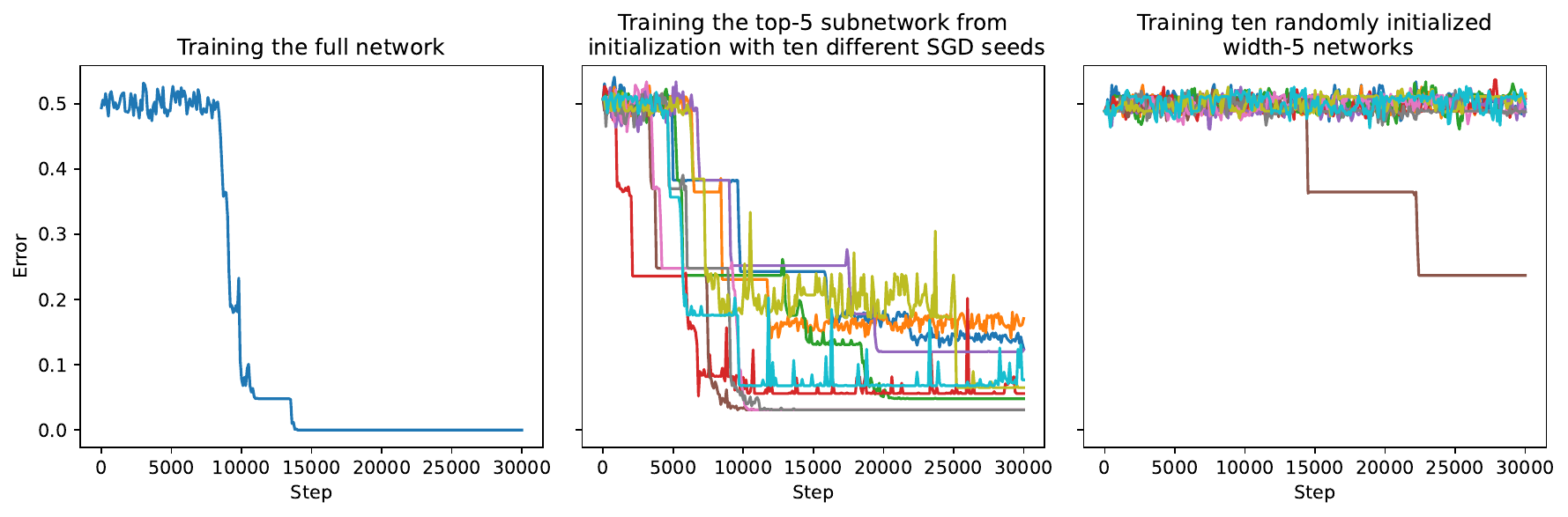}
    \caption{(Lottery tickets) Left: Training a width-100 MLP on the (n=50, k=5)-online sparse parity task, where each hidden neuron initialized with 2 non-zero incoming weights. Center: We prune all but the top 5 neurons by weight norm at the end of training; rewind weights to the original initialization, and retrain with various SGD random seeds. Right: The same as Center, but the weights are randomly reinitialized in each run.}
    \label{fig:lottery}
\end{figure}

\subsection{Training wide \& sparsely-initialized MLPs on natural tabular data}
\label{subsec:tabular-experiments}

\begin{figure}
    \centering
    \includegraphics[width=0.98\linewidth]{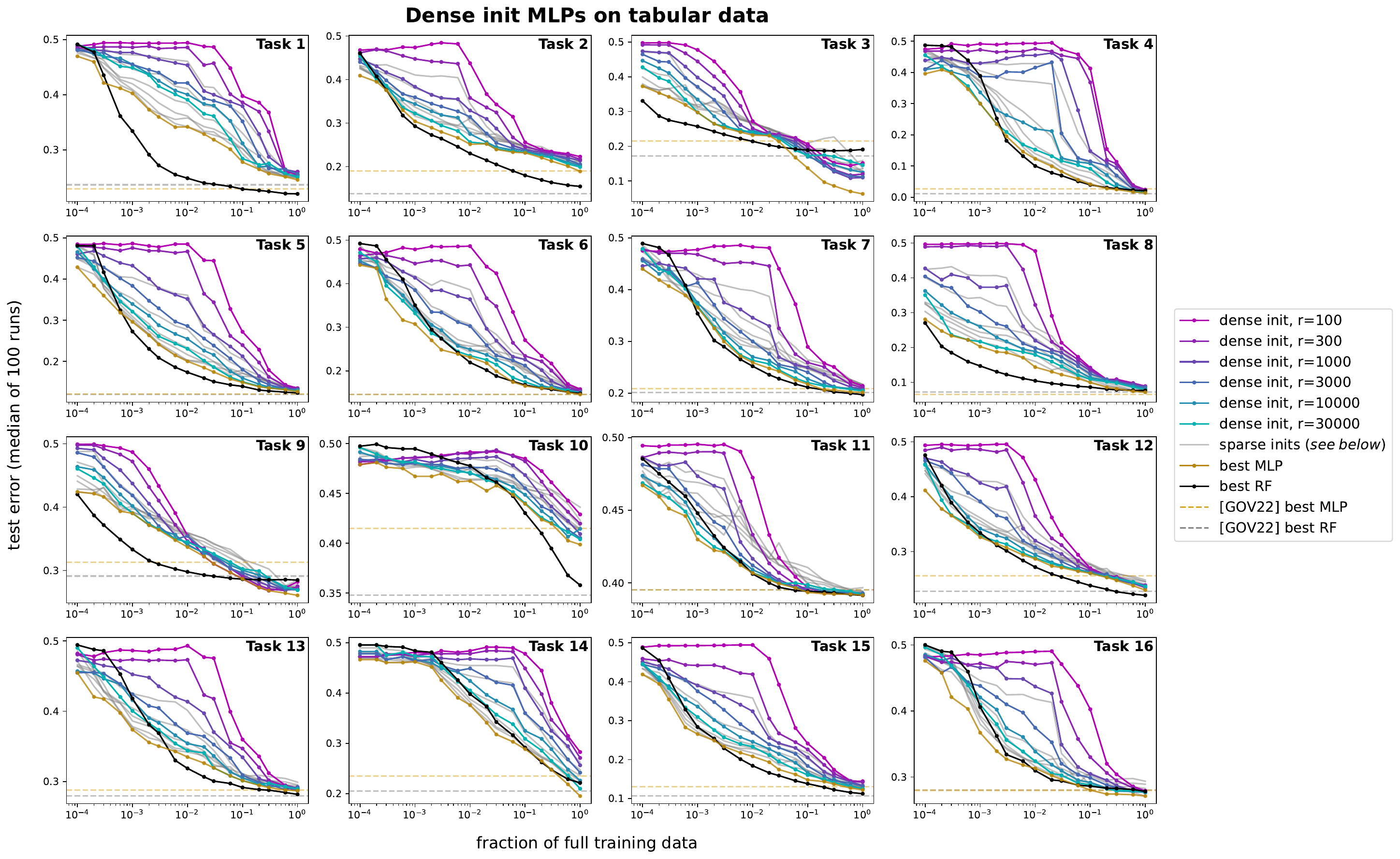}
    \includegraphics[width=0.98\linewidth]{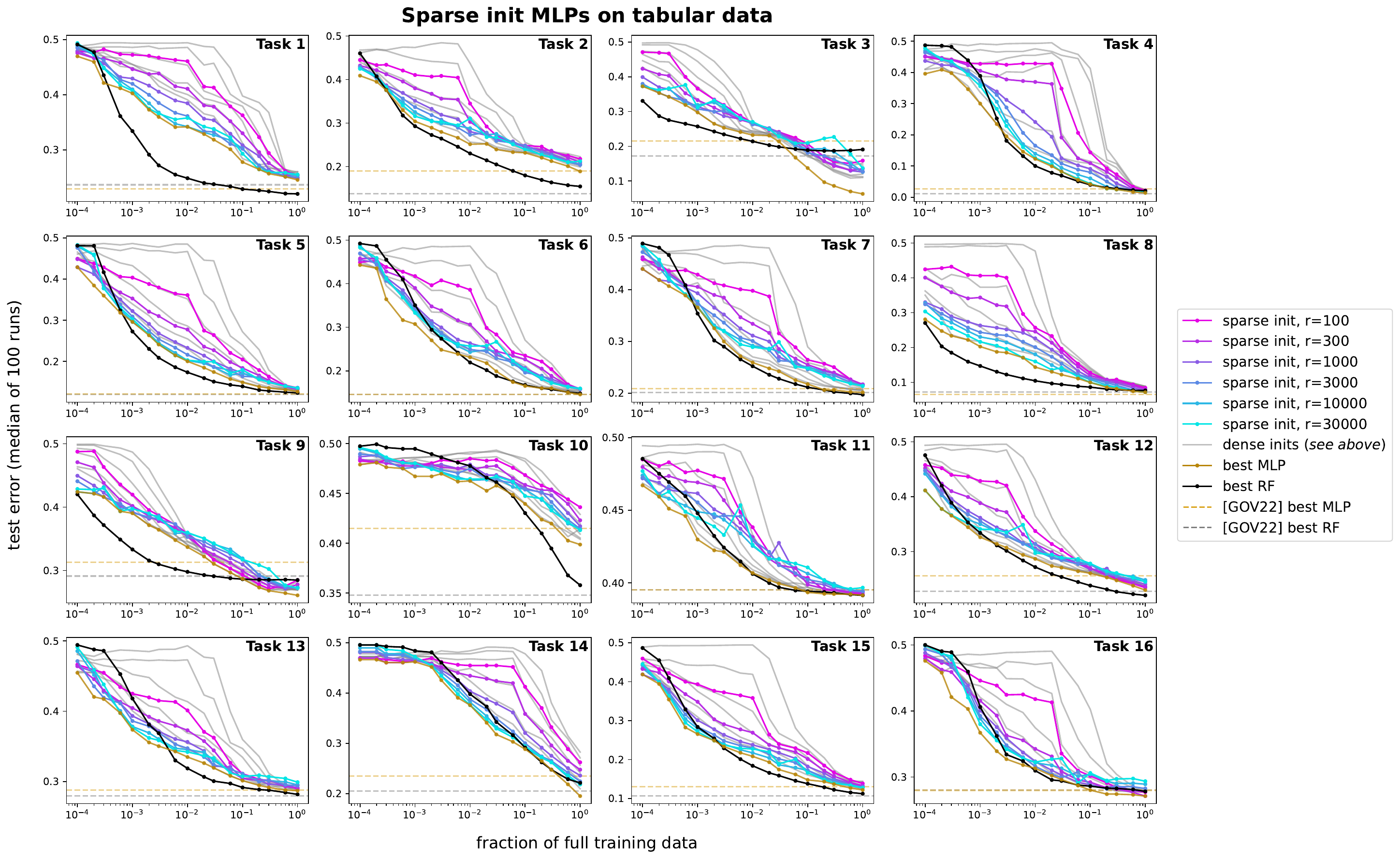}
    \caption{Full results of our tabular data experiments (following \citet{grinsztajn2022why}), varying MLP width $r$, sample size $m$ (via downsampling the training data), and initialization sparsity. Both width and sparsity tend to improve generalization, especially for small datasets. See Tables~\ref{tab:tabular-meta} and \ref{tab:tabular-results} for further details.}
    \label{fig:tabular-full}
\end{figure}

As a preliminary investigation of whether our findings translate well to realistic settings, we conduct experiments on the tabular benchmark curated by \cite{grinsztajn2022why}. For simplicity, we use all 16 numerical classification tasks from the benchmark. These datasets originate from diverse domains such as health care, finance, and experimental physics. Our main comparison is between MLPs (with algorithmic choices inspired by our sparse parity findings) and random forests \cite{breiman2001random}. We chose random forests as a baseline because they are known to achieve competitive performance with little tuning.
Figure~\ref{fig:tabular-full} summarizes our results, in which we find that wide and sparsely-initialized MLPs improve sample efficiency on small tabular datasets. We describe these experiments in full detail below.

\paragraph{Data preprocessing.}
We standardize the dataset, centering each feature to have mean 0 and normalizing each feature to have standard deviation 1.\footnote{Note that \cite{grinsztajn2022why} instead transform each feature such that its empirical marginal distribution approximates a standard Gaussian.}
For each task, we set aside 10\% of the data for the test set, and downsample varying fractions of the remaining data to form a training set. We vary the downsampling fraction on an exponential grid: \{1, 0.6, 0.3, 0.2, 0.1, 0.06, 0.03, 0.02, 0.01, 0.006, 0.003, 0.002, 0.001, 0.0006, 0.0003, 0.0002, 0.0001\}. In each of 100 i.i.d. trials for each setting, we re-randomize the validation split as well as the downsampled training set.

\paragraph{Algorithms.} MLPs are trained using the same architectural choices noted in Section~\ref{subsec:data-width-experiments}, except the hyperparameters noted below.
We use the scikit-learn \cite{scikit-learn} RandomForestClassifier. Hyperparameters not mentioned are set to library defaults.

\paragraph{Hyperparameter choices for MLPs.}
\begin{itemize}
    \item Width $r$: $\{100, 300, 1000, 3000, 10000, 30000\}$. For deeper networks, the hidden layers are set to $r \times r$.
    \item Depth (number of hidden layers): $\{1,2,3\}$. In all of these settings, depth-$1$ networks are nearly uniformly outperformed by deeper ones of the same width.
    \item Sparsity of initialization: $s \in \{2, 4\}$, and also dense (uniform) initialization. $4$-sparse initialization is nearly uniformly outperformed by $2$-sparse (in agreement with our experiments on sparse parity).
    \item Weight decay: $\{0, 0.01\}$. In these settings, this hyperparameter had a minimal effect.
    \item Learning rate: $10^{-3}$
    \item Batch size: $256$
    \item Training epochs: $100$
\end{itemize}

\paragraph{Hyperparameter choices for RandomForestClassifier.}
\begin{itemize}
    \item \texttt{max\_depth}: $\{-1, 2, 3, 4, 10\}$
    \item \texttt{max\_features}: $\{1, 2, 3, 4, \mathrm{sqrt}, \mathrm{None}\}$
    \item \texttt{n\_estimators}: $\{10, 100, 1000\}$
\end{itemize}

\paragraph{Larger widths.} For all of the tasks except 2 and 9 (since these are significantly larger datasets), we include extra runs with even larger networks: depth-2 MLPs with non-uniform width (i.e. sequence of hidden layer dimensions) $\{(100000,10000), (10000,100000)\}$.

\paragraph{Plots in Figure~\ref{fig:tabular-full}.} For clarity of presentation, in the plots where we vary MLP width and initialization sparsity, we fix depth to $3$ and sparsity level $s=2$. Qualitative trends are similar for other settings. The gold ``best MLP'' curves show the best median-of-100 validation losses across all architectures in the search space.

\paragraph{Comparison with full-data baselines.} To ensure that our baseline algorithm choices for MLPs and random forests are reasonable, we present a comparison with the results of \cite{grinsztajn2022why} (who performed extensive hyperparameter search) on the full datasets. These are shown as the dotted lines in Figure~\ref{fig:tabular-full}, as well as Table~\ref{tab:tabular-results}.

\begin{table}[]
    \centering
    
\footnotesize
\begin{tabular}{|l|l|l|l|}
\hline
\multicolumn{1}{|c|}{\textbf{Task}} & \multicolumn{1}{c|}{\textbf{OpenML identifier}} & \multicolumn{1}{c|}{\textbf{\# features}} & \multicolumn{1}{c|}{\textbf{\# examples}} \\ \hline
1                                      & credit                                          & 11                                        & 16714                                     \\ \hline
2                                      & electricity                                     & 8                                         & 38474                                     \\ \hline
3                                      & covertype                                       & 11                                        & 566602                                    \\ \hline
4                                      & pol                                             & 27                                        & 10082                                     \\ \hline
5                                      & house\_16H                                      & 17                                        & 13488                                     \\ \hline
6                                      & MagicTelescope                                  & 11                                        & 13376                                     \\ \hline
7                                      & bank-marketing                                  & 8                                         & 10578                                     \\ \hline
8                                      & MiniBooNE                                       & 51                                        & 72998                                     \\ \hline
9                                      & Higgs                                           & 25                                        & 940160                                    \\ \hline
10                                     & eye\_movements                                  & 21                                        & 7608                                      \\ \hline
11                                     & Diabetes130US                                   & 8                                         & 71090                                     \\ \hline
12                                     & jannis                                          & 55                                        & 57580                                     \\ \hline
13                                     & default-of-credit-card-clients                  & 21                                        & 13272                                     \\ \hline
14                                     & Bioresponse                                     & 420                                       & 3434                                      \\ \hline
15                                     & california                                      & 9                                         & 20634                                     \\ \hline
16                                     & heloc                                           & 23                                        & 10000                                     \\ \hline
\end{tabular}
    \vspace{1cm}
    \caption{Metadata for the 16 tabular classification benchmarks, curated by \citet{grinsztajn2022why} (January 2023 version, benchmark suite ID 337) and publicly available via OpenML.}
    \label{tab:tabular-meta}
\end{table}

\begin{table}[]
    \centering
    
\footnotesize

\begin{tabular}{|l|ccc|ccc|lll|}
\hline
\multicolumn{1}{|c|}{\multirow{2}{*}{\textbf{Task}}} & \multicolumn{3}{c|}{\textbf{Best MLP}}                       & \multicolumn{3}{c|}{\textbf{Best RF}}                        & \multicolumn{3}{c|}{\citep{grinsztajn2022why}}                                                                   \\
\multicolumn{1}{|c|}{}                               & \textbf{full}                      &\textcolor{Purple}{\textbf{10\%}}&\textcolor{NavyBlue}{\textbf{1\%}}& \textbf{full}                      &\textcolor{Purple}{\textbf{10\%}}&\textcolor{NavyBlue}{\textbf{1\%}}& \multicolumn{1}{c}{\textbf{MLP}} & \multicolumn{1}{c}{\textbf{RF}} & \multicolumn{1}{c|}{\textbf{Best model}} \\ \hline
1                                                    & \multicolumn{1}{c|}{24.5}          & \multicolumn{1}{c|}{\textcolor{Purple}{27.8}}          &\textcolor{NavyBlue}{34.2}& \multicolumn{1}{c|}{\textbf{22.0}} & \multicolumn{1}{c|}{\textbf{\textcolor{Purple}{22.9}}} &\textcolor{NavyBlue}{\textbf{24.8}}& \multicolumn{1}{c|}{22.9} & \multicolumn{1}{c|}{23.6} & 22.5 (GBT)          \\ \hline
2                                                    & \multicolumn{1}{c|}{18.9}          & \multicolumn{1}{c|}{\textcolor{Purple}{23.2}}          &\textcolor{NavyBlue}{25.1}& \multicolumn{1}{c|}{\textbf{15.4}} & \multicolumn{1}{c|}{\textbf{\textcolor{Purple}{18.0}}} &\textcolor{NavyBlue}{\textbf{23.0}}& \multicolumn{1}{c|}{23.6} & \multicolumn{1}{c|}{13.7} & 12.8 (XGB)          \\ \hline
3                                                    & \multicolumn{1}{c|}{\textbf{6.3}}  & \multicolumn{1}{c|}{\textbf{\textcolor{Purple}{13.7}}} &\textcolor{NavyBlue}{23.2}& \multicolumn{1}{c|}{19.1}          & \multicolumn{1}{c|}{\textcolor{Purple}{18.9}}          &\textcolor{NavyBlue}{\textbf{21.4}}& \multicolumn{1}{c|}{21.6} & \multicolumn{1}{c|}{17.1} & 17.1 (RF)           \\ \hline
4                                                    & \multicolumn{1}{c|}{\textbf{1.4}}  & \multicolumn{1}{c|}{\textcolor{Purple}{4.2}}           &\textcolor{NavyBlue}{12.3}& \multicolumn{1}{c|}{2.1}           & \multicolumn{1}{c|}{\textbf{\textcolor{Purple}{4.0}}}  &\textcolor{NavyBlue}{\textbf{10.0}}& \multicolumn{1}{c|}{6.3}  & \multicolumn{1}{c|}{1.9}  & 1.7 (XGB)           \\ \hline
5                                                    & \multicolumn{1}{c|}{12.7}          & \multicolumn{1}{c|}{\textcolor{Purple}{14.9}}          &\textcolor{NavyBlue}{20.0}& \multicolumn{1}{c|}{\textbf{12.4}} & \multicolumn{1}{c|}{\textbf{\textcolor{Purple}{13.9}}} &\textcolor{NavyBlue}{\textbf{17.4}}& \multicolumn{1}{c|}{12.1} & \multicolumn{1}{c|}{12.0} & 11.1 (XGB)          \\ \hline
6                                                    & \multicolumn{1}{c|}{\textbf{14.6}} & \multicolumn{1}{c|}{\textbf{\textcolor{Purple}{16.5}}} &\textcolor{NavyBlue}{23.1}& \multicolumn{1}{c|}{14.8}          & \multicolumn{1}{c|}{\textcolor{Purple}{16.8}}          &\textcolor{NavyBlue}{\textbf{21.9}}& \multicolumn{1}{c|}{14.6} & \multicolumn{1}{c|}{14.5} & 13.9 (FTT)          \\ \hline
7                                                    & \multicolumn{1}{c|}{20.2}          & \multicolumn{1}{c|}{\textcolor{Purple}{21.8}}          &\textcolor{NavyBlue}{25.9}& \multicolumn{1}{c|}{\textbf{19.7}} & \multicolumn{1}{c|}{\textbf{\textcolor{Purple}{21.1}}} &\textcolor{NavyBlue}{\textbf{25.2}}& \multicolumn{1}{c|}{20.9} & \multicolumn{1}{c|}{20.1} & 19.5 (GBT)          \\ \hline
8                                                    & \multicolumn{1}{c|}{\textbf{7.2}}  & \multicolumn{1}{c|}{\textcolor{Purple}{10.0}}          &\textcolor{NavyBlue}{14.4}& \multicolumn{1}{c|}{7.7}           & \multicolumn{1}{c|}{\textbf{\textcolor{Purple}{8.6}}}  &\textcolor{NavyBlue}{\textbf{10.5}}& \multicolumn{1}{c|}{6.7}  & \multicolumn{1}{c|}{7.3}  & 6.2 (XGB)           \\ \hline
9                                                    & \multicolumn{1}{c|}{\textbf{26.1}} & \multicolumn{1}{c|}{\textbf{\textcolor{Purple}{28.7}}} &\textcolor{NavyBlue}{33.7}& \multicolumn{1}{c|}{28.5}          & \multicolumn{1}{c|}{\textbf{\textcolor{Purple}{28.7}}} &\textcolor{NavyBlue}{\textbf{29.9}}& \multicolumn{1}{c|}{31.4} & \multicolumn{1}{c|}{29.1} & 28.6 (XGB)          \\ \hline
10                                                   & \multicolumn{1}{c|}{39.9}          & \multicolumn{1}{c|}{\textcolor{Purple}{43.9}}          &\textcolor{NavyBlue}{\textbf{46.3}}& \multicolumn{1}{c|}{\textbf{35.8}} & \multicolumn{1}{c|}{\textbf{\textcolor{Purple}{43.0}}} &\textcolor{NavyBlue}{47.8}& \multicolumn{1}{c|}{41.8} & \multicolumn{1}{c|}{34.8} & 33.4 (XGB)          \\ \hline
11                                                   & \multicolumn{1}{c|}{\textbf{39.1}} & \multicolumn{1}{c|}{\textbf{\textcolor{Purple}{39.3}}} &\textcolor{NavyBlue}{40.7}& \multicolumn{1}{c|}{\textbf{39.1}} & \multicolumn{1}{c|}{\textcolor{Purple}{39.4}}          &\textcolor{NavyBlue}{\textbf{40.6}}& \multicolumn{1}{c|}{39.5} & \multicolumn{1}{c|}{39.5} & 39.4 (XGB)          \\ \hline
12                                                   & \multicolumn{1}{c|}{23.0}          & \multicolumn{1}{c|}{\textcolor{Purple}{26.1}}          &\textcolor{NavyBlue}{28.7}& \multicolumn{1}{c|}{\textbf{22.0}} & \multicolumn{1}{c|}{\textbf{\textcolor{Purple}{23.8}}} &\textcolor{NavyBlue}{\textbf{27.2}}& \multicolumn{1}{c|}{25.5} & \multicolumn{1}{c|}{22.7} & 22.0 (XGB)          \\ \hline
13                                                   & \multicolumn{1}{c|}{28.7}          & \multicolumn{1}{c|}{\textcolor{Purple}{30.1}}          &\textcolor{NavyBlue}{33.5}& \multicolumn{1}{c|}{\textbf{28.2}} & \multicolumn{1}{c|}{\textbf{\textcolor{Purple}{29.2}}} &\textcolor{NavyBlue}{\textbf{31.9}}& \multicolumn{1}{c|}{28.9} & \multicolumn{1}{c|}{28.0} & 28.0 (RF)           \\ \hline
14                                                   & \multicolumn{1}{c|}{\textbf{19.5}} & \multicolumn{1}{c|}{\textbf{\textcolor{Purple}{28.9}}} &\textcolor{NavyBlue}{\textbf{37.6}}& \multicolumn{1}{c|}{22.2}          & \multicolumn{1}{c|}{\textcolor{Purple}{29.2}}          &\textcolor{NavyBlue}{39.8}& \multicolumn{1}{c|}{23.4} & \multicolumn{1}{c|}{20.5} & 20.5 (RF)           \\ \hline
15                                                   & \multicolumn{1}{c|}{12.2}          & \multicolumn{1}{c|}{\textcolor{Purple}{14.8}}          &\textcolor{NavyBlue}{20.8}& \multicolumn{1}{c|}{\textbf{11.2}} & \multicolumn{1}{c|}{\textbf{\textcolor{Purple}{13.8}}} &\textcolor{NavyBlue}{\textbf{18.4}}& \multicolumn{1}{c|}{12.9} & \multicolumn{1}{c|}{10.6} & 9.7 (XGB)           \\ \hline
16                                                   & \multicolumn{1}{c|}{\textbf{27.1}} & \multicolumn{1}{c|}{\textbf{\textcolor{Purple}{28.0}}} &\textcolor{NavyBlue}{31.3}& \multicolumn{1}{c|}{27.8}          & \multicolumn{1}{c|}{\textcolor{Purple}{28.6}}          &\textcolor{NavyBlue}{\textbf{31.0}}& \multicolumn{1}{c|}{27.8} & \multicolumn{1}{c|}{28.1} & 27.4 (ResNet)       \\ \hline
\end{tabular}
    \vspace{1cm}
    \caption{Numerical comparisons for the tabular data experiments, to accompany Figure~\ref{fig:tabular-full}. We report the median test error (\%) over 100 i.i.d. training runs of MLPs and random forests. The 3 subcolumns in each group denote models trained on the full dataset and their \{10\%, 1\%\} downsampled counterparts. For all of these results, bootstrap $95\%$ confidence intervals have width $< 2\%$. We observe that wide and/or sparsely-initialized MLPs are competitive with tree-based methods.
    In the rightmost 3 columns, we provide test errors for the same tasks, reported by \citep{grinsztajn2022why} (GBT = gradient boosted tree, XGB = XGBoost, FTT = Feature Tokenizer + Transformer \citep{gorishniy2021revisiting}). Note that our cross-validation protocol differs slightly (to handle variance incurred by downsampling), which may explain performance discrepancies. We include these only to illustrate that our full-data accuracies are commensurate with those in prior works focused exclusively on benchmarking models for tabular data.
    }
    \label{tab:tabular-results}
\end{table}

\paragraph{Results.} We note the following findings from the results in Figure~\ref{fig:tabular-full} and Table~\ref{tab:tabular-results}:

\begin{itemize}[leftmargin=3em]
    \item[(2T)] \textbf{Wide networks resist overfitting on small tabular datasets.} Like in the synthetic experiments, width yields monotonic end-to-end benefits for sample-efficient learning on these datasets. This suggests that the ``parallel search + pruning'' mechanisms analyzed in our paper are empirically at play in these settings, and that these networks' capacity to overfit does not preclude nontrivial feature learning and generalization. These comparisons can be seen by comparing the colored curves (which represent depths $2$ and $3$) within each subplot in Figure~\ref{fig:tabular-full}. In some (but not all) cases, these MLPs perform competitively with hyperparameter-tuned random forest classifiers.
    \item[(3T)] \textbf{Sparse axis-aligned initialization sometimes improves end-to-end performance.} These comparisons can be seen by comparing vertically adjacent sparse vs. dense subplots in Figure~\ref{fig:tabular-full}. This effect is especially pronounced on datasets which are downsampled to be orders of magnitude smaller. We believe that this class of drop-in replacements for standard initialization merits further investigation, and may contribute to closing the remaining performance gap between deep learning and tree ensembles on small tabular datasets.
\end{itemize}

\subsection{Software, compute infrastructure, and resource costs}
\label{subsec:infra}

GPU-accelerated training and evaluation pipelines were implemented in PyTorch~\citep{paszke2019pytorch}. Each training run was performed on one GPU in an internal cluster, with NVIDIA P40, P100, V100, and RTX A6000 GPUs. A single $T = 10^5$ training run took 10 seconds on average (with early termination for the vast majority of grid sweeps); across all of the results in this paper, around $2 \times 10^5$ training runs were performed, in a total of $600$ GPU-hours. Note that the precise evaluation of test error (at batch size $10^4$) constitutes a significant portion of the computational cost; this necessitated mindful GPU parallelization.

\end{document}